\pgfplotsset{compat=1.16}
\newtheorem{theorem}{Theorem}
\newtheorem{lemma}[theorem]{Lemma}
\newtheorem{definition}{Definition}
\newtheorem{requirement}{Requirement}
\tikzset{point/.style = {fill=black,circle,inner sep=0.7pt}}
\tikzset{
  graph vertex/.style={
    circle,
    draw,
  },
    graph vertexx/.style={
    draw,
  },
  graph directed edge/.style={
    ->,
    >=stealth,
    thick,
  },
  graph tree edge/.style={
    graph directed edge
  },
  graph forward edge/.style={
    graph directed edge,
    every edge/.style={
      edge node={node [fill=white,font=\scriptsize] {f}},
      loosely dotted,
      draw,
    },
  },
  graph back edge/.style={
    graph directed edge,
    every edge/.style={
      edge node={node [fill=white,font=\scriptsize] {b}},
      densely dotted,
      draw,
    },
  },
  graph cross edge/.style={
    graph directed edge,
    every edge/.style={
      edge node={node [fill=white,font=\scriptsize] {c}},
      dotted,
      draw,
    },
  },
}
\title{Minimal Conditions for Beneficial Local Search}
\author{%
  Mark G. Wallace\\
  Dept. of Data Science and Artificial Intelligence\\
  Faculty of Computer Science\\
  Monash University\\
  Clayton, Vic 3168\\
  Australia\\
  \texttt{mark.wallace@monash.edu} \\
}
\begin{document}
\begin{abstract}
This paper investigates why it is beneficial, when solving a problem, to search in the neighbourhood of a current solution.
There is a huge range of algorithms that use this for local improvement, generally within a larger meta-heuristic framework, for applications in machine learning and optimisation in general.
The paper identifies properties of problems and neighbourhoods that support two novel proofs that neighbourhood search is beneficial over blind search. 
These are: firstly a proof that search within the neighbourhood is more likely to find an improving solution in a single search step than blind search; and secondly a proof that a local improvement, using a sequence of neighbourhood search steps, is likely to achieve a greater improvement than a sequence of blind search steps.
To explore the practical impact of these properties, a range of problem classes and neighbourhoods are generated, where these properties are satisfied to different degrees.  Experiments reveal that the benefits of neighbourhood search vary dramatically in consequence.
Examples of classical combinatorial optimisation problems are analysed, in order to demonstrate that the underlying theory is reflected in practice.
\end{abstract}

\begin{keyword}
Local search\sep Neighbours Similar Fitness\sep Combinatorial problems\sep Search performance
\end{keyword}

\maketitle

\section{Introduction}
\subsection{Neighbourhood search and local descent}
There is a wide variety of techniques for tackling large scale combinatorial optimisation problems.  
Incomplete search methods are typically used to achieve the required scalability.
Many of these methods involve a sub-algorithm where a current solution, or set of solutions, are modified in some way to produce new candidate solutions.  We call this neighbourhood search.
Naturally these methods build a smart and subtle superstructure upon neighbourhood search, to achieve a balance of diversification and intensification, and fast improvement in the fitness of the best solution found so far.
This paper examines the benefit of neighbourhood search and local descent (using a sequence of neighbourhood search steps), and therefore why these methods deploy them.

Blind search selects candidate solutions at random, evaluating the fitness of each one.  
In this paper we present minimal general conditions under which a neighbourhood search algorithm can perform better than blind search.
We admit the worst case, where
the neighbourhood search algorithm simply picks candidate solutions blindly from the neighbourhood of the current solution.
We say the neighbourhood search performs better than blind search if the probability the next candidate solution has better fitness than the current solution is higher than the probability a better solution is selected by blind search.

The "no free lunch" theorems \cite{nofreelunch} hold for an optimisation algorithm applied to an unknown fitness function from a given class of functions.  The value of the function is revealed for a given candidate solution only when that candidate is selected for evaluation.
This is termed {\em black box} optimisation.
While the proofs in this paper are presented for black box optimisation, the general conditions are shown also to apply to {\em white box} optimisation, where the fitness function is given, but is challenging to optimise.

The term \emph{local descent}  in this paper means searching in the neighbourhood until a point is found with equal or better fitness, then making this the current point and searching from there.
Thus local descent is an extension of neighbourhood search, also called hill climbing.

When neighbours have very similar fitness, even if neighbourhood search yields a better candidate, it may only be a little better.  On the other hand, blind search might occasionally find \emph{much} better candidates.  Thus, even if neighbourhood search had higher probability of improvement, the expected amount of improvement with blind search could be greater than with neighbourhood search. 
Indeed this is illustrated with an example in section \ref{sec:betterblind}.

However using local descent,
when a better candidate has been found it becomes the current candidate, the potential temporary advantage of blind search may eventually be overcome. 

This leads us to ask \emph{under what conditions is the expected rate of improvement achieved by local descent  better than blind search}?
Specifically, we investigate
the expected number of trials - or points it evaluates - 
for local descent to reach improved fitness levels.

\subsection{Neighbours Similar Fitness}
The first general condition under which neighbourhood search is beneficial is the following condition on the neighbourhoods.
Neighbour's Similar Fitness (NSF) is a property of neighbourhoods made precise in sections \ref{sec:nsf} and \ref{sec:nsf2}.
Essentially, a candidate solution chosen from the neighbourhood of the current solution is more likely to have a fitness similar to the current solution than a candidate solution chosen from the search space as a whole.

We claim that the neighbourhood operators used in solving many combinatorial problems yield neighbourhoods with the NSF property.
Indeed, the objective function in many combinatorial problems is a sum of terms - where the number of terms in the sum grows with the size of the problem instance.
Often neighbouring solutions only differ in a few of these terms.
For example in MaxSAT problems, the objective is the sum of constraint violations, 
and in the travelling salesman problem ({\it TSP}) the objective is the sum of the distances between each pair of successive locations on the tour.
For MaxSAT problems a neighbour is found by flipping the value of a variable; and for the travelling salesman problem, by a 2-opt or a 3-opt swap: in each case this ensures that the neighbour has a similar fitness because most of the terms in the fitness function are unchanged.
This paper shows that NSF is key to escaping from the conditions of the no free lunch theorems.

Later sections \ref{sec:2sat} and \ref{sec:tspexperiments} will demonstrate experimentally that the flip neighbourhood operator for 2-SAT, and the 2-swap neighbourhood operator for the TSP, both have the NSF property.

\subsection{Decreasing probability of fitness near the optimum}

The second, and final, general condition under which neighbourhood search is beneficial is a condition on the current solution, and in particular on the fitness of the current solution.

The {\em fitness probability} is the probability that an arbitrary point in the search space of a problem has a given fitness.
The condition on the fitness $f$ of the current solution, is that the fitness probabilities of fitnesses around $f$ are monotonically decreasing.  Specifically if the difference between $f$ and the optimum fitness is $\delta$, then the fitness probabilities are monotonically decreasing in the interval $f-\delta \ldots f+\delta$.

This property always holds of the optimum fitness, of course.
However we argue in section \ref{sec:dec_fitness} that in most combinatorial problems it also holds for a wide range of fitness levels.
The proofs that neighbourhood search is beneficial rely on two lemmas which allow some weakening of this conditions.
In sections \ref{sec:2sat} and \ref{sec:tspexperiments} we show the lemmas hold for fitness levels quite far from the optimum.

The results of our experiments also show that when the decrease in fitness probabilities is steep - which is the case for many combinatorial problems - neighbourhood search dramatically outperforms blind search.

\subsection{Goal of this paper}
Much of the previous work in local search investigates how the performance of different algorithms is influenced by the landscape structure.
However, the properties required for the results of this paper say nothing about local optima, or basins of attraction.
Nor does this paper address the challenge of selecting a local search algorithm with which to tackle a given problem.
In fact a key contribution of this paper is an abstraction of local descent, introduced in section \ref{sec:abstract-local-descent}, in which there are no local optima.

In short, the goal of this paper is to validate two key assumptions of local search.
The motivation is theoretical, establishing some fundamental reasons why local search works.

\section{Related Work}
Over 30 years ago, Johnson et.al. asked ``How easy is neighbourhood search?'' \cite{easy}.  They investigated the complexity of finding locally optimal solutions to NP-hard combinatorial optimisation problems.  They show that, even if finding an improving neighbour (or proving there isn't one) takes polynomial time, finding a  local optimum can take an exponential number of steps.

\cite{tovey1985} considers hill-climbing using flips of $n$ zero-one variables.  If the objective values are randomly generated the number of local optima tends to grow exponentially with $n$.  Thus the expected number of successful flips to reach a local optimum from an arbitrary point grows only linearly with $n$.  A more general analysis of neighbourhood search in \cite{tovey-3} explores a variety of algorithms to reach a local optimum, but does not compare neighbourhood search against random generate-and-test. 
In Tovey's scenario, with only zero-one variables where the neighbourhood is defined by single flips, there are problems that require an exponential number of flips to reach a local optimum - even choosing the best neighbour each time~\cite{cohen2020steepest}.

\cite{grover1992} analysed the average difference between a candidate solution and its neighbours, for five well-known combinatorial optimisation problems.  Since this difference is positive for candidates with less than average fitness it implies that any local optima must have a better than average fitness.  The result also shows how many moves it requires from an arbitrary initial value to reach a better than average fitness.    However this result is for a small set of problems and tells us little about local descent from a candidate solution of better than average fitness.

The idea of counting the number of solutions at each fitness level is directly related to the concept of the \emph{density of states} which applies to continuous fitness measures encountered in solid state physics \cite{fitness_distribution}.
This work additionally shows how to estimate the density of states for a problem using Boltzmann strategies.

Neighbours' similar fitness (NSF) is related to the idea of fitness correlation, spelled out for example in the paper ``Correlated and Uncorrelated Fitness Landscapes and How to Tell the Difference'' \cite{ar1}.
In this paper random tours (each pair of points on the tour being neighbours) are used to predict the fitness of a point as a linear combination of the fitness of preceding points on the tour. 
Another related concept is \emph{fitness distance correlation} \cite{fdc}, which is a measure in a landscape of the correlation between the fitness of a point and its distance from the nearest global optimum.
FDC is a measure designed to aid a particular class of algorithms, dependent on a global property of the landscape. 

These notions of correlation are tied to the landscape structure, in contrast to NSF which only applies to the immediate neighbours of points with a given fitness.
NSF holds of many well-known neighbourhood operators, and we suggest that it is even a criterion used in designing such operators.

\section{Properties needed for local descent to be beneficial}
In this paper, for uniformity, we assume that optimisation is cost minimisation. 
For historical reasons we have until now used ``fitness'' as the property of a solution to be optimised, however in case of minimisation, where ``fitness'' is misleading, we shall use ``cost'' as a synonym for ``fitness''.
We assume a finite range of fitness values, and without loss of generality, we set the optimum fitness to be $0$.

This section addresses black box optimisation, where the problem is known to belong to a given class.  This subsumes white box optimisation, which corresponds to black box optimisation in a problem class of size one.

\subsection{Definitions}
The probability a point, in a given problem class, has a given fitness is its fitness probability.
The expression {\em neighbour of fitness $k$} means neighbour of a point with fitness $k$.
The neighbour's fitness probability is the probability that a neighbour of fitness $k_1$ has a given fitness $k_2$.
Like fitness probabilities, neighbours' fitness probabilities are a property of problem classes rather than problems.

\subsubsection{Neighbourhood search symbols}
\label{local_search_intro}
We introduce the following definitions:
\begin{itemize}
\item A problem class $PB$ comprises problems $pb \in PB$, where each problem $pb$ has the same search space $S$ but a different fitness function $f_{pb}(s):s \in S$ which takes values in the same finite fitness range.
\item  The fitness range of a problem class is $0 \ldots k_{max}$, where $0$ is the optimal fitness (or cost), and $k_{max}$ the worst.
\item $p(k)$ is the probability of a fitness $k \in 0 \ldots k_{max}$, in a problem class.
Specifically, the fitness probability $p(k)$ is the probability that an arbitrary point $s \in S$, in an arbitrary problem $pb \in PB$, has $f_{pb}(s)=k$ 
\item The probability that a neighbour of a fitness $k_1$ has fitness $k_2$ in a problem class is $pn(k_1,k_2)$.\\
Specifically, the neighbourhood probability $pn(k_1,k_2)$ is the probability that, given an arbitrary problem $pb \in PB$, and given an arbitrary point $s$ with $f_{pb}(s) =k_1$, 
an arbitrary neighbour $s'$ of $s$ has $f_{pb}(s')=k_2$.
\item $p(k \pm \delta) = p(k+\delta)+p(k-\delta)$.  
Similarly for $pn(k, k \pm \delta)$.
\end{itemize}

\subsection{Problem classes and the no free lunch theorem}
\label{sec:sharpnofree}
In \cite{joyce-herrmann} a fitness function is specified by ordering the search space, so $S = \{1, \ldots, n\}$, and writing out the list of fitness values $< f(1), \ldots, f(n)>$.
If $\phi$ is a permutation of $\{1, \ldots, n\}$, then $f_{\phi}(s) = f(\phi(s))$ is a permutation of the function $f$.

For any problem class, with a given solution space $S$ and finite fitness range $K$, which is closed under permutation, the sharpened no free lunch theorem of  \cite{joyce-herrmann}, Theorem 11, applies. 
The key insight is that because all permutations of each possible function are in the set, even if it is known how many of the untested points in the search space take each fitness value, every permutation of the future points and their values remains equally likely.
This makes it impossible to predict which of these future points take which values.
Thus including the resulting fitness probabilities $p(k): k \in K$ does not invalidate, or in any way change, the proof of the sharpened no free lunch theorem.

Indeed given an instance of a combinatorial problem, its search space and its fitness function $f$, 
We could also create a problem class $Perm_f$ comprising problems with the fitness function $f$ and all its permutations.
$Perm_f$ has the same fitness probabilities $p(k): k \in K$ as our original combinatorial problem.
$Perm_f$ satisfies the conditions of the sharpened no free lunch theorem and consequently no local search algorithm can outperform blind search on this problem class.

\cite{joyce-herrmann} show that the behaviour of any optimisation algorithm applied to the problem can be represented as a \emph{trace}: $<s_1, f(s_1)>, <s_2,f(s_2)>, \ldots, <s_t,f(s_t)>$, where $t$ is the number of steps taken by the algorithm.
The performance of the algorithm is some function of the sequence of fitness values, 
$f(s_1),f(s_2), \ldots, f(s_t)$, for example their minimum $min(\{f(s_1),f(s_2), \ldots, f(s_t)\})$.

In this paper the behaviour of search algorithms will also be representable as a trace, but of fitness values $k_1, k_2 \ldots k_t$ only.
The performance of such an algorithm can be defined in the same way as by \cite{joyce-herrmann}, for example the minimum $min(\{k_1, k_2, \ldots, k_t\})$.
The behaviour of blind search on a problem class can also be represented as a sequence of fitness values.

This paper will show that the NSF property introduced in this paper supports a proof that under certain circumstances - which can hold in $Perm_f$ for example - local search outperforms blind search.
Thus NSF is sufficient to escape the conditions of the no free lunch theorems.

\subsection{Starting fitness}
\label{sec:dec_fitness}
We are interested in problem classes whose fitness probabilities decrease towards the optimum. 
Specifically such problem classes have a moderate fitness level, $k_{mod}$, better than which this thinning out occurs.
\begin{definition}
The fitness level $k_{mod}$ is the worst fitness better than which $p(k)$ is monotonically decreasing\footnote{In this paper we use \emph{monotonically decreasing} to be synonymous with \emph{monotonically nonincreasing}, and similarly \emph{monotonically increasing} means \emph{monotonically nondecreasing}} with increasing fitness. 

In this paper the optimum fitness is the minimum cost, so 
$\forall k_1 \leq k_2 \leq k_{mod} : p(k_1) \leq p(k_2) $
\end{definition}
For many problems 
$k_{mod}$ lies about halfway between $0$ and $k_{max}$.
However, for a problem class whose fitness probabilities are uniform ($\forall i,j :  p(i)=p(j)$), the modal cost is the maximum fitness $k_{mod}=k_{max}$.

Neighbourhood search is unlikely to improve any faster than blind search starting from a very poor fitness.  
We therefore consider neighbourhood search from a current solution which is already of reasonably good fitness. 
Nevertheless it should be a level of fitness that could be reached quickly with a simple initialisation procedure, such as blind search (see section \ref{sec:blindthenlocal} below).
Our proof of the benefit of neighbourhood search, requires that in the current neighbourhood, the fitness probability should be decreasing with improving fitness.

Specifically if $k_c$ is the current cost, we require that $p(k)$ should be monotonically increasing with $k$ in the range $0 \ldots 2*k_{c}$.
For such a cost $k_c$, $ p(k_c+\delta): \delta \leq k_c$ monotonically increases with increasing $\delta$, while $p(k-\delta)$ decreases.

\begin{definition}[GE]
We define \emph{good enough cost} $k_{ge}$ as:
\begin{equation}
k_{ge} = k_{mod} / 2  \label{eq:ge} 
\end{equation}
\end{definition}

Some problem classes with a reasonably high value of $k_{mod}$ include PLS-complete problems, and problems in the classes listed in \cite{michiels2007}, Appendix C, where the local optima thin out sharply towards the global optimum.
Their cost functions are  expressed as the sum of a set of terms, where
each term has a small set of variables, whose number is independent of the size of the problem instance.  
Only the number of terms in the cost function increases with the number of variables in the problem instance.
For such a function the optimum is only reached when all the terms take their minimum value: there is just one such point.
There are ${n \choose x}$ ways that $x$ out of $n$ terms take their minimum value, and this number increases by a factor of $\frac{n-x}{x}$ when $x$ decreases by one.
Thus, as the cost increases away from the optimum, the number of combinations of values that reach that sum increases dramatically, thus increasing its probability.
Notice that in this case $k_{mod}=n/2$: when $x$ is less than this $\frac{n-x}{x}$ is greater than one, but beyond this $\frac{n-x}{x}$ is less than one.

For VLSI problems, for example, \cite{white1984} showed that the solution fitnesses have a normal distribution over the interval between their minimal and maximal fitness, having few solutions with fitness near the extremes.

Combinatorial problems with \emph{hard} constraints, that cannot be violated in any solution, also typically have a high value of $k_{mod}$.
In section \ref{sec:tspexperiments} we confirm this experimentally on random sets of travelling salesman problems.

\subsection{Neighbours Similar Fitness at level k (NSF(k))}
\label{sec:nsf}
We have already given examples of neighbourhoods designed for local descent, such a 2-opt, 
where neighbours have similar fitness.
Indeed if there was no correlation between the fitness of neighbours,
the fitness of a randomly selected neighbour would have the same probability as the fitness selected by blind search.
In this case neighbourhood search would be no better than blind search.

If the current fitness $k$ is near the optimum, similar fitnesses may have a very low fitness probability.  At slightly worse fitness levels, on the other hand, the fitness probability may be much greater.  These  probabilities are also likely to be reflected in the neighbourhood of fitness $k$.

Consequently, rather than the probability a neighbour has similar fitness, $NSF(k)$ describes the \emph{increased} probability a neighbour of a point with fitness $k$ has similar fitness, over the probability an arbitrary point in the search space has similar fitness.

We introduce the function $r(k,\delta)$
which is a weighting associated with fitness distance $\delta$ for points in the neighbourhood of any point with fitness $k$.
Accordingly
$pn(k,k \pm \delta) = p(k \pm \delta) \times r(k,\delta)$.
We call \emph{r} the {\em NSF weight}.

\begin{definition}[NSF weight]
The NSF weight $r(k,\delta)$ for fitness $k$ and fitness difference $\delta > 0$ gives probability a neighbour of a point with fitness $k$ has fitness $k \pm \delta$:
$$ pn(k,k \pm \delta) = p(k \pm \delta) \times r(k,\delta) $$
\end{definition}

A second key property of NSF is that the neighbours should not all have worse fitness than the current solution. Indeed the proportion of worse solutions should be no greater than the proportion in the search space as a whole.
We call such a neighbourhood \emph{normal}.
For a given cost (fitness) $k$, let us call $r(k,\delta) \times p(k - \delta)$ the NSF weighted probability of an improving cost $k - \delta$.
\begin{definition}[normal]
The neighbourhood of fitness $k$ is normal if neighbourhood probability of any improving cost $pn(k,k-\delta)$ is no less than its NSF weighted probability:\\
\begin{equation}
\forall k, \delta: pn(k,k-\delta) \geq r(k,\delta) \times p(k-\delta) 
\label{eq:nonskew}
\end{equation}
\end{definition}

The Neighbourhood Similar Fitness $NSF(k)$ property is defined as follows.
\begin{definition}
$NSF(k)$ holds if the the neighbourhood of $k$ is normal, and the NSF weight $r(k,\delta)$ increases as $\delta$ decreases: 
\begin{align*}
\forall \delta : pn(k,k-\delta) \ge r(k,\delta) \times p(k-\delta)\\
\forall \delta_1 \leq \delta_2 : r(k,\delta_1) \geq r(k,\delta_2) 
\end{align*}
 \label{eq:nsf}
\end{definition}

\subsection{NSF(k) in practice}
For solving many combinatorial optimisation problems, 
a neighbourhood operator is often chosen that changes the value of only a few terms from the objective function.
Assuming the objective is the sum of many such terms, neighbours are likely to have similar fitness.

For example, the maximum satisfiability problem is to find an assignment to truth variables
that minimises the number of unsatisfied clauses. 
A clause is just a disjunction of some truth variables (or none) and some negated truth variables (or none).
One way of finding a neighbour for this problem is by changing the truth value
of one variable (“flipping” a variable). 
If the clause length is restricted to just three (variables and negated variables),
and if there are, say, 100 truth variables in the problem, then
only $1-\frac{99^3}{100^3} = 0.03$ of the clauses are likely to contain any given variable. 
Thus after flipping a variable $97$\% of the clauses will remain unchanged.
Consequently the fitness of a neighbour found by flipping a variable is likely to be similar
to the original fitness.
The same argument applies to most problems whose fitness function is a sum of terms in which the number of
terms increases with the number of variables in the problem.
Any neighbourhood operator that changes the value of a single variable, or a small set of variables, will only change the value of a small fraction of the terms in the
sum.
Consequently neighbours are likely to have similar fitness. 

Interestingly, this characteristic is one that applies to many \textit{PLS-complete} problems~\cite{easy}. 
For further information on this class, we refer the interested reader to Michiels et.al. \cite{michiels2007} and Sch\"{a}ffer et.al. \cite{Schaffer1991}, who list many PLS-complete optimisation problems that all have objectives that are a sum of terms which grow with problem size; among these are \textsc{MaxCut} with flip neighbourhood, \textsc{GraphPartitioning} with swap neighbourhood, \textsc{MetricTSP} with Lin-Kernighan neighbourhood, and \textsc{CongestionGame} with switch neighbourhood.  
Other investigations have covered the maximisation of submodular functions~\cite{Feige2007submod}, and the difference between submodular functions~\cite{Iyer2012differenceSubmodular}.

Although, in a continuous search space, the objective function typically has a gradient at most points, in a discrete search space there is no such thing as a gradient.  
There are ways of extending the notion of continuity to fitness functions that are discrete.
An example is the Lipschitz condition 
$|f(u) - f(w)| \leq L \times ||u - w||$ for some constant $L$.
Interpreting $||u-w|| = 1$ for any pair of neighbours, $u,w$, the Lipschitz condition simply imposes a bound on the difference $|f(u)-f(w)|$.
However neighbourhoods used in combinatorial optimisation, such as the 2-opt operator, do not have a fixed upper bound on the fitness difference between neighbours.
Moreover even in case there is a bound on the fitness difference between neighbours, NSF relates fitness differences $v=f(u)-f(w)$ below this bound to their probability for different values of $v$.
Thus the Lipschitz condition does not capture the property of neighbourhood operators that we require for neighbours similar fitness.

Secondly, in combinatorial optimisation problems there is no way to elicit an improving direction, or intelligently to guess whether a neighbour of a given point is improving - except by evaluating it.
This is why in our model of local improvement and local descent, a neighbour is chosen from the neighbourhood at random.

A neighbourhood is \emph{boosting} if $\forall \delta : pn(k,k-\delta) > r(k,\delta) \times p(k-\delta)$.  
In this case the neighbours of a point are skewed towards improving fitness, so local search is likely to work well.
In practice, for good levels of cost neighbourhoods are often boosting, while for high levels of cost the reverse is true.
If $k_{bad}$ is a high cost, then $\forall \delta : pn(k_{bad},k_{bad}+\delta) > r(k_{bad},\delta) \times p(k_{bad}+\delta)$.
The key theorems in the paper are proven for normal neighbourhoods, but the benefits of local search,
are even greater for boosting neighbourhoods.

In section \ref{sec:tspexperiments} we confirm that $NSF(k)$ holds for some travelling salesman problems (TSP).  
The TSP is a typical combinatorial optimisation problem.

\subsection{An example problem class: 2-SAT}
\label{sec:2sat}
In this section we take a very simple example of a problem class, and show how we can infer its specification and properties.

We calculate $p$, $pn$ and $r$ for the set of 2-SAT problems with $50$ variables and $100$ 2-variable clauses, in which each variable appears in exactly $4$ distinct clauses.
Each variable can take the value $true$ or $false$, so there are $2^{50}$ candidate solutions.
The fitness of a solution is the number of violated constraints, so the range of fitness values is $0 \ldots 100$.
This completes the specification of our example problem class.

Based on the above specification we calculate $p$, $k_{mod}$, $pn$ and $r$.
The specification of these calculations is given in \ref{app:2SAT}.

Each clause is true with probability $3/4$ and false with probability $1/4$.
The probability that $C$ clauses are false is 
$$p(C) = (1/4)^C \times (3/4)^{(100-C)} \times {C \choose 100}$$
The most likely cost is $p(25) = 0.092$, and this is the modal cost $k_{mod}$.
The probabilities are shown in figure \ref{fig:sat2p}

\begin{figure}[htb]
\centering
\begin{tikzpicture}
\begin{axis}[
width=5.5cm,
title= Probability at each cost level,
scaled ticks = false,
xlabel={Cost: $i$},
ylabel={Probability: $p(i)$},
y tick label style={
                /pgf/number format/fixed,
                /pgf/number format/fixed zerofill,
                /pgf/number format/precision=2},
]
\addplot [blue] [solid]  table {sat2p.dat};
\end{axis}
\end{tikzpicture}
\caption{SAT-2 probability at each fitness level}
\label{fig:sat2p}
\end{figure}
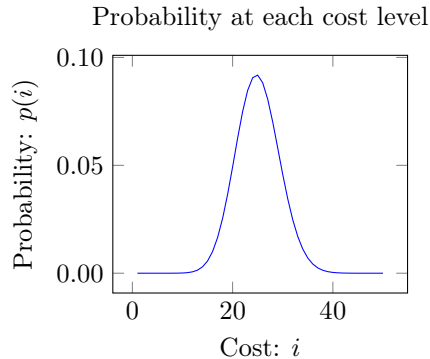

The neighbours of a solution result from flipping the value of a single variable.  Since a variable only appears in $4$ clauses,
$pn(C,\delta)=0$ for all $C$ and any $\delta>4$.
Flipping a variable in a clause that is false always makes it true - thus increasing the cost by $1$.
Flipping a variable in a true clause makes it false with a probability of $1/2$.
If the current fitness is $C$, the probability that the variable to be flipped is in a false clause is $C/100$, and a for true clause it is $(100-C)/100$.  Thus:
$$pn(C,C-4) = (C/100)^4$$
and
$$pn(C,C+4) = (((100-C)/100) \times 1/2)^ 4$$
The expressions for $pn(C1,C2)$ 
are all given in \ref{app:2SAT}.

For this problem class and neighbourhood operator, neighbourhoods are boosting for cost values lower than the modal cost $25$.
The values for $r$, $p$, $r \times p$ and $pn$ are given for cost $20$ in table \ref{tab:2satunskew}.
\begin{table}[!ht]
    \centering
    \begin{tabular}{|c||c|c||c|c|}
    \hline
          &  &  & $r(20,\delta) \times$ &  \\
        $\delta$ & $r(20,\delta)$ & $p(20- \delta)$ &  $p(20- \delta)$ & $pn(20,20- \delta)$ \\
    \hline     
         1 & 3.87 & 0.037 & 0.141 & 0.256 \\
         2 & 2.55 & 0.025 & 0.065 & 0.205 \\
         3 & 1.14 & 0.017 & 0.019 & 0.102 \\
         4 & 0.27 & 0.010 & 0.003 & 0.026 \\
    \hline
    \end{tabular}
    \caption{Cost $20$ probabilities $r, p, r \times p, pn$}
    \label{tab:2satunskew}
\end{table}

The table also shows that $r$ decreases with increasing $\delta$.
Indeed this NSF property holds for all fitness levels.

 \section{Probability of Improvement}
 
\subsection{Beneficial neighbourhood search}
The properties introduced in the previous section together entail that the ratio of better to worse NSF weighted probabilities is higher than the ratio of better to worse fitness probabilities.
Let us write $\bar{r}(k)$ for the average value of $r(k,\delta): \delta \in 1..k$.
We call it the average NSF weight:
$$\bar{r}(k) = \sum_{\delta=1}^k r(k,\delta) / k$$
Let us write $p^<(k)$ for the probability blind search selects a point better than $k$.  
We call it the blind probability of improving:
$$p^<(k) = \sum_{\delta=1}^k p(k-\delta)$$
Let us write $pn^<(k)$ for the probability neighbourhood search, starting from fitness $k$, selects a point better than $k$. 
We call it the neighbourhood probability of improving:
$$pn^<(k) = \sum_{\delta=1}^k pn(k,k-\delta)$$
Thirdly let us write $pbr^<(k)$ for the NSF weighted probability of selecting a neighbour with cost lower than $k$.
We call it the NSF weighted probability of improving:
$$pbr^<(k) = \sum_{\delta=1}^k p(k-\delta) \times r(k,\delta)$$
For the probability of blindly selecting a worse point we write:\\
$$p^>(k) = \sum_{\delta=1}^{k_{max}-k} p(k+\delta)$$
and similarly for $pbr^>(k)$.

\begin{lemma}
\label{lemma:rbarimp}
Conditions sufficient that NSF weighting improves the probability of improving over the probability of worsening.\\
Assuming
\begin{flalign*}
\ \ \ \ pbr^<(k) \geq & \ \bar{r}(k) \times p^<(k)& \\
\ \ \ \ pbr^>(k) \leq & \ \bar{r}(k) \times p^>(k)& 
\end{flalign*}
it follows that
\begin{flalign*}
\ \ \ \ \frac{pbr^<(k)}{pbr^>(k)} \geq & \ \frac{p^<(k)}{p^>(k)}&
\end{flalign*}
\end{lemma}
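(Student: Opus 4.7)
The plan is to derive the claimed inequality by direct division of the two hypotheses, after checking that all quantities involved are nonnegative so that inequality directions are preserved.

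First I would note that the quantities $pbr^<(k)$, $pbr^>(k)$, $p^<(k)$, $p^>(k)$, and $\bar{r}(k)$ are all nonnegative by construction (they are sums of products of probabilities and NSF weights, which are themselves nonnegative). To make the ratios meaningful I would assume $pbr^>(k) > 0$ and $p^>(k) > 0$; if either is zero the statement is either trivial or vacuous, and I would briefly handle those corner cases first. I would also assume $\bar{r}(k) > 0$, since otherwise the hypotheses degenerate.

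Next, from the first hypothesis $pbr^<(k) \geq \bar{r}(k)\,p^<(k)$ and the second $pbr^>(k) \leq \bar{r}(k)\,p^>(k)$, the latter yields $\frac{1}{pbr^>(k)} \geq \frac{1}{\bar{r}(k)\,p^>(k)}$ because both sides are positive. Multiplying the first hypothesis on both sides by the positive quantity $\frac{1}{pbr^>(k)}$ and then combining with the reciprocal bound gives
\begin{equation*}
\frac{pbr^<(k)}{pbr^>(k)} \;\geq\; \frac{\bar{r}(k)\,p^<(k)}{pbr^>(k)} \;\geq\; \frac{\bar{r}(k)\,p^<(k)}{\bar{r}(k)\,p^>(k)} \;=\; \frac{p^<(k)}{p^>(k)},
\end{equation*}
which is the desired conclusion.

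The proof is essentially algebraic, so there is no deep obstacle. The only subtlety worth flagging is the sign/positivity bookkeeping: one must be explicit that all the probabilities and the average weight $\bar{r}(k)$ are nonnegative, and that the denominators are strictly positive, in order to combine the two hypothesised inequalities multiplicatively without reversing directions. Beyond this, no properties specific to NSF, to normality, or to the decreasing-fitness-probability condition are needed; the lemma is a purely formal consequence of the two displayed inequalities.
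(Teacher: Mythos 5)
Your proof is correct and follows essentially the same route as the paper, which likewise bounds the numerator below by $\bar{r}(k)\,p^<(k)$ and the denominator above by $\bar{r}(k)\,p^>(k)$ and then cancels $\bar{r}(k)$; your explicit attention to the positivity of the denominators is a minor point of extra care that the paper leaves implicit.
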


\begin{lemma}
\label{lemma:improvement}
NSF neighbourhoods and a good enough starting fitness are sufficient to entail the conditions of lemma \ref{lemma:rbarimp}.\\
Assuming
\begin{flalign*}
\ \ \ \ k \leq & \ k_{ge} &(GE)\\
\ \ \ \ \forall \delta_1 < & \ \delta_2 : r(k,\delta1) \geq r(k,\delta2) &\ \ (r \  {\rm decreasing})) 
\end{flalign*}
it follows that
\begin{flalign*}
\ \ \ \ pbr^<(k) \geq & \ \bar{r}(k) \times p^<(k) &\\
\ \ \ \ pbr^>(k) \leq & \ \bar{r}(k) \times p^>(k) &
\end{flalign*}
\end{lemma}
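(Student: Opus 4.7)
The plan is to reduce both inequalities to Chebyshev's sum inequality, using the starting-fitness hypothesis $k \leq k_{ge} = k_{mod}/2$ only to establish the necessary monotonicity of $p(k \pm \delta)$ in $\delta$.

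First I would unpack the (GE) hypothesis. Since $k \leq k_{mod}/2$ we have $2k \leq k_{mod}$, so by the definition of $k_{mod}$ the fitness probability $p$ is monotonically nondecreasing on $\{0,1,\ldots,2k\}$. Consequently, as $\delta$ ranges over $\{1,\ldots,k\}$, the sequence $p(k-\delta)$ is nonincreasing in $\delta$ (because $k-\delta$ traverses $\{0,\ldots,k-1\} \subseteq \{0,\ldots,k_{mod}\}$), while the sequence $p(k+\delta)$ is nondecreasing in $\delta$ (because $k+\delta$ traverses $\{k+1,\ldots,2k\} \subseteq \{0,\ldots,k_{mod}\}$).

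The first claim $pbr^<(k) \geq \bar{r}(k)\, p^<(k)$ then follows directly from Chebyshev's sum inequality: the sequences $r(k,\delta)$ (nonincreasing in $\delta$ by hypothesis) and $p(k-\delta)$ (nonincreasing in $\delta$ by the previous step) are similarly ordered over $\delta \in \{1,\ldots,k\}$, hence
\[
\sum_{\delta=1}^{k} r(k,\delta)\, p(k-\delta) \;\geq\; \tfrac{1}{k}\Bigl(\sum_{\delta=1}^k r(k,\delta)\Bigr)\Bigl(\sum_{\delta=1}^k p(k-\delta)\Bigr) \;=\; \bar{r}(k)\, p^<(k).
\]

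The main obstacle is the second claim $pbr^>(k) \leq \bar{r}(k)\, p^>(k)$, because the sum defining $pbr^>(k)$ runs over $\delta$ up to $k_{max}-k$, whereas $\bar{r}(k)$ averages $r(k,\cdot)$ only over $\delta \in \{1,\ldots,k\}$, and the monotonicity of $p(k+\delta)$ supplied by (GE) is likewise available only for $\delta \leq k$. I would split the sum at $\delta = k$ and treat the two pieces separately. For the head $\delta \in \{1,\ldots,k\}$, the sequences $r(k,\delta)$ (nonincreasing) and $p(k+\delta)$ (nondecreasing) are oppositely ordered, so the reversed Chebyshev inequality yields
\[
\sum_{\delta=1}^{k} r(k,\delta)\, p(k+\delta) \;\leq\; \bar{r}(k) \sum_{\delta=1}^{k} p(k+\delta).
\]
For the tail $\delta \in \{k+1,\ldots,k_{max}-k\}$, the decreasing hypothesis on $r$ gives $r(k,\delta) \leq r(k,k)$, and since $\bar{r}(k)$ is the arithmetic mean of a nonincreasing sequence whose smallest term is $r(k,k)$, we have $r(k,k) \leq \bar{r}(k)$; thus $r(k,\delta)\, p(k+\delta) \leq \bar{r}(k)\, p(k+\delta)$ term by term, requiring no monotonicity of $p$ beyond $k_{mod}$. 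Adding the two bounds gives $pbr^>(k) \leq \bar{r}(k) \sum_{\delta=1}^{k_{max}-k} p(k+\delta) = \bar{r}(k)\, p^>(k)$, completing the proof.
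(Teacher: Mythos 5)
Your proposal is correct and follows essentially the same route as the paper: the paper's Lemmas \ref{lemma:decdec} and \ref{lemma:incdec} are exactly the two Chebyshev-type bounds you invoke (proved there inline via the identity $\sum(p_\delta-p_x)(r_\delta-\bar{r})\gtrless 0$ rather than by citing Chebyshev's sum inequality by name), and your tail bound $r(k,\delta)\leq r(k,k)\leq\bar{r}(k)$ for $\delta>k$ is precisely the paper's Lemma \ref{lemma:bargtm}. The split of $pbr^>(k)$ at $\delta=k$ and the use of (GE) solely to obtain monotonicity of $p$ on $\{0,\ldots,2k\}$ also match the paper exactly.
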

The proof
is in \ref{sec:improve-proof}

This result only refers to fitnesses strictly better and worse than the current fitness $k$, but nothing about the fitness $k$ itself.
Indeed the conditions admit neighbourhoods for which all neighbours of $k$ have the same fitness, $k$.
Neighbourhood search on such a neighbourhood will never yield any improvement in fitness.

We can give a simple condition that ensures the NSF weighted probability of improvement is greater than blind search.  
The condition limits the probability that neighbours have the same fitness.
\begin{lemma}
\label{lemma:pkk}

Assuming
\begin{flalign*}
\ \ \ \ pbr^<(k) \geq & \ \bar{r}(k) \times p^<(k) & \\
\ \ \ \ pbr^>(k) \leq & \ \bar{r}(k) \times p^>(k) &\\
\ \ \ \ pn(k,k) \leq & \ p(k) &
\end{flalign*}
it follows that
\begin{flalign}
\label{eq:improvement}
\ \ \ \ pbr^<(k) \geq & \ p^<(k) &
\end{flalign}
\end{lemma}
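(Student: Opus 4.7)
The plan is to prove $pbr^<(k) \geq p^<(k)$ by combining a single probability-sum identity with a case split on whether $\bar r(k) \geq 1$ or $\bar r(k) < 1$. The intuition is that hypothesis (1) gives the conclusion directly when $\bar r(k) \geq 1$, while hypotheses (2) and (3) together rescue the argument when $\bar r(k) < 1$.

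First I would exploit the defining identity $pn(k,k\pm\delta) = p(k\pm\delta)\,r(k,\delta)$. Summing over $\delta \geq 1$ and separating the $-\delta$ and $+\delta$ contributions gives
\[
pn^<(k) + pn^>(k) \;=\; \sum_{\delta\geq 1} [p(k-\delta) + p(k+\delta)]\,r(k,\delta) \;=\; pbr^<(k) + pbr^>(k).
\]
Because $pn(k,\cdot)$ is a probability distribution over neighbours, $pn(k,k) + pn^<(k) + pn^>(k) = 1$, and likewise $p(k) + p^<(k) + p^>(k) = 1$. Substituting these identities and invoking hypothesis (3), $pn(k,k) \leq p(k)$, I obtain the key inequality
\[
pbr^<(k) + pbr^>(k) \;\geq\; p^<(k) + p^>(k).
\]

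With this in hand I would split into two cases. If $\bar r(k) \geq 1$, hypothesis (1) gives $pbr^<(k) \geq \bar r(k)\,p^<(k) \geq p^<(k)$ immediately, using $p^<(k)\geq 0$. If instead $\bar r(k) < 1$, hypothesis (2) gives $pbr^>(k) \leq \bar r(k)\,p^>(k) \leq p^>(k)$, and combining with the key inequality yields
\[
pbr^<(k) \;\geq\; p^<(k) + p^>(k) - pbr^>(k) \;\geq\; p^<(k).
\]

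I expect the main obstacle to be noticing the need for the case split: hypotheses (1) and (2) alone are informative in opposite regimes of $\bar r(k)$, and neither by itself handles the whole range. The bridge is the identity $pbr^<(k) + pbr^>(k) = 1 - pn(k,k)$, derived purely from the definition of $r(k,\delta)$ together with the fact that $pn(k,\cdot)$ sums to one; it is this identity that converts hypothesis (3) into a constraint on the NSF-weighted quantities and lets hypothesis (2) carry the argument when $\bar r(k) < 1$.
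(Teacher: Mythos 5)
Your proof is correct, and it reaches the conclusion by a slightly different route than the paper. Both arguments hinge on the same normalisation identity $pn(k,k)+pbr^<(k)+pbr^>(k)=1=p(k)+p^<(k)+p^>(k)$ and both use the hypothesis $pn(k,k)\leq p(k)$ to extract $pbr^<(k)+pbr^>(k)\geq p^<(k)+p^>(k)$; up to that point you have reproduced the paper's proof exactly. Where you diverge is the final deduction: the paper rewrites the inequality as $pbr^<(k)\,(1+\tfrac{pbr^>(k)}{pbr^<(k)})\geq p^<(k)\,(1+\tfrac{p^>(k)}{p^<(k)})$ and closes with the ratio inequality $\tfrac{pbr^<(k)}{pbr^>(k)}\geq\tfrac{p^<(k)}{p^>(k)}$ (the conclusion of Lemma \ref{lemma:rbarimp}, itself derived from the first two hypotheses), whereas you bypass the ratio form entirely and split on whether $\bar{r}(k)\geq 1$, using hypothesis (1) directly in the first case and hypothesis (2) plus the key inequality in the second. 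Your version buys two things: it never divides by $pbr^<(k)$, $pbr^>(k)$, $p^<(k)$ or $p^>(k)$, so the degenerate cases where any of these vanish need no separate treatment, and it makes no use of Lemma \ref{lemma:rbarimp} as an intermediary. The cost is the explicit case analysis, and the observation that your case $\bar{r}(k)\geq 1$ is essentially the paper's separate Lemma \ref{lemma:improver}, so your argument in effect unifies Lemmas \ref{lemma:pkk} and \ref{lemma:improver} under one proof.
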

\noindent The proof is in \ref{sec:improve-proof}.

Even if there are more neighbours with the same fitness, as long as $\bar{r}(k) \geq 1$, the NSF weighted probability of improving is higher than the blind probability of improving:
\begin{lemma}
\label{lemma:improver}
Assuming
\begin{flalign*}
\ \ \ \ pbr^<(k) \geq & \ \bar{r}(k) \times p^<(k) &\\
\ \ \ \ \bar{r}(k)\geq & \ 1 & 
\end{flalign*}
it follows that
\begin{flalign}
    \label{eq:imrovementr}
\ \ \ \ pbr^<(k) \geq & \ p^<(k) &
\end{flalign}
\end{lemma}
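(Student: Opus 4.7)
The plan is to observe that this lemma is essentially an immediate chaining of two inequalities, with non-negativity of probabilities as the only extra ingredient. No structural property of the neighbourhood (such as the monotonicity of $r$ or the GE condition) is needed beyond what is already packaged into the two assumptions.

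First, I would note that $p^<(k) = \sum_{\delta=1}^k p(k-\delta)$ is a sum of fitness probabilities and therefore non-negative. Second, since the assumption gives $\bar{r}(k) \geq 1$, multiplying both sides by the non-negative quantity $p^<(k)$ preserves the inequality, yielding
\begin{equation*}
\bar{r}(k) \times p^<(k) \geq p^<(k).
\end{equation*}
Finally, I would chain this with the first hypothesis $pbr^<(k) \geq \bar{r}(k) \times p^<(k)$ by transitivity to conclude $pbr^<(k) \geq p^<(k)$, which is the claim.

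There is no substantive obstacle here; the main thing to be careful about is simply that $p^<(k)$ is guaranteed non-negative (so that multiplication by $\bar{r}(k)\geq 1$ goes in the right direction), and that the hypothesis on $\bar{r}(k)$ is unconditional rather than requiring any assumption on $k$ relative to $k_{ge}$. In contrast to Lemma \ref{lemma:pkk}, the role of the condition $\bar{r}(k)\geq 1$ here is to directly absorb the cases that Lemma \ref{lemma:pkk} rules out via the hypothesis $pn(k,k)\leq p(k)$: even when neighbours concentrate at the current fitness, as long as the average NSF weight on strict improvements stays at least one, the NSF-weighted improvement probability cannot fall below the blind one. I would make this remark explicit in a sentence after the proof to contrast the two lemmas.
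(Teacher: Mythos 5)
Your proof is correct and matches the paper's (essentially one-line) argument: the paper simply chains the hypothesis $pbr^<(k) \geq \bar{r}(k)\times p^<(k)$ with $\bar{r}(k)\geq 1$ and the non-negativity of $p^<(k)$, citing Lemma \ref{lemma:decdec} only as the source of the first inequality. Your explicit attention to the non-negativity of $p^<(k)$ and the contrast with Lemma \ref{lemma:pkk} is a reasonable elaboration but does not change the substance.
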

\noindent (by lemma \ref{lemma:decdec}, in \ref{sec:improve-proof}).\\
If $pn$ is normal or boosting, $pn^<(k) \geq pbr^<(k)$.

The key theorem of this section follows immediately:
\begin{theorem}
\label{thm:improvement}
Neighbourhood search has a greater probability of improving than blind search.\\
Assuming
\begin{flalign*}
&k \leq  k_{ge}& &(GE)\\
&\forall \delta_1 <\delta_2 : r(k,\delta1) \geq r(k,\delta2)& &(NSF(k)) \\
&\forall \delta : pn(k,k-\delta) \geq r(k,\delta) \times p(k-\delta)& &\\
\rm{either \ } &p(k) \geq pn(k,k) & &\rm{(lemma \ } \ref{lemma:pkk})\\
\rm{or \ }  &\bar{r}(k) \geq 1& &\rm{(lemma \ } \ref{lemma:improver})
\end{flalign*}
it follows that
\begin{flalign*}
\ \ \ \ pn^<(k) \geq \ & p^<(k)&
\end{flalign*}
\end{theorem}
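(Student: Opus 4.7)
The plan is to chain together the three preceding lemmas. The theorem has been carefully set up so that its hypotheses line up with the hypotheses of Lemma \ref{lemma:improvement}, whose conclusions then feed into either Lemma \ref{lemma:pkk} or Lemma \ref{lemma:improver}, and normality lets us replace the NSF weighted improvement probability $pbr^<(k)$ with the genuine neighbourhood improvement probability $pn^<(k)$.

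First, I would observe that the third assumption, $pn(k, k-\delta) \geq r(k,\delta) \times p(k-\delta)$, applied pointwise for every $\delta \in 1,\ldots,k$, can be summed to yield
\begin{equation*}
pn^<(k) \;=\; \sum_{\delta=1}^{k} pn(k,k-\delta) \;\geq\; \sum_{\delta=1}^{k} r(k,\delta)\,p(k-\delta) \;=\; pbr^<(k).
\end{equation*}
So it suffices to prove $pbr^<(k) \geq p^<(k)$.

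Next, I would verify that the GE hypothesis $k \leq k_{ge}$ and the NSF monotonicity hypothesis $r(k,\delta_1) \geq r(k,\delta_2)$ for $\delta_1 < \delta_2$ are precisely the premises of Lemma \ref{lemma:improvement}. Invoking it yields both
\begin{equation*}
pbr^<(k) \geq \bar{r}(k)\cdot p^<(k) \quad \text{and} \quad pbr^>(k) \leq \bar{r}(k)\cdot p^>(k).
\end{equation*}
These are exactly the first two premises shared by Lemma \ref{lemma:pkk} and Lemma \ref{lemma:improver}. The theorem hypothesises a disjunction: either $p(k) \geq pn(k,k)$, which supplies the third premise of Lemma \ref{lemma:pkk}, or $\bar{r}(k) \geq 1$, which supplies the third premise of Lemma \ref{lemma:improver}. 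In either case, the corresponding lemma fires and delivers $pbr^<(k) \geq p^<(k)$.

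Combining with the normality-derived inequality $pn^<(k) \geq pbr^<(k)$ from the first step, transitivity gives the desired $pn^<(k) \geq p^<(k)$. The argument is essentially bookkeeping; the genuine work has already been done in Lemmas \ref{lemma:rbarimp}, \ref{lemma:improvement}, \ref{lemma:pkk}, and \ref{lemma:improver}. The only mildly delicate point to articulate is that the two cases of the disjunction really do cover the two alternative routes through the lemma network, so the case analysis must be stated explicitly to make clear that no hypothesis of the applied lemma is ever left unverified.
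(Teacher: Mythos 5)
Your proposal is correct and follows essentially the same route as the paper: the paper states that the theorem ``follows immediately'' from Lemmas \ref{lemma:improvement}, \ref{lemma:pkk} and \ref{lemma:improver} together with the observation that normality gives $pn^<(k) \geq pbr^<(k)$, and your write-up is exactly that chain with the case split on the disjunction made explicit. No gaps.
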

Finally we note that whenever $r(k,1)>1$, and there are any points with worse than modal fitness, then neighbourhood search is strictly superior to blind search:
\begin{flalign*}
\ \ \ \ pn^<(k) > \ & p^<(k)&
\end{flalign*}
This is proven as lemma \ref{lemma:strict} in \ref{sec:improve-proof}

In \ref{sec:counterex} it is shown that the absence of each condition, enables a counter-example to theorem \ref{thm:improvement} to be constructed.

\paragraph{2-SAT example again}
The focus of this work is on combinatorial problems where a neighbourhood can only be searched blindly, selecting and evaluating points in an arbitrary order.
The simple 2-SAT example is not a combinatorial problem in this sense (indeed it can be optimised by a polynomial algorithm).
Nevertheless we will complete the calculation of features of interest for this problem class.

In the 2-SAT example above
$$r(10,1),r(10,2),r(10,3),r(10,4) = 1162,461,116,14.$$
The average value of $r$ is
$(1162+461+116+14) / 10 = 175$\\
This satisfies the improvement condition very strongly!

Indeed the probability of cost lower than $10$ is
$$\sum_{\delta=1}^{10} p(10-\delta) = 0.000043$$ 
while the NSF weighted probability of improvement is
$$\sum_{\delta=1}^{10} r(10,\delta) \times p(10-\delta) = 0.04$$
which is several orders of magnitude greater.
In fact at this cost the neighbourhood is strongly boosting, and replacing $r(10,\delta) \times p(10-\delta)$ with $pn(10,10-\delta)$ we get an even higher probability of improving:
$$\sum_{\delta=1}^{10} pn(10,10-\delta) = 0.76$$

We also note here that $pn(10,10) = 0.16$ while $p(10) = 0.000094$.  
Clearly $pn(k,k)<p(k)$ alone is not a necessary improvement condition.

In our 2-SAT example, the largest cost $k$ for which
$\bar{r}(k) >  1$ 
is at $k=17$.
However, as the neighbourhood operator is strongly boosting and the cost probability sharply declining, the largest cost for which
$$\sum_{delta=1}^k pn(k,k-\delta) >  \sum_{\delta=1}^k p(k-\delta)$$
is $k=25$. 
In this example (blind) neighbourhood search outperforms blind (unrestricted) search only for costs lower than $25$.

\subsection{Benchmarks for the probability of improvement}
In this section we explore the probability of improvement over a set of benchmark problem classes.  The probability of improvement for neighbourhood search is compared with random search, revealing how the improvement grows with the rate at which the cost probability $p(k)$ falls as $k$ decreases and the rate at which the NSF weight $r(k,\delta)$ falls as $\delta$ increases

\label{sec:bench}
We created three problem classes, where the cost probability falls increasingly fast:
\begin{itemize}
    \item uniform - in which the cost probability at each cost is the same
    \item linear - in which the cost probability decreases linearly towards the optimum
    \item exponential - in which the cost probability decreases exponentially near the optimum
\end{itemize}
In these problem classes $k_{opt}=0$, $k_{max}=200$.  The moderate cost $k_{mod}=100$.
Their cost probabilities are shown in figure \ref{fig:probcts}.
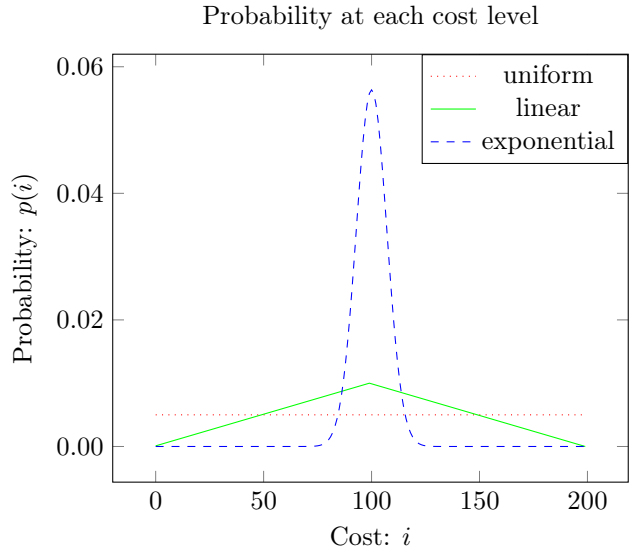
\begin{figure}[htb]
\centering
\pgfplotsset{
    every axis legend/.append style={
    at={(1,1)},
    anchor=north east,
                                    },
            }
\begin{tikzpicture}
\begin{axis}[
title= Probability at each cost level,
scaled ticks = false,
xlabel={Cost: $i$},
ylabel={Probability: $p(i)$},
y tick label style={
                /pgf/number format/fixed,
                /pgf/number format/fixed zerofill,
                /pgf/number format/precision=2},
legend entries={uniform,linear,exponential},
]
\addplot [red] [dotted] table {const.dat};
\addplot [green] [solid]  table {linear.dat};
\addplot [blue] [dashed]  table {expo.dat};
\end{axis}
\end{tikzpicture}
\caption{Benchmark probability at each cost level}
\label{fig:probcts}
\end{figure}

For the linear problem class $p(k-1) = p(k)-C$, for some positive constant $C$, so the cost probability falls faster.
In the exponential problem class the difference between $p(k)$ and $p(k-1)$ grows rapidly as $k$ nears the modal cost $100$.

In the following benchmarks, the probabilities at each cost level are given in table \ref{tab:bench_cts} (choosing $C=1$):
\begin{table}[!ht]
    \centering
    \begin{tabular}{|c|c|c|}
    \hline
         uniform & linear & exponential \\
         $1/200$ & $(100 - abs(100-i))/10100$ & ${200 \choose i+1}/2^{200}$\\
         \hline
    \end{tabular}
    \caption{Calculation of $p(i)$}
    \label{tab:bench_cts}
\end{table}

We also explore different types of NSF weighting.  Again we choose one example where $NSF(k)$ is satisfied minimally ($r(k,\delta)=1$ everywhere), and others where it is satisfied increasingly strongly, shown in figure \ref{fig:nsfweights}.
In these NSF weightings, the fixed bound $b$ is the maximum fitness difference between any neighbours. 
For any fitness difference $\delta > b$, the NSF weight $r(k,\delta) = 0$.
For smaller fitness difference the NSF weight is a fixed value (such that the sum of probabilities of neighbours of all costs is 1).

Figure \ref{fig:nsfweights} shows the NSF weights around a cost of 50, which for cost-difference $i$ is $r(50, \rm{abs}(i-50))$.
The figure shows the values for $b=10, 50,$ and $200$.  
Notice that if $b \geq 200$, then $pn(k,k2)=p(k2)$ everywhere.

\begin{figure}[!ht]
\pgfplotsset{
    every axis legend/.append style={
    at={(1,1)},
    anchor=north east,
                                    },
            }
\begin{tikzpicture}
\begin{axis}[
title= NSF weight at each cost level,
xlabel={Cost: {i}},
ylabel={NSF weight: $r(50,\rm{abs}(i-50))$},
legend entries={b=10,b=50,b=200},
]
\addplot [black] [dotted]  table {rkiconst10.dat};
\addplot [green] [solid]  table {rkiconst50.dat};
\addplot [blue] [dashed]  table {rkiconst200.dat};
\end{axis}
\end{tikzpicture}
\caption{Benchmark NSF weight at each distance}
\label{fig:nsfweights}
\end{figure}
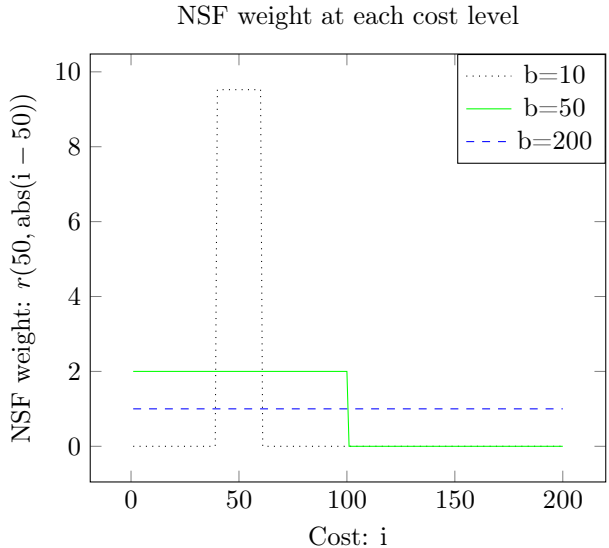

With these different problem classes and different NSF weightings, we then calculated the probability of improvement under neighbourhood search, compared with blind search.
We calculated $\sum_{k2<50} p(k2)$  and $\sum_{k2<50} pn(50,k2)$ for each benchmark,
shown in table \ref{tab:impbench}.
The three rows are for the uniform, linear and exponential benchmarks.
The five columns are for different bounds $b$ such that 
$j>b \rightarrow r(50,j)=0$.
The table shows firstly that for $b=200$ neighbourhood search is no better (or worse) than random search.
Secondly, comparing the three rows, the table shows how a steeper fall in cost probability yields an increased probability of improvement with neighbourhood search.
\begin{table}[!ht]
\centering
    \begin{tabular}{| c | c | c c c c c |}
    \toprule
 & $\sum_{k2<50}p(k2)$ & \multicolumn{5}{c |}{$\sum_{k2<50}pn(50,k2)$}\\     
& & $b=1$ & $b=5$ &  $b=10$ & $b=50$ & $b=200$ \\ 
    \midrule
uniform   & 0.25 &  0.33 & 0.45 & 0.48 & 0.49 & 0.25\\
linear & 0.16 &  0.33 & 0.44 & 0.45 & 0.33 & 0.16 \\
exponential  & 0.06 &  0.34 & 0.43 & 0.37 & 0.12 & 0.06 \\
    \bottomrule
  \end{tabular}
  \vspace{1ex}
  \caption{
  }
  \label{tab:impbench}
\end{table}

The impact of the NSF strength is illustrated in figure \ref{fig:impk}.
The figure maps the probability of improvement against the starting cost $k$.  
It shows how $pn^<(k)$ changes as $k$ is reduced from $50$ to $1$ on the uniform, linear and exponential problem classes.
For the exponential problem class $pn^<(k)$ is so much greater than $p^<(k)$ that the probability axis in the graph is shown on a logarithmic scale.

Clearly if $k$ is optimal (in this case 0), the probability of improvement is also $0$. 
As the starting cost grows worse, the $b=200$ line shows how for blind search the probability of improvement grows.  For the uniform problem class, this growth is linear, and for the linear problem class it follows a quadratic curve.

For a tighter NSF, with $b=50$ the probability of improvement grows more quickly, and for $b=10$ it grows very quickly.  Indeed, for the uniform problem class, the probability of improvement with $b=10$ reaches 50\% at a starting cost of $10$, and stays there.

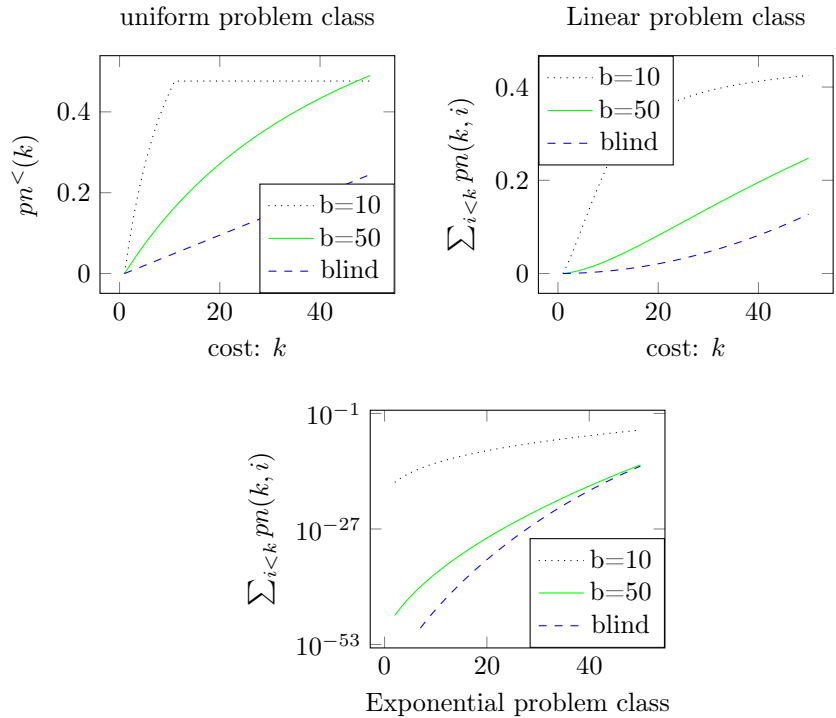
\begin{figure}[ht]
\pgfplotsset{
    every axis legend/.append style={
    at={(1,0)},
    anchor=south east,
                                    },
            }
\begin{tikzpicture}
\begin{axis}[
width=5.5cm,
title= uniform problem class,
xlabel={cost: $k$},
ylabel={$pn^<(k)$},
legend entries={b=10,b=50,blind},
]
\addplot [black] [dotted] table {impconst10.dat};
\addplot [green] [solid]  table {impconst50.dat};
\addplot [blue] [dashed]  table {impconst200.dat};
\end{axis}
\end{tikzpicture}
\hskip 10pt
\pgfplotsset{
    every axis legend/.append style={
    at={(0,1)},
    anchor=north west,
                                    },
            }
\begin{tikzpicture}
\begin{axis}[
width=5.5cm,
title= Linear problem class,
xlabel={cost: $k$},
ylabel={$\sum_{i<k} pn(k,i)$},
legend entries={b=10,b=50,blind},
]
\addplot [black] [dotted] table {implinear10.dat};
\addplot [green] [solid]  table {implinear50.dat};
\addplot [blue] [dashed]  table {implinear200.dat};
\end{axis}
\end{tikzpicture}

\begin{center}
\pgfplotsset{
    every axis legend/.append style={
    at={(1,0)},
    anchor=south east,
                                    },
            }
\begin{tikzpicture}
\begin{axis}[
width=5.5cm,
ymode=log,log basis y=10,
xlabel={Exponential problem class},
ylabel={$\sum_{i<k} pn(k,i)$},
legend entries={b=10,b=50,blind},
]
\addplot [black] [dotted] table {imp100expo10.dat};
\addplot [green] [solid]  table {impexpo50.dat};
\addplot [blue] [dashed]  table {impexpo200.dat};
\end{axis}
\end{tikzpicture}
\end{center}

\caption{Probability of Improvement}
\label{fig:impk}
\end{figure}

\subsection{Testing NSF(k) on combinatorial problems }
\label{sec:tspexperiments}

This section generates random TSP problem classes and measures the solution count at different cost levels, and the cost of their neighbours.  Based on these numbers we predict their NSF weights and thus check the NSF property at these levels.  

The neighbourhood operator used in our experiments is the 2-opt operator \cite{twoswap} which changes only two
distances in the sum for symmetric TSPs – no matter how
many cities there are in the TSP. This is, in fact, the smallest
change possible (in terms of the number of distances
changed) while maintaining the constraint that the tour must
be a Hamiltonian cycle. By ensuring that $n-2$ distances remain
the same (where $n$ is the number of cities), the 2-opt
generates neighbours which are in general likely to have similar cost
to the current tour - at least more similar than a randomly generated tour.

For a small TSP with 10 cities, the inter-city edge lengths are randomly generated in the range 1 to $20$.
All the solutions are found, and all their neighbours, at every fitness level.  

For a larger TSP, with 100 cities, the inter-city edge lengths are randomly generated in the range 1 to $100$. 
Two million solutions are sampled, and based on these solutions the good enough fitness level, $k_{ge}$ is estimated.  The fitness of two million points are then checked, and for those with fitness $k_{ge}$ all their neighbours are found.  Thus $pn(k_{ge},\delta)$ is estimated for all values of $\delta$ in the range $1 \ldots k_{ge}$.

These values are estimated both for black box optimisation, where the 2-opt neighbourhood operator is applied to an unknown TSP of the given size (10 and 100), and secondly for white box optimisation, where the values are estimated for a single TSP.

Starting with black box optimisation on  the class of 10 city TSP, 
We generated 20 random 10-location TSPs with edge length between 1 and 20.
Figure \ref{fig:tspct10} shows the fitness probabilities $p(k)$ for all values $k$ for which $p(k) > 0$.
In this set the optimum fitness was $66$, the modal fitness was $105$ and the good enough fitness, halfway between the modal and optimum fitness, was $85$.


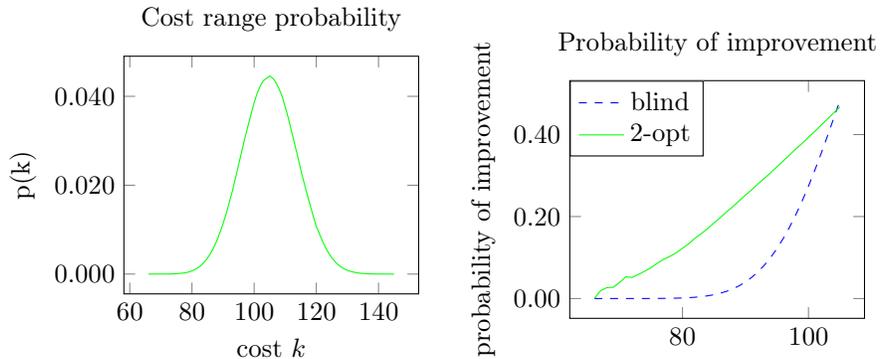
\begin{figure}[!ht]
\begin{tikzpicture}
\begin{axis}[
width=5.5cm,
title= Cost range probability,
scaled ticks = false,
xlabel={cost $k$},
ylabel={p(k)},
y tick label style={
                /pgf/number format/fixed,
                /pgf/number format/fixed zerofill,
                /pgf/number format/precision=3},
]

\addplot [green] table {tsp_multi_p1020201.dat};
\end{axis}
\end{tikzpicture}
\hskip 10pt
\pgfplotsset{
    every axis legend/.append style={
    at={(0,1)},
    anchor=north west,
                                    },
            }
\begin{tikzpicture}
\begin{axis}[
width=5.5cm,
title= Probability of improvement,
ylabel={probability of improvement},
y tick label style={
                /pgf/number format/fixed,
                /pgf/number format/fixed zerofill,
                /pgf/number format/precision=2},
legend entries={blind,2-opt},
]
\addplot [blue] [dashed] table [x=xx,y=yy] {tsp_multi_ppn1020201.dat};
\addplot [green] [solid]  table [x=xx,y=zz] {tsp_multi_ppn1020201.dat};
\end{axis}
\end{tikzpicture}

\caption{TSP 10 class, cost and improvement probabilities}
\label{fig:tspct10}
\end{figure}
Even for this small problem size the probabilities at each cost level decrease towards the optimum.

The next set of results compares the probability of improving from a cost of $k$, as it varies from $66$, which is the optimum, to $105$, which is 
the modal cost level.
We recorded the average probability of improvement for each cost level below $105$.
The dotted blue line is the probability using blind search (based on the proportion of points in the search space that have cost lower than $k$), and the green line using 2-opt neighbourhood search (based on the proportion of points in the neighbourhoods of points with a cost of $k$ that have cost lower than $k$).

The right hand graph in figure \ref{fig:tspct10} suggests that for the TSP with 2-swap neighbourhoods, in contrast to blind search, the proportion of improving neighbourhood moves decreases almost linearly as the cost of the current point approaches the optimum.
An intuition as to why this decrease is linear comes from a simple problem of minimising the sum of $n$ zero-one variables. Assume a neighbour results from flipping a single variable. 
When the current point has cost $k$, improvement comes from flipping any of $k$ out of $n$ variables, so the probability of improvement is $k/n$.  
Clearly as $k$ decreases, $k/n$ decreases linearly.

Turning to white-box optimisation, we generated a single 10 city TSP, with inter-city edge lengths are randomly generated in the range 1 to $20$, and generated the values for $p(k)$ and the probability of improvement under neighbourhood search and blind search as before.  
The optimum fitness was $69$ the modal fitness $96$ and the good enough fitness $82$.
The results are shown in figure \ref{fig:whitetspct10}.
\begin{figure}[!ht]
\begin{tikzpicture}
\begin{axis}[
width=5.5cm,
title= Cost range probability,
scaled ticks = false,
xlabel={cost $k$},
ylabel={p(k)},
y tick label style={
                /pgf/number format/fixed,
                /pgf/number format/fixed zerofill,
                /pgf/number format/precision=3},
]

\addplot [green] table {tsp_multi_p102011.dat};
\end{axis}
\end{tikzpicture}
\hskip 10pt
\pgfplotsset{
    every axis legend/.append style={
    at={(0,1)},
    anchor=north west,
                                    },
            }
\begin{tikzpicture}
\begin{axis}[
width=5.5cm,
title= Probability of improvement,
xlabel={starting cost},
ylabel={probability of improvement},
y tick label style={
                /pgf/number format/fixed,
                /pgf/number format/fixed zerofill,
                /pgf/number format/precision=2},
legend entries={blind,2-opt},
]
\addplot [blue] [dashed] table [x=xx,y=yy] {tsp_multi_ppn102011.dat};
\addplot [green] [solid]  table [x=xx,y=zz] {tsp_multi_ppn102011.dat};
\end{axis}
\end{tikzpicture}
\caption{TSP 10 instance, cost and improvement probabilities}
\label{fig:whitetspct10}
\end{figure}
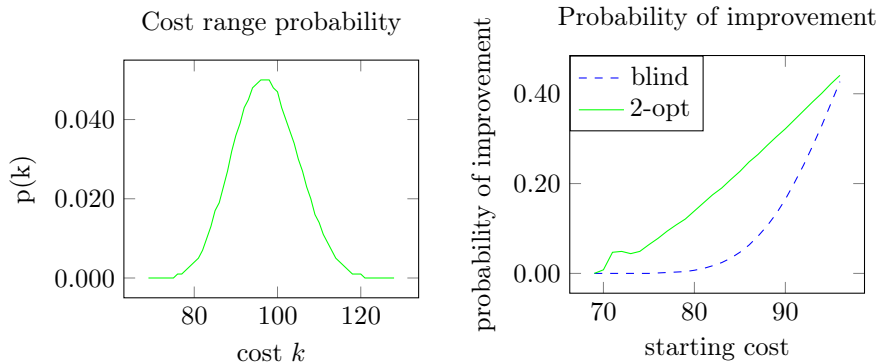

We next investigate the $NSF(k)$ properties
\begin{itemize}
    \item the NSF weight decreases with increasing fitness difference:\\
    $\forall \delta_1 < \delta_2 : r(k,\delta1) \geq r(k,\delta2)$
    \item the neighbourhood is normal:\\
    $\forall \delta : pn(k,k-\delta) \geq r(k,\delta) \times p(k-\delta)$
\end{itemize}
We test the case where $k=k_{ge}$, the furthest fitness from the optimum where we still expect $NSF(k)$ to hold.
For the black-box optimisation example the good enough fitness value $k_{ge}$ is $85$, so on the left hand side we show the neighborhood is normal by graphing $pn(85,85-\delta)$ and $r(85,\delta) \times p(85-\delta)$, which shows that $pn(85,85-\delta)$ takes a higher value than $r(85,\delta) \times p(85-\delta)$.  On the right hand side of figure \ref{fig:class10nsf} we show how the NSF weight $r(85,\delta)$ decreases for increasing values of $\delta$.

In short the 2-opt operator on the TSP 10 class indeed satisfies the $NSF(k)$ property for the value $k_{ge}= 85$.  We also tested $NSF(k)$ for all values from $67 \ldots 85$.
To test all neighbourhoods were normal, for each of the $90$ pairs, $k \in 67 \ldots 85, \delta \in 1 \ldots k-66$, we confirmed that $pn(k,k-\delta) \geq p(k-\delta) \times r(k,\delta)$.
Also for each $k \in 67 \ldots 85$ and $\delta \in 1 \ldots k-67$ we checked  whether $r(k,\delta) \geq r(k,\delta+1)$.
This NSF property help for all values of $k$ and $\delta$ except $k=70, \delta=3$.
We explore cases where the $NSF$ property is nearly, but not completely satisfied, in section \ref{sec:weakercond}.

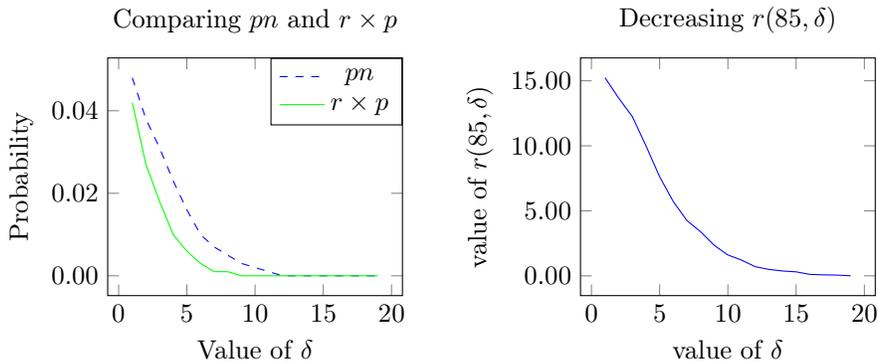
\begin{figure}[hbt]
\pgfplotsset{
    every axis legend/.append style={
    at={(1,1)},
    anchor=north east,
                                    },
            }
\begin{tikzpicture}
\begin{axis}[
width=5.5cm,
title= {Comparing $pn$ and $r \times p$},
scaled ticks = false,
xlabel={Value of $\delta$},
ylabel={Probability},
y tick label style={
                /pgf/number format/fixed,
                /pgf/number format/fixed zerofill,
                /pgf/number format/precision=2},
legend entries={$pn$,$r \times p$},
]
\addplot [blue] [dashed] table {tsp_multi_pn_kge102020.dat};
\addplot [green] [solid] table {tsp_multi_p_r_kge102020.dat};
\end{axis}
\end{tikzpicture}
\hspace{10pt}
\begin{tikzpicture}
\begin{axis}[
width=5.5cm,
title= {Decreasing $r(85,\delta)$},
scaled ticks = false,
xlabel={value of $\delta$},
ylabel={value of $r(85,\delta)$},
y tick label style={
                /pgf/number format/fixed,
                /pgf/number format/fixed zerofill,
                /pgf/number format/precision=2},
]
\addplot [blue] table {tsp_multi_r102020.dat};
\end{axis}
\end{tikzpicture}
\caption{Illustration of $NSF(85)$ for the TSP 10 class}
\label{fig:class10nsf}
\end{figure}



We next investigate white-box optimisation: generating a single (random) TSP 10 problem instance.  
We claim that typically for fitness values better than $k_{ge}$, the property $NSF(k)$ holds.
However, while the fitness probabilities do decrease towards the optimum, and while the NSF weight does decrease with increasing fitness difference, in particular for smaller problem instances, the decrease is not always strictly monotonic.
For example in the TSP 10 instance, where $k_{ge}=82$, the decrease in $r(82,\delta)$ with increasing $\delta$, as shown in figure \ref{fig:instance10nsf} is not strictly monotonic.

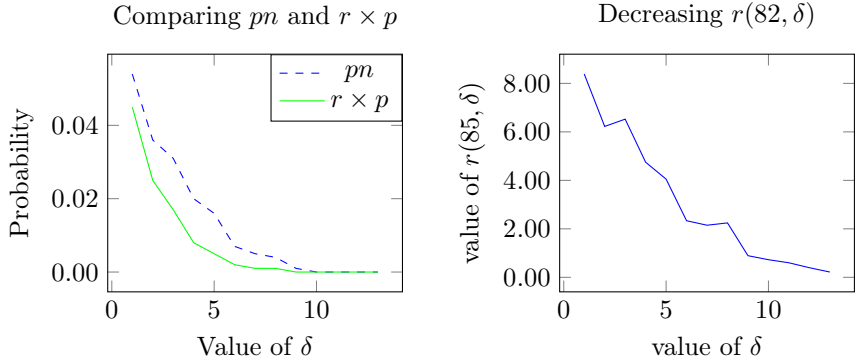
\begin{figure}[hbt]
\pgfplotsset{
    every axis legend/.append style={
    at={(1,1)},
    anchor=north east,
                                    },
            }
\begin{tikzpicture}
\begin{axis}[
width=5.5cm,
title= {Comparing $pn$ and $r \times p$},
scaled ticks = false,
xlabel={Value of $\delta$},
ylabel={Probability},
y tick label style={
                /pgf/number format/fixed,
                /pgf/number format/fixed zerofill,
                /pgf/number format/precision=2},
legend entries={$pn$,$r \times p$},
]
\addplot [blue] [dashed] table {tsp_multi_pn_kge10201.dat};
\addplot [green] [solid] table {tsp_multi_p_r_kge10201.dat};
\end{axis}
\end{tikzpicture}
\hspace{10pt}
\begin{tikzpicture}
\begin{axis}[
width=5.5cm,
title= {Decreasing $r(82,\delta)$},
scaled ticks = false,
xlabel={value of $\delta$},
ylabel={value of $r(85,\delta)$},
y tick label style={
                /pgf/number format/fixed,
                /pgf/number format/fixed zerofill,
                /pgf/number format/precision=2},
]
\addplot [blue] table {tsp_multi_r10201.dat};
\end{axis}
\end{tikzpicture}
\caption{Illustration of $NSF(82)$ for the TSP 10 instance}
\label{fig:instance10nsf}
\end{figure}
\subsection{Weaker conditions for beneficial neighbourhood search}
\label{sec:weakercond}
Nevertheless we now show that, in fact, the $NSF(k)$ property nearly holds, sufficient to support the result that neighbourhood search is beneficial.

To prove the result of theorem \ref{thm:improvement} (that neighbourhood search is beneficial), it is not necessary the the $NSF$ properties strictly hold.
Assuming normal neighbourhoods (as shown on the right hand side of figure \ref{fig:instance10nsf}), lemma 
\ref{lemma:improver} suffices to yield the required result.

To show that even for the TSP 10 instance, neighbourhood search robustly improves on blind search, 
we check the conditions of lemma 
\ref{lemma:improver}:
\begin{align*}
pbr^<(k) \geq \bar{r}(k) \times p^<(k) \\
\bar{r}(k)\geq 1
\end{align*}
which, with normal neighbourhoods, suffice to entail the desired result\\
$$pn^<(k) \geq p^<(k)$$
that neighbourhood search outperforms blind search.

First we calculate $\bar{r}(k) : k \in 1..k_{ge}$ for the TSP 10 class and the TSP 10 instance, shown in tables \ref{tab:tsp10cavr} and \ref{tab:tsp10iavr}
\begin{table}[htb]
\scriptsize 
\centering
    \begin{tabular}{| c | cccccccccccccc |}
    \toprule
$k$  & 67&68&69&70&71 & $\ldots$ & 78 & 79 & 80 & 81 & 82 & 83 & 84 & 85\\
$\bar{r}(k)$ &23441 &6771 &3600 &2066 &1217 & $\ldots$ & 41 & 28 & 19 & 14 & 10 & 7 & 5 & 4\\
    \bottomrule
  \end{tabular}
  \vspace{1ex}
  \caption{ Average NSF weight for the TSP 10 class}
  \label{tab:tsp10cavr}
\end{table}
\begin{table}[htb]
\scriptsize  
\centering
    \begin{tabular}{| c | c c c c c c c c c c c c c |}
    \toprule
$k$  & 70 & 71 & 72 & 73 & 74 & 75 & 76 & 77 & 78 & 79 & 80 & 81 & 82\\
$\bar{r}(k)$ & 1054 & 583 & 292 & 130 & 78 & 45 & 28 & 18 & 12 & 8 & 5.6 & 4.1 & 3.0\\
    \bottomrule
  \end{tabular}
  \vspace{1ex}
  \caption{ Average NSF weight for the TSP 10 instance}
  \label{tab:tsp10iavr}
\end{table}
For both the TSP 10 class and the instance the average NSF weight, $\bar{r}$, satisfies $\bar{r}(k) > 1$ for all values of $k$ between the optimum and the good enough fitness.



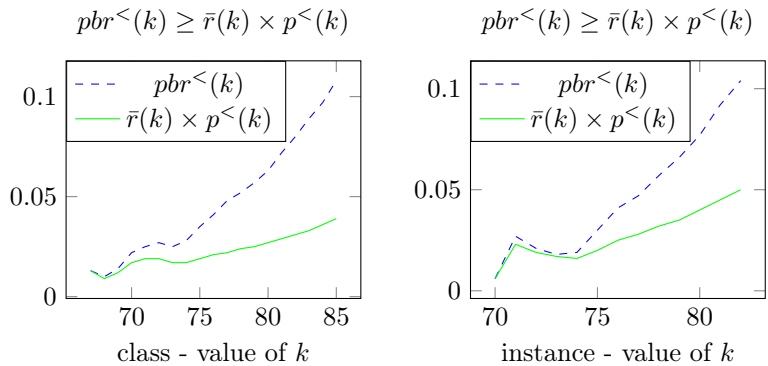
\begin{figure}[hbt]
\begin{tikzpicture}
\pgfplotsset{
    every axis legend/.append style={
    at={(0,1)},
    anchor=north west,
                                    },
            }
\begin{axis}[
width=5.5cm,
title= {$pbr^<(k) \geq \bar{r}(k) \times p^<(k)$},
scaled ticks = false,
xlabel={class - value of $k$},
yticklabel style={
        /pgf/number format/fixed,
        /pgf/number format/precision=2
},
legend entries={$pbr^<(k)$,$\bar{r}(k) \times p^<(k)$},
]
\addplot [blue] [dashed] table {tsp_multi_pbr_low102020.dat};
\addplot [green] [solid] table {tsp_multi_pavr_low102020.dat};
\end{axis}
\end{tikzpicture}
\hspace{10pt}
\begin{tikzpicture}
\pgfplotsset{
    every axis legend/.append style={
    at={(0,1)},
    anchor=north west,
                                    },
            }
\begin{axis}[
width=5.5cm,
title= {$pbr^<(k) \geq \bar{r}(k) \times p^<(k)$},
scaled ticks = false,
xlabel={instance - value of $k$},
yticklabel style={
        /pgf/number format/fixed,
        /pgf/number format/precision=2
},
legend entries={$pbr^<(k)$,$\bar{r}(k) \times p^<(k)$},
]
\addplot [blue] [dashed] table {tsp_multi_pbr_low10201.dat};
\addplot [green] [solid] table {tsp_multi_pavr_low10201.dat};
\end{axis}
\end{tikzpicture}
\caption{Conditions for lemma \ref{lemma:improver} in the TSP 10 class and instance}
\label{fig:pbrinstance}
\end{figure}

Tests on all values of $k$ confirm that 
$pbr^<(k) \geq \bar{r}(k) \times p^<(k)$ 
holds for all values of $k$ between the optimum and the good enough fitness for both the TSP 10 class and the TSP 10 instance.  The results for the TSP 10 instance are shown in
Figure \ref{fig:pbrinstance}. 
This provides evidence that the necessary conditions of theorem \ref{thm:improvement} hold strongly for all the TSP 10 tests, both for black box and white box optimisation.

\subsection{A larger TSP problem class}


To scale up to a larger problem size we generated random distances for 20 100-location TSP instances, with inter-city distances up to 50.  
We then sampled 2,000,000 points (100,000 per instance) and counted the number of points at each cost level.  
From this we estimated $p(k)$ for each cost $k$.
We found the minimum cost $2190$ and maximum cost $2920$ in this sample, and set a ``starting cost'' as $k_{ge} = 2371$.
We then checked 2,000,000 points (again 100,000 per instance) recording all those with cost $2371$.
We found all the neighbours of all these points and thus estimated $pn(2371,k)$ for all values of $k$.

As in figure \ref{fig:tspct10} above, we plotted the fitness probabilities and the probability of improvement for the TSP 100 class, in figure \ref{fig:tspct100}.

\begin{figure}[!ht]
\begin{tikzpicture}
\begin{axis}[
width=5.5cm,
title= Cost range probability,
scaled ticks = false,
xlabel={cost $k$},
ylabel={p(k)},
]

\addplot [green] table {tsp_approx_p100_50_20.dat};
\end{axis}
\end{tikzpicture}
\hskip 10pt
\pgfplotsset{
    every axis legend/.append style={
    at={(0,0.5)},
    anchor=north west,
                                    },
            }
\begin{tikzpicture}
\begin{axis}[
width=5.5cm,
title= Probability of improvement,
xlabel={starting cost},
ylabel={probability of improvement},
y tick label style={
                /pgf/number format/fixed,
                /pgf/number format/fixed zerofill,
                /pgf/number format/precision=2},
legend entries={2-opt,blind},
]
\addplot [green] [solid]  table [x=xx,y=zz] {tsp_approx_p_pn100_50_20.dat};
\addplot [blue] [dashed] table [x=xx,y=yy] {tsp_approx_p_pn100_50_20.dat};
\end{axis}
\end{tikzpicture}

\caption{TSP 100 class, cost and improvement probabilities}
\label{fig:tspct100}
\end{figure}
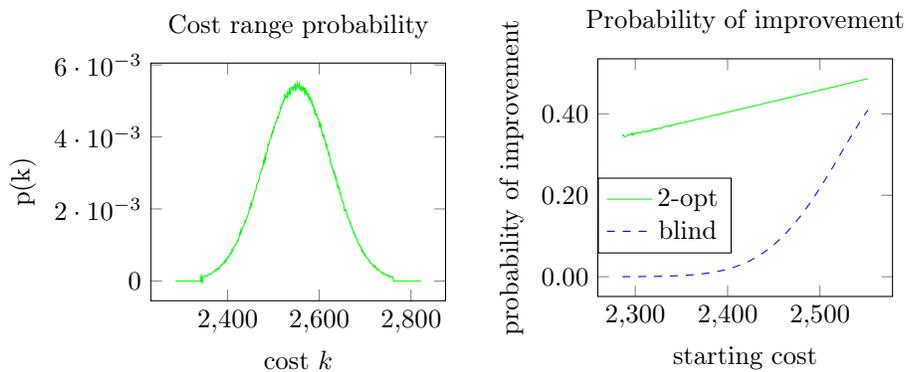

Naturally neighbourhood search does considerably better than blind search.
The $NSF(2371)$ properties are shown for the TSP 100 class in figure \ref{fig:class100nsf}.
This problem class, as a typical combinatorial problem, largely satisfies the $NSF$ properties.
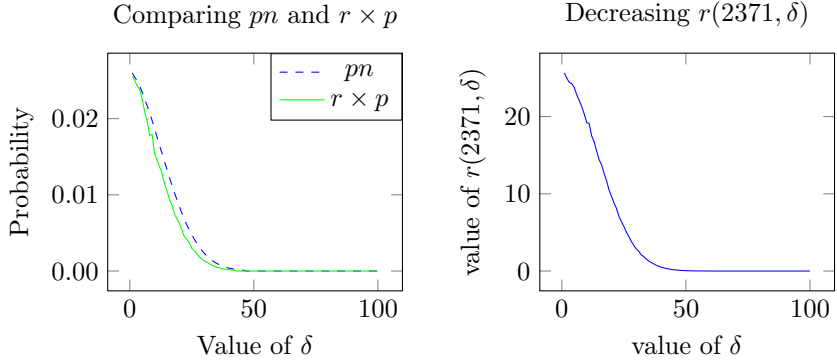
\begin{figure}[hbt]
\pgfplotsset{
    every axis legend/.append style={
    at={(1,1)},
    anchor=north east,
                                    },
            }
\begin{tikzpicture}
\begin{axis}[
width=5.5cm,
title= {Comparing $pn$ and $r \times p$},
scaled ticks = false,
xlabel={Value of $\delta$},
ylabel={Probability},
y tick label style={
                /pgf/number format/fixed,
                /pgf/number format/fixed zerofill,
                /pgf/number format/precision=2},
legend entries={$pn$,$r \times p$},
]
\addplot [blue] [dashed] table {tsp_approx_pn_kge100_50_20.dat};
\addplot [green] [solid] table {tsp_approx_p_r_kge100_50_20.dat};
\end{axis}
\end{tikzpicture}
\hspace{10pt}
\begin{tikzpicture}
\begin{axis}[
width=5.5cm,
title= {Decreasing $r(2371,\delta)$},
scaled ticks = false,
xlabel={value of $\delta$},
ylabel={value of $r(2371,\delta)$},
y tick label style={
                /pgf/number format/fixed,
                /pgf/number format/fixed zerofill,
                /pgf/number format/precision=0},
]
\addplot [blue] table {tsp_approx_r_kge100_50_20.dat};
\end{axis}
\end{tikzpicture}
\caption{Illustration of $NSF(2371)$ for the TSP 100 class}
\label{fig:class100nsf}
\end{figure}

As the starting fitness $k$ improves, the average NSF weight $\bar{r}(k)$ grows dramatically.
This is the reason for the phenomenon shown on the left hand side of figure \ref{fig:tspct100}: the probability of improvement with local search is so much greater than with blind search.
To end this section, in figure \ref{fig:class100nsfwt} we plot $\bar{r}(k)$ against $k$ in the TSP 100 class and a TSP 100 instance.
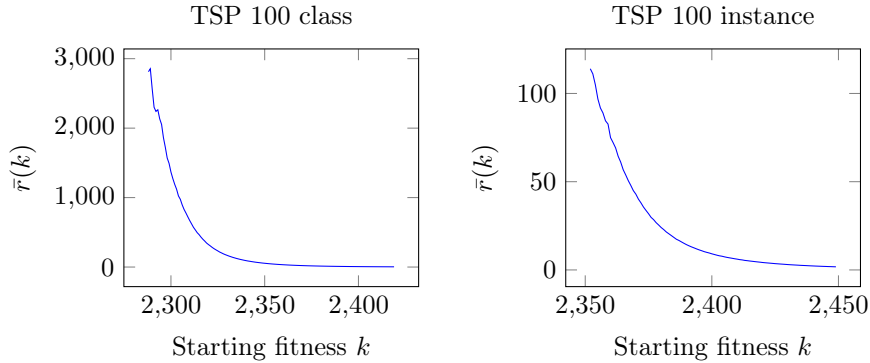
\begin{figure}[hbt]
\begin{tikzpicture}
\begin{axis}[
width=5.5cm,
title= {TSP 100 class},
scaled ticks = false,
xlabel={Starting fitness $k$},
ylabel={$\bar{r}(k)$},
y tick label style={
                /pgf/number format/fixed,
                /pgf/number format/fixed zerofill,
                /pgf/number format/precision=0},
]
\addplot [blue] table {tsp_approx_avr100_50_20.dat};
\end{axis}
\end{tikzpicture}
\hspace{10pt}
\begin{tikzpicture}
\begin{axis}[
width=5.5cm,
title= {TSP 100 instance},
scaled ticks = false,
xlabel={Starting fitness $k$},
ylabel={$\bar{r}(k)$},
y tick label style={
                /pgf/number format/fixed,
                /pgf/number format/fixed zerofill,
                /pgf/number format/precision=0},
]
\addplot [blue] table {tsp_approx_avr100_50_1.dat};
\end{axis}
\end{tikzpicture}
\caption{Average NSF weights for TSP 100}
\label{fig:class100nsfwt}
\end{figure}

\section{Expected Rate of Improvement}

\subsection{Expected Improvement from a single step}
\label{sec:betterblind}
By theorem \ref{thm:improvement} the $NSF(k)$ property entails that, 
given a good enough starting fitness $k$ and a limit on the probability of neighbours having the same fitness, 
search in the neighbourhood has a higher probability of improvement than blind search.  
However blind search can yield a point with \emph{much} better fitness, albeit with lower probability.
In fact the expected improvement from blind search can actually be greater.

To define the expected improvement from a single step, we consider the probability $pn(k,k')$ of picking a point of cost $k'$.  If $k'$ is lower than $k$, this yields an improvement of $k-k'$.
If, on the other hand, $k'$ is higher than $k$, the neighbour with cost $k'$ is ignored, and there is no ``negative improvement''. 
In short the improvement is $0$. 

The expected improvement from one step $en_{imp}(k)$ is therefore:
\begin{equation}
en_{imp}(k) = \sum_{k'<k} pn(k,k') \times (k-k')
\label{eq:enimp}
\end{equation}
Instead of searching in the neighbourhood, the system could use blind search to try to improve on $k$.  In this case the expected improvement $e_{imp}(k)$ is defined similarly:
\begin{equation}
e_{imp}(k) = \sum_{k'<k} p(k') \times (k-k')
\label{eq:eimp}
\end{equation}
To illustrate this we take the first benchmark problem class above with 200 cost levels, and the same cost probability at all levels.
We use the same NSF weightings, with maximum fitness difference $b$ between neighbours of $b=1,b=5,b=10,b=50,b=200$, and start at cost $30$.
For all these values of $b$, neighbourhood search is beneficial.
However, table \ref{tab:betterblind} shows the values  for $e_{imp}(30)$ in the first column and $en_{imp}(30)$ for different values of $b$ in the remaining columns:

\begin{table}[!ht]
\centering
    \begin{tabular}{| c | c | c c c c c |}
    \toprule
 & $e_{imp}(30)$ & \multicolumn{5}{c |}{$en_{imp}(30)$}\\
& & $b=1$ & $b=5$ &  $b=10$ & $b=50$ & $b=200$ \\ 
    \midrule
$k=30$ & 2.32 & 0.33 & 1.36 & 2.61 & 5.81 & 2.32\\
    \bottomrule
  \end{tabular}
  \vspace{1ex}
  \caption{ Comparing $e_{imp}(30)$ and $en_{imp}(30)$
  }
  \label{tab:betterblind}
\end{table}
From a starting cost of $30$, the expected improvement from neighbourhood search only exceeds that from blind search when $b \geq 9$.

In the rest of this section  we address the question: what happens after a sequence of search steps?
If the maximum fitness difference between neighbours is less than 9 ($b < 9$) does blind search continue to improve faster than local descent?
Or does local descent always outperform blind search, given enough steps?

\subsection{Local Descent}




\subsubsection{The number of steps to improve \emph{imp(k)}}

If the current point has cost $k$, we determine $imp(k)$ the expected number of steps to improve (i.e. move to a point with better cost).
The probability that after one step local descent can move to a better point is simply $pn^<(k)$.
Thus the probability of improving after two steps is $(1-pn^<(k)) \times pn^<(k)$. 
Generalising, the probability of improving after $n$ steps is:
$$ (1 - pn^<(k))^{n-1} \times pn^<(k) $$

A local descent algorithm on a single problem instance will have some limit $N$ on the number of attempts to find an improving neighbour of the current point, or the current plateau.
If the local descent algorithm on problem classes were restricted in the same way to a limit of $N$ attempts to improve, the expected number of steps to improve would be: 
$$ \sum_{j =0}^{N-1} : (1+j) \times (1 - pn^<(k))^j \times pn^<(k) $$
assuming that an improving neighbour was found within $N$ attempts.

Instead we place no limit on the number of attempts to improve in local descent on problem classes.
Assuming $pn^<(k) \geq p^<(k)$,
$pn^<(k)$ is strictly positive for all cost values except the global optimum.
In this case, the probability $(1-\sum_{i<k} pn(k,i))^{N}$ that no improving neighbour is found tends to $0$ as $N$ tends to infinity.

Accordingly the expected number of steps to improve is:
$$ \sum_{j =0}^{\infty} : (1+j) \times (1 - pn^<(k))^j \times pn^<(k) = 1/pn^<(k)$$
This is an upper bound on the expected number of steps required by a local descent algorithm on any problem instances when it is not at a local optimum.
Simplifying, yields the following equation for $imp(k)$, the expected number of steps to improve from cost $k$:
\begin{equation}
    imp(k) = 1 / pn^<(k)
    \label{imp-repl}
\end{equation}
The definition of $imp(k)$ in equation (\ref{imp-repl}) could be interpreted as a tight bound on the number of steps required by concrete local descent if the neighbourhoods contained a very large number of points.

\subsubsection{Formalising local descent}
\label{sec:formalising}
The function $steps(k)$ encodes the expected number of steps, starting with at a point $s1$ with cost $k$ to reach a point $s2$ with a cost of $0$, assuming $s2$ is reached from $s1$ by {\em local descent}.

Since the function uses $imp(k)$ for the expected number of steps to improve, this function is an overestimate of the expected number of steps for concrete local descent to reach any improving fitness $k' < k$.

\begin{definition}[Local descent] 
$steps(k)$ is specified by the following recursive definition, where $k$ is the starting cost:

\[
steps(k) = \left.
  \begin{cases}
  0 & \text{if} \  k = 0 \\
  imp(k) + \sum_{i<k} \frac{pn(k,i)}{ pn^<(k) } \times steps(i) & \text{if} \ k > 0
    \end{cases}
    \right\}
\]
\label{def:stepsdef}
\end{definition}

Clearly if the current cost is already $0$, $0$ steps are needed.

The second expression, for $k>0$, models a search which starts at a point with cost $k$, and chooses neighbours of points with that cost until a better point is found (the number of steps being $imp(k)$).
Weighted by the probability that the next point has cost $i$, $steps(i)$ calculates the remaining steps.

Blind search arbitrarily selects a point in the search space, returns its cost, and then tries again.
In this case the expected number of steps $blind$ for blind search to find an optimal point is:\\
\begin{equation*}
    blind = \sum_{i=1}^{\infty} i \times ((1-p(0))^{i-1} \times p(0)) = 1/p(0)
\end{equation*}

\begin{definition}[Beneficial local descent]
We say local descent starting at a point with cost $k$ is beneficial if 
$steps(k) \leq blind$\footnote{In fact this result goes through if instead of the optimum, $0$, the target is $k'$ where $p^<(k')$ is no greater than any other fitness probability greater than $k'$.  This is discussed just before theorem \ref{thm:localdescent} below.}
\end{definition}

\subsection{Full NSF for local descent}
\label{sec:nsf2}
For beneficial improvement, in the previous section, the $NSF(k)$ property only needed to hold for the current cost $k$.
For local descent, however, the current cost changes whenever improvement occurs.
Thus $NSF(k)$ must hold at all levels better than the starting cost of the local descent.

A simple way in which to ensure $NSF(k)$ holds for fitness levels better than a starting level $k_0$
is to enforce $NSF(k_0)$, and have $r(k,i) = r(k_0,i)$ for all $i \leq k \leq k_0$.
Thus in the computational experiments below, we input a sequence of values for the NSF weights that are independent of the current fitness level.  

However experiments on combinatorial problems show that as the fitness $k$ approaches the optimum, $r(k,i)$ actually increases with decreasing values of $k$.
Indeed for local descent to be beneficial, $r(k,\delta)$ should not decrease as $k$ improves.
Suppose a local descent finds a sequence of points with better and better fitness $k_1, k_2, k_3, \ldots$.
If, at some $k_j$ in this sequence, $r(k_j,1) = 1$, then 
all the steps until $k_j$ have been worthless. 
The search from $k_j$ now has the same expected number of steps as blind search,
and therefore the local descent uses more steps than blind search.

To preclude this, and other related local descents, we require the following property:

\begin{definition}
Full NSF holds from level $k_0$ if:
\begin{flalign*}
\forall k \leq k_0 : & NSF(k)&\\
\forall i \leq k_2 < k_1 \leq k_0 : & r(k_1,i) \leq r(k_2,i)&  
\end{flalign*}
\label{def:fullnsf}
\end{definition}
We illustrate that full NSF typically holds from $k_{ge}$, first with a benchmark problem class, and then with the randomly generated TSP.

First we examine the linear benchmark introduced in section \ref{sec:bench} above, where ($p(i) = 0.5 - \rm{abs}(100-i)/200$ for each cost level $i$).
In this benchmark, the fixed bound $b$ is the maximum fitness difference between any neighbours. 
For any fitness difference $\delta > b$, the NSF weight $r(k,\delta) = 0$.
For smaller fitness difference the NSF weight $r(k,\delta)$ is independent of $\delta$.

We fixed $b=10$ and for each starting cost $k \in 1..100$ we calculated
$r(k,1)$. 
The graph on the left hand side of figure~\ref{fig:rki} shows how $r(k,1)$ increases as $k$ decreases towards the optimum.
Since in this benchmark $r(k,\delta)= r(k,1)$ for all $\delta \leq 10$,
and $r(k,\delta) = 0$ for all $\delta > 10$,
we have shown that $r(k,\delta)$ increases with decreasing $k$ for all values of $\delta$.

\begin{figure}[hbt]
\pgfplotsset{
    every axis legend/.append style={
    at={(1,1)},
    anchor=north east,
                                    },
            }
\begin{tikzpicture}
\begin{axis}[
width=5.5cm,
title= {Linear benchmark: $r(k,1)$},
xlabel={Starting cost: $k$},
ylabel={NSF weight: $r(k,\delta)$},
legend entries={b=10},
]
\addplot [blue] [solid]  table {rkiklinear10.dat};
\end{axis}
\end{tikzpicture}
\hspace{10pt}
\begin{tikzpicture}
\begin{axis}[
width=5.5cm,
title= {TSP 100 - $r(k,\delta)$},
xlabel={Cost change: $\delta$},
ylabel={NSF weight: $r(k,\delta)$},
legend entries={{k=2258},{k=2248},{k=2238},{k=2228},{k=2218}},
]
\addplot [blue, solid, line width = 2.5pt] table {mytspr10010225.dat};
\addplot [blue, solid, line width = 2.0pt]  table {mytspr10010224.dat};
\addplot [blue, solid, line width = 1.5pt ] table {mytspr10010223.dat};
\addplot [blue, solid, line width = 1.0pt ] table {mytspr10010222.dat};
\addplot [blue, solid, line width = 0.5pt ] table {mytspr10010221.dat};
\end{axis}
\end{tikzpicture}
\caption{NSF weights}
\label{fig:rki}
\end{figure}
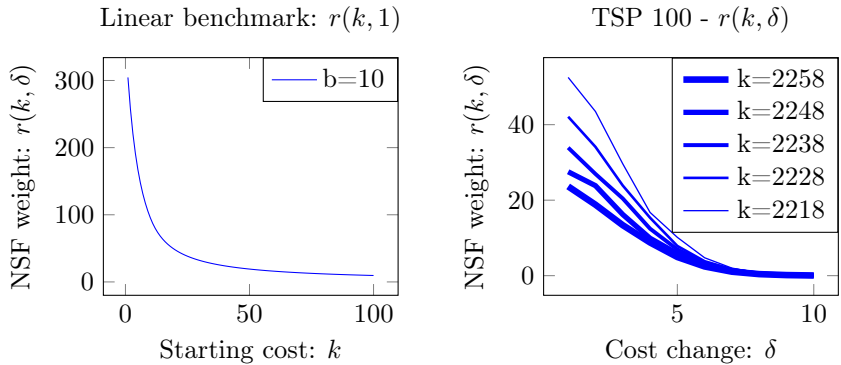

The graph on the right hand side of figure \ref{fig:rki} shows values of $r(k,\delta)$ in a 100-location TSP. 
In this case instead of the x-axis showing values of the starting cost $k$, the x-axis shows increasing values of $\delta$, and different values of $k$ are shown as different lines.
Again higher values of $k$ yield lower values of $r(k,\delta)$ for all values of $\delta$.
In this graph there is a different curve for each value of $k$, showing how larger values of $k$ yield smaller values for $r(k,\delta)$.
The neighbourhood is 2-swap, and the data is based on all the neighbours of 20 points.
The horizontal axis shows how $r(k,\delta)$ also decreases with increasing $\delta$ (confirming the NSF property).
The graph shows $r(k,\delta) : \delta \in 1..10$, for 5 different values of $k$: 
$2258, 2248,2238,2228,2218$.

\subsection{Software Experiments}
Given the value of $pn(i,j)$ for each $k \geq i \geq j$, we can evaluate $steps(k)$.

To explore the value of $steps(k)$ for different problem classes, we wrote a program with
the following parameters:

\begin{itemize}
    \item[Size] The number of cost levels from the optimum up to the modal cost: $n$
    \item[Counts] The number of points at each cost level
    \item[Total] The total number of points in the search space: $total$
    \item[Weights] The NSF weights: $r(\_,i)$ for $i \in 1, \ldots k_{ge}$ (assuming the value of $r(x,i)$ is independent of $x$, and $r(x,0)=1$) 
\end{itemize}
The cost probabilities $p(k)$ were inferred as $Count/Total$
We generated a large variety of problem classes and we calculated the expected number of steps in each case.

For example we generated a problem with 1 optimum point, and five times as many points at each subsequent cost level up to level 9, and a total of 500,000 points.
We then evaluated $steps(5)$.
We then explored how different NSF weights resulted in different expected number of steps to reach the optimum.

\begin{table}[!ht]
\centering
    \begin{tabular}{| c c c c c | c | c |}
    \toprule
      & \multicolumn{3}{c}{Problem class} & & Weights & $steps(k)$\\
\midrule
& $k$ & Counts & Total & & [1,1,1,1] & 500,000\\
& & & & & [2,1,1,1] & 274,937\\
&  5 & [1,5,25,125,625,3125 & 500,000 & & [4,1,1,1] & 145,741\\
& & 15625,78125,390625] &  & & [4,3.5,3,0.5] & 128,801\\
\bottomrule
  \end{tabular}
  \vspace{1ex}
  \caption{ Evaluating $steps(k)$ on an example problem }
  \label{tab:stepexperiments}
\end{table}
Blind search has an expected number of $500,000$ steps, which is more that $steps(k)$ in each case, except Weights = [1,1,1,1] which has, of course, the same expected number of steps.

In a later section we will observe that this model has some surprising results when $\bar{r}(k) < 1$ and even when the fitness probabilities are not decreasing towards the optimum.

\paragraph{A Constraint Program to prove NSF ensures local search is effective}
To explore the question whether $blind$ is always an upper bound on $steps(k)$, we implemented this same function using a constraint program (written in ECLiPSe \cite{eclipse}).  For a wide variety of problem classes we supplied values for Size, Counts, and Total and imposed the constraint that $blind \leq steps(k)$.
We enforced the constraint that the NSF weights were non-increasing to meet the NSF requirement \ref{eq:nsf}, and we set $pn(k,k)=p(k)$.
The program is presented in \ref{sec:eclipseprog}.

For every such problem class, we ran $steps(k_{ge})$ and the program returned the constraint that the NSF weights must all be very close to $1$. 
This means that, for each problem class tested, the only way to satisfy the constraint that $blind \leq steps(k)$ is if $pn(k_1,k_2)=p(k_2)$, and neighbourhood search, like blind search, selects points randomly from the search space.
For any other NSF weights, where $r(k,1)>1$ in other words, $steps(k) < blind$.

The challenge remained to prove that the result holds for all problem classes and all neighbourhoods satisfying the NSF property.

\subsection{Theorem on the benefit of local descent}
\label{sec:abstract-local-descent}
 
We prove that, from a point with good enough cost $k$, the expected number of steps to reach an optimum point $steps(k)$ is lower than with blind search.
\cite{cohen2020steepest} exhibited a problem with binary variables with a simple objective, that requires an exponential number of flips to reach a local optimum - even choosing the best neighbour each time.
Nevertheless we prove that if local descent continues to improve long enough, its expected number of steps - although only choosing the first improving neighbour - will be less than with blind search.

It is worth noting that in principle an initial point with cost $k$ must be found before starting the local descent from this cost.
Section \ref{sec:blindthenlocal} reveals that using blind search to seed local descent cannot cause local descent to be worse than blind search, but can in fact improve it.

The first step towards this proof considers only problem classes with uniform cost probability at all levels and normal neighbourhoods.
Actually uniform cost probability and normal neighbourhoods are both worst case scenarios.

We write $steps_u(k)$ for $steps(k)$ under the conditions that\\
$ \forall k_2 \leq k :  p(k_2)=p(0)=p$\\ \ and\\
$ \forall k2 < k1 \leq k : pn(k1,k2) = r(k1,k1-k2) \times p(k2)$\\
\begin{definition}
\[
steps_u(k) = \left.
  \begin{cases}
  0 & \text{if} \  k = 0 \\
  \frac{1}{pbr^<(k)} + \sum_{i<k} \frac{r(k,i) \times p}{ pbr^<(k)} \times steps_u(i) & \text{if} \ k > 0
    \end{cases}
    \right\}
\]
\end{definition}

\begin{lemma} \label{lemma:localdescent}
Assume $\bar{r}(k) \geq 1$ and full NSF holds from level $k$, 
then:
\begin{flalign*}
\ \ \ steps_u(k)& \leq blind&
\end{flalign*}
\end{lemma}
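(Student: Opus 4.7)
The plan is to prove the sharper bound $\Phi(k) := p \cdot steps_u(k) \le 1/\bar{r}(k)$ by strong induction on $k$; the desired conclusion $steps_u(k) \le 1/p = blind$ then follows because $\bar{r}(k) \ge 1$. Multiplying the recursive definition of $steps_u$ through by $p k \bar{r}(k)$ and using $pbr^<(k) = p k \bar{r}(k)$ gives
\[
k \bar{r}(k)\, \Phi(k) \;=\; 1 + \sum_{\delta=1}^{k} r(k,\delta)\, \Phi(k-\delta),
\qquad \Phi(0)=0.
\]
The base case is immediate. Applying the induction hypothesis to each $\Phi(k-\delta)$ for $1 \le \delta \le k-1$ (the $\delta = k$ term vanishes since $\Phi(0)=0$) yields
\[
\Phi(k) \;\le\; \frac{1}{k \bar{r}(k)} \Big(1 + \sum_{\delta=1}^{k-1} \frac{r(k,\delta)}{\bar{r}(k-\delta)}\Big),
\]
so the induction reduces to the key inequality
\[
\sum_{\delta=1}^{k-1} \frac{r(k,\delta)}{\bar{r}(k-\delta)} \;\le\; k-1.
\]

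The hard part will be this sum inequality, but I expect it to fall out cleanly from two monotonicity properties that full NSF buys us. First, full NSF applied with $k_1=k$, $k_2=k-1$, $i=\delta$ gives $r(k,\delta) \le r(k-1,\delta)$ for every $\delta \le k-1$. Second, $\bar{r}$ is itself non-increasing in its argument: by $NSF(k)$ the first $k'$ values of $r(k,\cdot)$ are at least as large as the overall average, so $\frac{1}{k'} \sum_{i=1}^{k'} r(k,i) \ge \bar{r}(k)$, and full NSF then gives $\bar{r}(k') = \frac{1}{k'}\sum_{i=1}^{k'} r(k',i) \ge \frac{1}{k'}\sum_{i=1}^{k'} r(k,i) \ge \bar{r}(k)$ whenever $k' \le k$. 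In particular $\bar{r}(k-\delta) \ge \bar{r}(k-1)$ for $\delta \ge 1$. Chaining the two monotonicities,
\[
\sum_{\delta=1}^{k-1} \frac{r(k,\delta)}{\bar{r}(k-\delta)} \;\le\; \sum_{\delta=1}^{k-1} \frac{r(k-1,\delta)}{\bar{r}(k-1)} \;=\; \frac{(k-1)\bar{r}(k-1)}{\bar{r}(k-1)} \;=\; k-1,
\]
closing the induction. Since $\bar{r}(k) \ge 1$ propagates via the $\bar{r}$-monotonicity to all $k' \le k$, we conclude $\Phi(k) \le 1/\bar{r}(k) \le 1$, i.e.\ $steps_u(k) \le 1/p = blind$.
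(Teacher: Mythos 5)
Your proof is correct and is essentially the paper's own argument in a lightly renormalised form: your inductive invariant $\Phi(k)\le 1/\bar r(k)$ is exactly the paper's Lemma~\ref{lemma:main} bound $steps_u(k)\le k\times imp_u(k)=1/(p\,\bar r(k))$, your key sum inequality plays the role of Lemma~\ref{lemma:rmi} and rests on the same two monotonicity consequences of full NSF, and your final step $1/(p\,\bar r(k))\le 1/p$ is Lemma~\ref{lemma:gandt}. No substantive differences to report.
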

The proof is in \ref{sec:localdescentproof}

This lemma is then generalised for all problem classes, and all neighbourhoods that are normal.  

Returning to the software experiments, it emerged that the requirement for fitness probabilities to decrease towards the optimum is not necessary for local descent.
The only required condition is that the fitness probability of the optimum solution is as small as any other fitness level.
An example is in table \ref{tab:nogoodenough}.
\begin{table}[!ht]
\scriptsize  
\centering
    \begin{tabular}{| c c c c c | c | c |}
    \toprule
      & \multicolumn{3}{c}{Problem class} & & Weights & $steps(k)$\\
\midrule
& $k$ & Counts & Total & & [1,1,1,1] & 100\\
& & & & & [2,1,1,1] & 62.7\\
&  5 & [1,2,3,4,5,6,7,8,1] & 100 & & [3,2,1,0.7] & 42.7\\
& &  &  & & [3,2.5,1.5,0.5] & 40\\
\bottomrule
  \end{tabular}
  \vspace{1ex}
  \caption{ Evaluating $steps(k)$ on non-decreasing fitness probabilities }
  \label{tab:nogoodenough}
\end{table}

Indeed the proof carries through, requiring only that
$\forall k_2 \leq k : p(k_2) \geq p(0)$.

This result means that if local descent reaches a level of fitness $k'$, where $p^<(k')$ is as low as any fitness probability, the result goes through.
It follows by assigning fitness $0$ to all points with original fitness of $k'$ or less, and $k-k'$ to points with any other fitness $k$.
These are the conditions under which local descent is expected to outperform blind search.
\begin{theorem}
\label{thm:localdescent}
For any $k$ where:\\
$\bar{r}(k) \geq 1$\\
$\forall k_2 \leq k : p(k_2) \geq p(0)$\\
and full NSF holds from level $k$\\
it follows that
\begin{flalign*}
\ \ \ steps(k)& \leq blind&
\end{flalign*}
\end{theorem}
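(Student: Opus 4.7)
The plan is to prove Theorem~\ref{thm:localdescent} by reducing it to Lemma~\ref{lemma:localdescent}, which already handles the worst-case scenario: uniform fitness probability $p(k_2) = p(0)$ together with the extremal normal neighbourhood $pn(k_1, k_2) = r(k_1, k_1 - k_2) \times p(k_2)$. The theorem only weakens these to $p(k_2) \geq p(0)$ and to arbitrary normal-or-boosting neighbourhoods, both of which should only help local descent. Accordingly, my approach is to establish two monotonicity facts and chain them with the lemma.

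For the first reduction, I would prove by strong induction on $k$ that $steps(k)$ is monotone in the \emph{excess} of the neighbourhood: replacing the extremal $pn(k, k-\delta) = r(k,\delta) \times p(k-\delta)$ by any normal $pn$ with the same $r$ only decreases $steps(k)$. The slack $\varepsilon(\delta) = pn(k, k-\delta) - r(k,\delta) \times p(k-\delta) \geq 0$ increases $pn^<(k)$, which both shortens $imp(k) = 1/pn^<(k)$ and tilts the conditional distribution $pn(k,i)/pn^<(k)$ toward smaller $i$. Together with a separate easy induction showing that $steps(\cdot)$ is non-decreasing in its argument, the weighted sum in the recursion shrinks.

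For the second reduction, I would show that replacing uniform $p(k_2) = p(0)$ by the theorem's $p(k_2) \geq p(0)$ likewise does not increase $steps(k)$. Since $blind = 1/p(0)$ depends only on $p(0)$, any increase in $p(k_2)$ for $k_2 > 0$ enlarges $pbr^<(k)$ and shortens $imp$, while tilting the conditional descent distribution toward deeper drops. An alternative viewpoint, matching the relabeling remark made just before the theorem, is to collapse the cluster of fitnesses with $p^<$ no greater than $p(0)$ into a single optimum class and apply the lemma to the relabeled problem; the original descent reaches this collapsed optimum at least as fast. Chaining with the first reduction gives $steps(k) \leq steps_u(k) \leq blind$.

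The hardest part will be propagating the full NSF hypothesis cleanly through the induction. At every improving cost $i < k$ visited during descent, one must reinvoke the bound recursively, which is legitimate only because Definition~\ref{def:fullnsf} guarantees $r(i, \cdot) \geq r(k, \cdot)$ componentwise, from which $\bar{r}(i) \geq \bar{r}(k) \geq 1$ and $NSF(i)$ both follow (using that $r(k,\cdot)$ is non-increasing, so the top $i$ entries have at least the overall average). A further subtle point is that the neighbourhood-monotonicity claim is really a stochastic-domination / coupling statement between two distributions on the next fitness, and the inequality it yields depends on the monotonicity of $steps(\cdot)$ in its argument. I would break this potential circularity by first proving that $steps$ is non-decreasing in its starting cost (a straightforward induction, independent of the neighbourhood), and only then executing the coupling argument for the neighbourhood comparison.
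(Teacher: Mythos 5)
Your overall architecture is the paper's: reduce to Lemma~\ref{lemma:localdescent} (uniform fitness probability, extremal neighbourhood $pn=r\times p$), then argue that normality and $p(k_2)\geq p(0)$ can only help local descent. Your remark on propagating the hypotheses down the descent ($\bar r(i)\geq\bar r(k)\geq 1$ because full NSF gives $r(i,\cdot)\geq r(k,\cdot)$ componentwise and the first $i$ entries of a non-increasing sequence average at least the whole) is exactly the mechanism the paper exploits, e.g.\ in Lemma~\ref{lemma:rmi}.

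The gap is your monotonicity claim. You assert that $steps(\cdot)$ being non-decreasing in its starting cost is ``a straightforward induction, independent of the neighbourhood''. For general normal neighbourhoods this is false: normality places only a lower bound on $pn(k,k-\delta)$, so nothing prevents the neighbourhood at cost $k$ from being strongly boosting (say every neighbour of a cost-$k$ point has cost $0$, giving $steps(k)=1$) while cost $k-1$ improves only with tiny probability, making $steps(k-1)$ arbitrarily large. The monotonicity you actually need is of the \emph{comparison} object, and the paper proves it only for $steps_u$ (Lemma~\ref{lemma:monotonesteps}); that proof is neither trivial nor neighbourhood-independent --- it runs by contradiction and hinges on full NSF through the decomposition $r(k-1,\delta-1)=r(k,\delta)+\epsilon_\delta$ together with the uniformity of $p$. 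This also undermines your two-stage factorization: if you first pass from a general normal $pn$ to the extremal $pn=r\times p$ built on the \emph{actual} (non-uniform) $p$, your coupling step needs monotonicity of that intermediate steps function, which is established nowhere. The paper sidesteps this by performing both relaxations in a single induction (the Reduced Steps lemma), comparing $steps_s(k)$ directly with $steps_u(k)$ so that only the monotonicity of $steps_u$ is ever invoked, and obtaining $steps_s(k)\leq steps_u(k)\leq k\times imp_u(k)\leq blind$ via Lemmas~\ref{lemma:main} and~\ref{lemma:gandt}. Your argument becomes sound if you collapse your two reductions into one induction against $steps_u$ and replace the ``straightforward'' monotonicity claim by an actual proof of Lemma~\ref{lemma:monotonesteps}.
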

The proof is in \ref{sec:localdescentproof}.

Before concluding this section we note a final software experiment.
In this experiment the average NSF weight is less than one, violating the first condition of theorem \ref{thm:localdescent}.
The result is shown in table \ref{tab:lowavr}
\begin{table}[!ht]
\scriptsize  
    \centering
    \begin{tabular}{|ccc|c|c|}
    \hline
         $k$ & Counts & Total & Weights & $steps(k)$ \\ 
    \hline 
          5 & [1,10,10,10,20,20,10,10,1] & 100 & [1.1,1,1,0.8] & 92.3\\
          5 & [1,1,10,10,10,10,10,10,1] & 70 & [1.4, 1, 1, 0.5] & 59.7\\
          3 & [1,10,10,10,1] & 35 & [1.1,0.8] & 32.6\\
    \hline
    \end{tabular}
    \caption{Evaluating $steps(k)$ when $\bar{r}(k) < 1$}
    \label{tab:lowavr}
\end{table}

Since lemma \ref{lemma:localdescent} is key to the proof of theorem \ref{thm:localdescent}, and if the fitness probabilities are the same for all fitness values the NSF property entails $\bar{r}(k) \geq 1$, we have not found a proof that extends to this case.


\subsection{Benchmarks for the number of steps}
This section shows how the improvement in the expected number of steps, due to local descent, grows with the steepness with which the cost probability falls.
The benchmarks are those introduced in section \ref{sec:bench}: one with a uniform cost probability, two with a linearly decreasing cost probability towards the optimum cost, and a final one with a exponential decrease in cost probability towards the optimum.
In each benchmark $k_{opt}=0$ and $k_{max}=200$.
The two linear benchmarks have different slopes.
For $j \leq 100$, the first has $p(j)=j/10,000$ and the second has $p(j) = 5 \times j/50,000$.
The exponential benchmark has $p(j) = {200 \choose j+1}/2^{200}$.
In each case the expected number of steps for blind search is $\frac{1}{p(0)}$, 
which is $200, 10000, 50000, 2^{200}$ for the uniform, linear, steep linear and exponential problem classes respectively. 

The NSF weights reflect neighbourhoods with a fixed maximum fitness difference, $b$, between neighbours, as in section \ref{sec:bench} above.  In these benchmarks we use $b=5$.  Below this difference the cost weight, $r(k,\delta): \delta<b$, is constant.

Table (\ref{tab:impbench2}) shows the expected number of steps to reach the optimum using local descent starting at cost levels $k=50, k=30, k=10$ and $k=2$.

\begin{table}[htbp]
\scriptsize  
\centering
    \begin{tabular}{| c || c | c | c | c | c |}
    \toprule
                           & $blind$ & \multicolumn{4}{c |}{$steps(k)$}\\     
                           & & $k=50$ & $k=30$ & $k=10$ & $k=2$ \\ 
    \midrule
uniform                & 200 & 41.9 & 27.3 & 12.5 & 7.5 \\
linear                & 10000 & 70.7 & 54.7 & 36.6 & 25.5 \\
steep linear       & 50000 & 158 & 142 & 123 & 112 \\
exponential         & $1.6 \times 10^{60}$ & $5.9 \times 10^{10}$  & 
$5.9 \times 10^{10}$  & $5.9 \times 10^{10}$ & $4.3 \times 10^{10}$ \\
    \bottomrule
  \end{tabular}
  \vspace{1ex}
  \caption{Steps due to local descent, with decreasing starting cost
  }
  \label{tab:impbench2}
\end{table}
For local descent a difference emerges between the two linear problems - the benefit of local descent over blind search is increased by the steepness of the slope.
Curiously on the exponential problem, the number of steps to reach the optimal solution for local descent is almost independent of the starting cost.   
There is a slight decrease as the starting cost approaches the optimum, but the number of final steps needed to actually reach the optimum dominates the total.

The next set of results explore the impact of the tightness of NSF on the number of steps.  To increase tightness we reduce the maximum fitness difference, $b$, between neighbours.  
As before we assume $r(k,\delta): \delta<b$ is constant.
We report the expected number of steps $steps(50)$ for three problems, uniform, linear, exponential, introduced above, and for neighbourhoods with maximum cost distance between neighbours of $1,5,10,50$ and $200$.
The results are shown in table \ref{tab:stepsbench}.
For the exponential problem, the tighter the NSF the fewer staps are needed.  However for the other problems, the minimum number of steps is reached at different intermediate tightnesses between $1$ and $200$.
\begin{table}[htb]
\scriptsize  
\centering
    \begin{tabular}{| c || c | c | c | c | c | c | }
    \toprule
 & $blind$ &   \multicolumn{5}{c |}{steps(50)}\\     
 &  & $b=1$ & $b=5$ &  $b=10$ & $b=50$ & $b=200$ \\ 
    \midrule
uniform   & 200 &  150 & 41.9 & 29.9 & 55.5 & 200\\
linear & 10000 & 164 & 70.7 & 36.6 & 1400 & 10000 \\
steep linear & 50000 & 176 & 158 & 373 & 6728 & 50000 \\
exponential  & $1.6 \times 10^{60}$ & 39,140 & $5.9 \times 10^{10}$ & $2.5 \times 10^{17}$ & $1.4 \times 10^{48}$ & $1.6 \times 10^{60}$ \\
    \bottomrule
  \end{tabular}
  \vspace{1ex}
  \caption{ Expected number of steps to reach optimality}
  \label{tab:stepsbench}
\end{table}

\section{Seeding local descent with blind search}
\label{sec:blindthenlocal}
In the previous section the performance of blind search was compared with the performance of local descent starting from a point with good enough cost $k$.
If blind search is used to find this initial point, this increases the number of steps required for local descent.
We first show that even including an initial blind search to find a point with good enough cost, theorem \ref{thm:localdescent} still holds.

The number of steps $blsteps(k)$ to reach a starting cost of at most $k$,  and from this starting cost to reach the optimum, is the sum of three components:
\begin{enumerate}
    \item Blind search returns the optimum immediately
    \item Blind search immediately returns a point with cost $i$ which is at least as good as $k$, and uses local descent starting at cost $i$
    \item The first blind selection finds a point with cost worse than $k$, and the initial blind search resumes
\end{enumerate}

Reflecting these three alternatives,
the expected number of steps $blsteps(k)$ required for local descent from a cost $k$ seeded by blind search is:
\begin{flalign*}
     blsteps(k) = & \ (p(0) \times 1) &\\
     & + \sum_{i=1}^{k-1} p(i) \times (1+steps(i))& \\
     & + (1-\sum_{i=0}^{k-1} p(i)) \times (1 + blsteps(k))& \\
\end{flalign*}

The following theorem is proven in \ref{sec:blsearch}:
\begin{theorem}
\label{thm:blindthenlocal}
For any $k$ where \\
$\bar{r}(k) \geq 1$ \\
$\forall k_2 \leq k: p(k_2) \geq p(opt)$ \\
Full NSF from level $k$\\
$\forall k_2 \leq k, \delta : pn(k_2,k_2-\delta) \geq r(k_2,\delta) \times p(k_2-\delta)$ (normal)\\ 
$$blsteps(k) \leq blind$$
\end{theorem}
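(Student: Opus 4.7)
The plan is to solve the recurrence for $blsteps(k)$ in closed form and reduce the claim to a uniform bound $steps(i)\leq blind$ for every $i<k$, which will then follow from Theorem \ref{thm:localdescent} applied at each such level. Let $P=\sum_{i=0}^{k-1}p(i)$ and $Q=1-P$. Collecting the $blsteps(k)$ terms in the defining equation and using $p(0)+\sum_{i=1}^{k-1}p(i)+Q=1$, the recurrence simplifies to
\begin{equation*}
(1-Q)\,blsteps(k) \;=\; 1 + \sum_{i=1}^{k-1} p(i)\,steps(i),
\end{equation*}
so $blsteps(k) = \bigl(1 + \sum_{i=1}^{k-1} p(i)\,steps(i)\bigr)/P$.

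Next I would reduce the target inequality $blsteps(k)\leq blind = 1/p(0)$ to a pointwise bound. Multiplying through by $p(0)P$ and cancelling the common $p(0)+\sum_{i=1}^{k-1}p(i)p(0)\cdot 0$ style terms, the inequality $p(0)\bigl(1+\sum_{i=1}^{k-1} p(i)\,steps(i)\bigr)\leq P$ rearranges to
\begin{equation*}
\sum_{i=1}^{k-1} p(i)\bigl[p(0)\,steps(i)-1\bigr] \;\leq\; 0.
\end{equation*}
Since $p(i)\geq 0$, this is implied by $steps(i)\leq 1/p(0)=blind$ for every $i$ with $1\leq i<k$.

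The main work, and the part I expect to be the chief obstacle, is to verify that the three hypotheses of Theorem \ref{thm:localdescent} descend from level $k$ to every level $i\leq k$. Full NSF from level $i$ is immediate from full NSF from level $k$ (definition \ref{def:fullnsf} is hereditary downward). The condition $p(k_2)\geq p(0)$ for $k_2\leq i$ is a restriction of the same condition at level $k$. The delicate step is $\bar{r}(i)\geq 1$: by full NSF, $r(i,\delta)\geq r(k,\delta)$ for $\delta\leq i$, and by NSF($k$) the sequence $r(k,1)\geq r(k,2)\geq\dots\geq r(k,k)$ is monotone decreasing, so the partial average dominates the full average,
\begin{equation*}
\bar{r}(i) \;\geq\; \tfrac{1}{i}\sum_{\delta=1}^{i} r(k,\delta) \;\geq\; \tfrac{1}{k}\sum_{\delta=1}^{k} r(k,\delta) \;=\; \bar{r}(k) \;\geq\; 1.
\end{equation*}

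With these verifications in place, Theorem \ref{thm:localdescent} gives $steps(i)\leq blind$ for every $i\leq k$, the pointwise bound from the second paragraph holds, and the closed form of the first paragraph yields $blsteps(k)\leq 1/p(0)=blind$, as required.
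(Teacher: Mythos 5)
Your proposal is correct and follows essentially the same route as the paper's proof: solve the recurrence for $blsteps(k)$, substitute the bound $steps(i)\leq blind$ termwise, and compare with the corresponding identity for $blind$. The one thing you add that the paper leaves implicit is the explicit check that the hypotheses of Theorem \ref{thm:localdescent} descend to every level $i<k$ (in particular that $\bar{r}(i)\geq\bar{r}(k)\geq 1$ via the monotone-average argument), which is a worthwhile clarification but not a different method.
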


Accordingly, taking into account blind search steps required to start the local descent cannot make local descent any worse than blind search.
Surprisingly it can actually reduce the number of steps.
Let us return to a comparison of the expected improvement in one step from neighbourhood search and blind search (section \ref{sec:betterblind}).

We take the uniform benchmark problem above, with 200 cost levels, and uniform cost probability.
$e_{imp}(k)$ is the expected amount of improvement from blind search starting at a point with cost $k$ (equation \ref{eq:eimp} above), and $en_{imp}(k)$ is the expected amount of improvement from neighbourhood search (equation \ref{eq:enimp}).

Given a fixed value of $b=5$, table \ref{tab:kblind} shows for each starting cost $k=50, k=30, k=20, k=10, k=2$, the expected improvement in one step from neighbourhood search and blind search:

\begin{table}[htbp]
\scriptsize  
\centering
    \begin{tabular}{| c || c | c | c | c | c |}
    \toprule
                           & $k=50$ & $k=30$ & $k=20$ & $k=10$ & $k=2$ \\ 
    \midrule
$e_{imp}(k)$            & 6.37 & 2.32 & 1.05 & 0.275 & 0.015\\
$en_{imp}(k)$           & 1.36 & 1.36 & 1.36 & 1.36 & 0.27\\
    \bottomrule
  \end{tabular}
  \vspace{1ex}
  \caption{ Comparing $e_{imp}(k)$ and $en_{imp}(k)$
  }
  \label{tab:kblind}
\end{table}
At $k=50$ and $k=30$ the expected improvement in one step from blind search is higher than that from local descent.
In fact this holds for all values of $k \geq23$.

The following table (\ref{tab:blindstart}) shows the performance of starting with blind search and switching to neighbourhood search after finding a point with cost better than $k$.
In table \ref{tab:blindstart} all four problems are evaluated, with $b=5$.
\begin{table}[htb]
\scriptsize  
\centering
    \begin{tabular}{| c || c | c | c | c | c | c |}
    \toprule
 & $blind$ & \multicolumn{5}{c |}{blind+steps}\\     
& & $k=50$ & $k=30$ & $k=20$ & $k=10$ & $k=5$ \\ 
    \midrule
uniform & 200  & 27.4 & 22.6 & 21.9 & 26.8 & 39.8 \\
linear & 10100 & 65.2 & 67.7 & 87.7 & 213 & 657 \\
steep linear & 50500 & 152 & 155 & 175 & 302 & 778 \\
exponential  & $1.5 \times 10^{60}$ & $6.9 \times 10^{12}$ & $1.8 \times 10^{25}$ & $7.6 \times 10^{33}$ & $1.2 \times 10^{45}$ & $2.3 \times 10^{52}$ \\
    \bottomrule
  \end{tabular}
  \vspace{1ex}
  \caption{ Expected number of steps to reach optimality}
  \label{tab:blindstart}
\end{table}
For the $p(j)=1/200$ (uniform) benchmark, the smallest expected number of steps results from switching at $k=20$.  Indeed the minimum occurs when $k=23$.
For the $p(j)=j/10000$ (linear) benchmark the minimum occurs when $k=43$.

For those values of $k$ for which one step improvement with blind search exceeds that of local descent, the lowest value of $k$ is that for which $e_{imp}(k)-en_{imp}(k)$ is smallest.
This value of $k \leq 50$ for each benchmark is:
\begin{alignat*}{1}
uniform &\rightarrow k=23\\ 
linear &\rightarrow k=43\\ 
steep \ linear &\rightarrow k=50\\
exponential &\rightarrow k=50 
\end{alignat*}
It appears, but remains to be proven, that the number of steps is minimised if the switch happens for that value of $k$ for which $e_{imp}(k)-en_{imp}(k)$ is positive and minimised.

\section{The impact of this paper}
This paper justifies the intuition that using neighbourhoods in which neighbours have similar fitness is enough to escape the conditions of the no free lunch theorems \cite{nofreelunch}.
By neighbourhood search we mean selecting a neighbour randomly and evaluating its fitness.  

The ``neighbourhood similar fitness'' (NSF(k)) property, made precise in definition \ref{sec:nsf}, is not only intuitive, but also sufficient to support the proofs that neighbourhood search has a better chance of improving from the current level of fitness than blind search.
Neighbourhood search is proven to be beneficial at any fitness around which the decrease in fitness probability towards the optimum is monotonic. 
Counter-examples in the appendices show that violating each of these conditions can lead to blind search outperforming neighbourhood search.

Not every problem instance, and neighbourhood used for tackling it, {\em completely} satisfies the NSF conditions.
Nevertheless relaxed conditions,  allowing limited violation of the NSF and monotonicity conditions, still support our proof that neighbourhood search outperforms blind search.

Even when neighbourhood search has a higher probability of improvement than blind search, its expected amount of improvement may be less.  However local descent makes up for this and the paper shows how the full NSF property supports sufficient conditions for local descent to be expected to reach a target level of fitness faster than blind search.  

Local descent needs a starting solution, and the paper explores the cost - and benefit - of using blind search to find such a solution.
A hypothesis about the best level of fitness to accept as a starting solution found by blind search is proposed.

When a problem and neighbourhood exceed the minimum conditions we have set for beneficial local descent, the proof shows that the expected number of steps using local descent can be dramatically smaller than blind search. Benchmarks are constructed to show how this difference grows with steeper falls in the fitness probability, and stronger NSF conditions.

An extension of the proof for the case where the average NSF weight is less than one remains an open problem.
Also, 
the same intuitions hold for continuous problems 
whose fitness ranges are infinite.
Establishing these results for the continuous case is a challenge yet to be tackled.

Finally we are investigating an abstract model for local descent in which neighbourhood search can fail.

\section*{Acknowledgements}
This work was partly funded through the Monash University Outside Study Program.
My heartfelt thanks to Aldeida Aleti who came up with the name ``Neighbours Similar Fitness'' (NSF) and explored its potential with me for some two years. Peter Jeavons hosted me while I developed these ideas in Oxford, and made some pertinent critiques of earlier drafts of this work. David Cohen spent many hours with me and discovered the use of averages in the proof of the benefit of local descent.
Finally Markus Wagner came with fresh eyes, and helped clarify the relationship between NSF and function gradients.

\section{Declarations}
\begin{itemize}
    \item[Conflicts of Interest] There are no conflicts of interest involved in the production of
this manuscript.
    \item[] 
\end{itemize}

\bibliographystyle{plain}
\bibliography{aij}

\begin{appendix}

\newpage
\section{Terminology}
The following definitions are used in the paper.
\subsection{Neighbourhood search symbols}
The terms \emph{fitness} and \emph{cost} are synonymous in this paper.
\begin{itemize}
\item $f(s)$ is the fitness (or cost) of a solution $s$ in the search space $S$. 
\item  The finite fitness range of a problem class is $0 \ldots k_{max}$, where $0$ is the optimal fitness, and $k_{max}$ the worst.
\item $k_{mod}$ is the worst fitness below which $p(k)$ is monotonically decreasing.
\item $p(k)$ is the probability of a fitness $k \in 0 \ldots k_{max}$, in a problem class.
\item The probability that a neighbour of a fitness $k_1$ has fitness $k_2$ in a problem class is $pn(k_1,k_2)$.
\end{itemize}

\subsection{Defined terms}
\begin{flalign*}
p(k \pm \delta) =& p(k+\delta)+p(k-\delta)&\\
pn(k, k \pm \delta) =& pn(k,k+\delta)+ pn(k,k-\delta)&\\
k_{ge} =& k_{mod} / 2&\\
p^<(k) =& \sum_{\delta=1}^k p(k-\delta)&\\
pn^<(k) =& \sum_{\delta=1}^k pn(k,k-\delta)&\\
pbr^<(k) =& \sum_{\delta=1}^k p(k-\delta) \times r(k,\delta)&\\
p^>(k) =& \sum_{\delta=1}^{k_{max}-k} p(k+\delta)&\\
pbr^>(k) =& \sum_{\delta=1}^{k_{max}-k} p(k+\delta) \times r(k,\delta)&\\
imp(k) =& 1/p^<(k)&\\
imp_u(k) =& 1/pbr^<(k)&\\
blind =& 1/p(0)&
\end{flalign*}
\begin{flalign*}
steps(k) = &\left.
  \begin{cases}
  0 & \text{if} \  k = 0 \\
  imp(k) + \sum_{i<k} \frac{pn(k,i)}{ pn^<(k) } \times steps(i) & \text{if} \ k > 0
    \end{cases}
    \right\}\\
steps_u(k) = & \left.
  \begin{cases}
  0 & \text{if} \  k = 0 \\
  imp_u(k) + \sum_{i<k} \frac{r(k,i) \times p}{ pbr^<(k)} \times steps_u(i) & \text{if} \ k > 0
    \end{cases}
    \right\}\\
     blsteps(k) = & \ (p(0) \times 1) + \sum_{i=1}^{k-1} p(i) \times (1+steps(i))& \\
     & + (1-\sum_{i=0}^{k-1} p(i)) \times (1 + blsteps(k))& \\
\end{flalign*}

\section{Proof that Neighbourhood Search Is Beneficial}
\label{sec:improve-proof}
Lemma \ref{lemma:improvement}:\\
Assuming
\begin{alignat*}{3}
&k \leq&&  k_{ge} &(GE)\\
&\forall \delta_1 &&< \delta_2 : r(k,\delta1) \geq r(k,\delta2) &\ \ \ \ (NSF) 
\end{alignat*}
it follows that
$$\frac{pbr^<(k)}{pbr^>(k)} \geq \frac{p^<(k}{p^>(k)}$$
\vspace{0.5cm}

\noindent We write $\bar{r}(k) = (1/k) \times \sum_{\delta=1}^k r(k,\delta)$.

\begin{proof} \label{proof:impr}
We first assert two lemmas, proven below.
\begin{equation*}
k \leq k_{ge} \rightarrow \sum_{i=1}^k r(k,i) \times p(k-i) \geq \bar{r}(k) \times \sum_{i=1}^k p(k-i) \ \ {\rm Lemma \ }\ref{lemma:decdec}
\end{equation*}
\begin{equation*}
k \leq k_{ge} \rightarrow \sum_{i=1}^k r(k,i) \times p(k+i) \leq \bar{r}(k) \times \sum_{i>0} p(k+i)\ {\rm Lemma \ } \ref{lemma:allincdec}
\end{equation*}

Using these lemmas the proof is straightforward:
\begin{alignat*}{1}
\ \ \ &\frac{pbr^<(k)}{pbr^>(k)}\\
 \ \ = &\frac{\sum_{i=1}^k p(k-i) \times r(k,i)}{\sum_{i=1}^{k_{max}-k} p(k+i) \times r(k,i)}\\
 \ \ = &\frac{\sum_{i=1}^k p(k-i) \times r(k,i)}{\sum_{i=1}^k p(k+i) \times r(k,i) + \sum_{i=k+1}^{k_{max}-k} p(k+i) \times r(k,i)}\\
 \ \  \geq & \frac{\sum_{i=1}^k p(k-i) \times \bar{r}(k)}{\sum_{i=1}^k p(k+i) \times \bar{r}(k) +  \sum_{i=k+1}^{k_{max}-k} p(k+i) \times \bar{r}(k)} \ \ \rm{by \ lemmas \ \ref{lemma:decdec}, \ and \ \ref{lemma:allincdec}}\\
 \ \  = & \frac{\sum_{i=1}^k p(k-i)}{\sum_{i=1}^{k_{max}-k} p(k+i)}\\
 \ \ = & \frac{p^<(k}{p^>(k)}\\
\end{alignat*}
\end{proof}

Lemma \ref{lemma:pkk}:\\
Assuming
\begin{alignat*}{3}
&pn(k,k) \leq&&  p(k) &(A1)\\
&\frac{pbr^<(k)}{pbr^>(k)} \geq&& \frac{p^<(k}{p^>(k)} &\ \ \ \ (A2) 
\end{alignat*}
it follows that
$$pbr^<(k) \leq p^<(k) $$
\begin{proof}
Since\\
\ \ $pn(k,k)+pbr^<(k)+pbr^>(k) = 1 = p(k)+p^<(k)+p^>(k)$\\
\vspace{0.5ex}\\
$\therefore pbr^<(k) +  pbr^>(k)  = p^<(k) + p^>(k) + (p(k)-pn(k,k))$\\
\vspace{0.5ex}\\
$\therefore pbr^<(k) +  pbr^>(k)  \geq p^<(k) + p^>(k)$ by (A1)\\
\vspace{0.5ex}\\
$\therefore pbr^<(k) \times (1 +  \frac{pbr^>(k)}{pbr^<(k)} ) \geq p^<(k) \times (1 + \frac{p^>(k)}{p^<(k)})$ \\
\vspace{0.5ex}\\
$\therefore pbr^<(k)  \geq p^<(k)$ by (A2)
\end{proof}

\begin{lemma}
\label{lemma:decdec}
Assuming:
\begin{alignat*}{3}
&k \leq&&  k_{ge} &(GE)\\
&\forall \delta_1 &&< \delta_2 : r(k,\delta1) \geq r(k,\delta2) &\ \ \ \ (NSF) 
\end{alignat*}
it follows that
$$pbr^<(k) \geq \bar{r}(k) \times p^<(k)$$
\end{lemma}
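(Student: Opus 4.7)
}
The plan is to recognise this inequality as an instance of Chebyshev's sum inequality applied to two similarly-ordered sequences indexed by $\delta \in \{1,\dots,k\}$. Concretely, I would let $a_\delta = r(k,\delta)$ and $b_\delta = p(k-\delta)$, and show both sequences are monotonically decreasing in $\delta$, so that their correlated sum $\sum_\delta a_\delta b_\delta$ dominates the product of their averages times $k$.

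First I would verify the two monotonicity facts. Monotonicity of $a_\delta$ in $\delta$ is immediate from the NSF assumption $r(k,\delta_1) \geq r(k,\delta_2)$ whenever $\delta_1 < \delta_2$. Monotonicity of $b_\delta$ follows from assumption (GE): since $k \leq k_{ge} = k_{mod}/2 \leq k_{mod}$, and since $\delta$ ranges over $\{1,\dots,k\}$, every argument $k-\delta$ lies in the interval $[0,k-1] \subseteq [0,k_{mod}]$, on which $p$ is monotonically increasing by the definition of $k_{mod}$. Hence as $\delta$ increases, $k-\delta$ decreases and $p(k-\delta) = b_\delta$ decreases, so $a$ and $b$ are similarly ordered.

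Next I would apply Chebyshev's sum inequality, which for two similarly-ordered finite sequences states
\begin{equation*}
k \sum_{\delta=1}^{k} a_\delta b_\delta \;\geq\; \Bigl(\sum_{\delta=1}^{k} a_\delta\Bigr)\Bigl(\sum_{\delta=1}^{k} b_\delta\Bigr).
\end{equation*}
Dividing through by $k$ and substituting the definitions of $\bar{r}(k)$, $pbr^<(k)$ and $p^<(k)$ yields the claim
\begin{equation*}
pbr^<(k) \;=\; \sum_{\delta=1}^{k} r(k,\delta)\, p(k-\delta) \;\geq\; \bar{r}(k) \cdot p^<(k).
\end{equation*}

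If a self-contained argument is preferred over citing Chebyshev, the standard route is to expand the nonnegative double sum $\sum_{\delta_1,\delta_2=1}^{k}\bigl(r(k,\delta_1)-r(k,\delta_2)\bigr)\bigl(p(k-\delta_1)-p(k-\delta_2)\bigr) \geq 0$, whose nonnegativity is guaranteed term-by-term by the common ordering of the two sequences, and then rearrange the resulting expression into exactly the desired inequality. I do not expect any step here to be a genuine obstacle; the only subtle point is ensuring that the range of $\delta$ keeps $k-\delta$ inside the monotonicity zone of $p$, which is precisely what hypothesis (GE) guarantees (and indeed here only $k \leq k_{mod}$ is used, with the stronger $k \leq k_{ge}$ being needed only for the companion Lemma \ref{lemma:allincdec}).
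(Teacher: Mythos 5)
Your proposal is correct and takes essentially the same approach as the paper: the paper's own proof is a hand-rolled Chebyshev-type argument, choosing a pivot index $x$ (the largest $\delta$ with $r(k,\delta) \geq \bar{r}(k)$) and observing that $(p(k-\delta)-p(k-x))(r(k,\delta)-\bar{r}(k)) \geq 0$ for every $\delta$ because the two sequences are similarly ordered, which is exactly the correlation inequality you invoke by name. Your side remark that only $k \leq k_{mod}$ is really used here (with the full strength of $k \leq k_{ge}$ reserved for Lemma \ref{lemma:allincdec}) matches how the hypotheses are actually consumed in the paper's proofs.
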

\begin{proof}
For brevity, we write $p_\delta$ for $p(k-\delta)$, $r_\delta$ for $r(k,\delta)$ and $\bar{r}$ for $\bar{r}(k)$.\\
Since $r_\delta$ is decreasing with increasing $\delta$, we set $x$ to be the largest index for which $r_\delta \geq \bar{r}$.\\
By the definition of $k_{ge}$, since $\forall \delta : p_\delta \leq p(k) \leq k_{ge}$, then $p_\delta$ is also decreasing with increasing $\delta$ so\\
\vspace{0.5ex}\\
$\delta \leq x \rightarrow (p_\delta - p_x) \times (r_\delta - \bar{r}) \geq 0$ because $p_\delta-p_x$ and $r_\delta - \bar{r}$ are both positive\\
\vspace{0.5ex}\\
$\delta > x \rightarrow (p_\delta - p_x) \times (r_\delta - \bar{r}) \geq 0$ because $p_\delta-p_x$ and $r_\delta - \bar{r}$ are both negative\\
\vspace{0.5ex}\\
Therefore:
\begin{equation*}
\begin{split}
    & \sum_{\delta=1}^k p_\delta \times r_\delta \\
  = & \sum_{\delta=1}^x (p_\delta - p_x) \times (r_\delta - \bar{r}) + \sum_{\delta=x+1}^k (p_\delta - p_x) \times (r_\delta - \bar{r})\\
    & + \sum_{\delta=1}^k p_\delta \times \bar{r} + \sum_{\delta=1}^k p_x \times r_\delta + \sum_{\delta=1}^k p_x \times \bar{r} \\
  > & \sum_{\delta=1}^k p_\delta \times \bar{r}
    \end{split}
\end{equation*}
\end{proof}

\begin{lemma}
Assuming:
\begin{alignat*}{3}
&k \leq&&  k_{ge} &(GE)\\
&\forall \delta_1 &&< \delta_2 : r(k,\delta1) \geq r(k,\delta2) &\ \ \ \ (NSF) 
\end{alignat*}
it follows that
$$pbr^>(k) \leq \bar{r} \times p^>(k)$$
\label{lemma:allincdec}
\end{lemma}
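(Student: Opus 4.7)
}
The plan is to mirror the proof of Lemma \ref{lemma:decdec}, but with the opposite sign, exploiting the fact that this time the monotonicity of $p$ runs against the monotonicity of $r$ instead of with it. Concretely, for $\delta$ in the range $1 \ldots k$, Condition (GE) gives $k+\delta \leq 2k \leq k_{mod}$, so by the definition of $k_{mod}$ the sequence $p(k+\delta)$ is \emph{increasing} in $\delta$, while by (NSF) the weights $r(k,\delta)$ are \emph{decreasing}. Thus the Chebyshev-style algebraic trick used in Lemma \ref{lemma:decdec} will produce $\leq 0$ in place of $\geq 0$.

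First I would split
\begin{equation*}
pbr^>(k) \;=\; \underbrace{\sum_{\delta=1}^{k} p(k+\delta)\,r(k,\delta)}_{S_1} \;+\; \underbrace{\sum_{\delta=k+1}^{k_{max}-k} p(k+\delta)\,r(k,\delta)}_{S_2},
\end{equation*}
where $S_2$ is taken to be $0$ when $k_{max}-k\le k$. For $S_1$, write $q_\delta = p(k+\delta)$ and let $x$ be the largest index in $1\ldots k$ with $r(k,\delta)\ge\bar{r}(k)$ (exactly as in Lemma \ref{lemma:decdec}). Then for every $\delta\le x$, the factor $(r(k,\delta)-\bar{r}(k))$ is nonnegative while $(q_\delta-q_x)$ is nonpositive (since $q$ is increasing and $\delta\le x$); for $\delta>x$ both signs flip. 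Hence each product $(q_\delta-q_x)(r(k,\delta)-\bar{r}(k))$ is $\le 0$, and summing gives
\begin{equation*}
\sum_{\delta=1}^{k} (q_\delta-q_x)(r(k,\delta)-\bar{r}(k)) \;=\; S_1 \;-\; \bar{r}(k)\sum_{\delta=1}^{k}p(k+\delta) \;\le\; 0,
\end{equation*}
the cancellation of the cross terms being identical to that in Lemma \ref{lemma:decdec}. This yields $S_1 \le \bar{r}(k)\sum_{\delta=1}^{k} p(k+\delta)$.

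For $S_2$ I would simply invoke the monotonicity of $r$: since $r(k,\cdot)$ is nonincreasing and $\bar{r}(k)$ is the average of its first $k$ values, the last of those values satisfies $r(k,k)\le \bar{r}(k)$, and therefore $r(k,\delta)\le r(k,k)\le\bar{r}(k)$ for every $\delta\ge k$. Hence term-by-term $p(k+\delta)\,r(k,\delta)\le \bar{r}(k)\,p(k+\delta)$, so $S_2\le \bar{r}(k)\sum_{\delta=k+1}^{k_{max}-k} p(k+\delta)$. Adding the bounds on $S_1$ and $S_2$ produces the desired inequality $pbr^>(k)\le \bar{r}(k)\,p^>(k)$.

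The step I expect to be the main obstacle is the treatment of $S_2$: unlike in Lemma \ref{lemma:decdec}, whose sum is naturally truncated at $\delta=k$ because $p(k-\delta)$ is undefined beyond that point, the worse-side sum can run all the way up to $\delta=k_{max}-k$, which may exceed $k$. There the monotonicity of $p(k+\delta)$ is no longer guaranteed (once $k+\delta$ passes $k_{mod}$), so the Chebyshev argument cannot be extended directly. The fix is the coarse but sufficient bound $r(k,\delta)\le\bar{r}(k)$ for $\delta\ge k$, which requires only the NSF monotonicity of $r$ and is therefore available at no extra cost.
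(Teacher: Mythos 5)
Your proposal is correct and follows essentially the same route as the paper, which establishes this lemma by combining Lemma \ref{lemma:incdec} (the Chebyshev-style argument on $\delta \in 1 \ldots k$, where $p(k+\delta)$ increases by (GE) while $r(k,\delta)$ decreases by (NSF)) with Lemma \ref{lemma:bargtm} (the coarse bound $r(k,\delta) \leq r(k,k) \leq \bar{r}(k)$ for the tail $\delta > k$). Your sign analysis in the Chebyshev step is in fact cleaner than the paper's, which contains a sign typo in the case $\delta > x$.
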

Lemma \ref{lemma:allincdec} is a direct consequences of the following two lemmas: \ref{lemma:incdec} and \ref{lemma:bargtm}.
The definition and proof of lemma \ref{lemma:incdec} and lemma \ref{lemma:bargtm} follow.

\begin{lemma}
\label{lemma:incdec}
Assuming:
\begin{alignat*}{3}
&k \leq&&  k_{ge} &(GE)\\
&\forall \delta_1 &&< \delta_2 : r(k,\delta1) \geq r(k,\delta2) &\ \ \ \ (NSF) 
\end{alignat*}
it follows that
$$ \sum_{\delta=1}^k p(k+\delta) \times r(k,\delta)  \leq \bar{r}(k) \times \sum_{\delta=1}^k p(k+\delta) $$
\end{lemma}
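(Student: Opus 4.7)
The plan is to mimic the rearrangement argument used in lemma \ref{lemma:decdec}, with the key sign-flip coming from the fact that $p(k+\delta)$ is monotonically \emph{increasing} in $\delta$ rather than decreasing. In effect this is a Chebyshev-style sum inequality: one sequence $(r(k,\delta))_{\delta=1}^{k}$ is non-increasing and the other $(p(k+\delta))_{\delta=1}^{k}$ is non-decreasing, so their ``inner product'' is bounded above by the product of their averages (times $k$).

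First I would establish that $p(k+\delta)$ really is non-decreasing on $\delta \in 1,\ldots,k$. This uses condition (GE): since $k \leq k_{ge} = k_{mod}/2$, for every $\delta$ with $1 \leq \delta \leq k$ we have $k+\delta \leq 2k \leq k_{mod}$, so by the definition of $k_{mod}$ the values $p(k+1), p(k+2), \ldots, p(k+k)$ form a non-decreasing sequence. By the (NSF) hypothesis, $r(k,\delta)$ is non-increasing in $\delta$. Writing $q_\delta = p(k+\delta)$, $r_\delta = r(k,\delta)$ and $\bar r = \bar r(k)$, the claim becomes
\begin{equation*}
\sum_{\delta=1}^{k} q_\delta (r_\delta - \bar r) \;\leq\; 0.
\end{equation*}

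Next I would choose the pivot $x$ to be the largest index in $\{1,\ldots,k\}$ with $r_x \geq \bar r$; by monotonicity of $r$, $r_\delta \geq \bar r$ for $\delta \leq x$ and $r_\delta \leq \bar r$ for $\delta > x$. The rearrangement identity
\begin{equation*}
\sum_{\delta=1}^{k} q_\delta (r_\delta - \bar r)
\;=\; \sum_{\delta=1}^{k}(q_\delta - q_x)(r_\delta - \bar r) \;+\; q_x \sum_{\delta=1}^{k}(r_\delta - \bar r)
\end{equation*}
reduces the problem to two observations. The second sum on the right vanishes because $\sum_{\delta=1}^{k} r_\delta = k\bar r$ by the definition of $\bar r$. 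For the first sum, I would split at $x$: when $\delta \leq x$ we have $q_\delta - q_x \leq 0$ (since $q$ is non-decreasing) and $r_\delta - \bar r \geq 0$, so the product is $\leq 0$; when $\delta > x$ the signs of both factors flip, so the product is again $\leq 0$. Hence every term is non-positive and the whole sum is non-positive, giving the required inequality.

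There is no real obstacle here; the one point that needs care is the direction of monotonicity, which is opposite to that in lemma \ref{lemma:decdec} (we are now looking at worsening rather than improving fitness). This opposite direction is precisely what converts the lower bound $\geq \bar r(k)\, p^<(k)$ obtained there into the upper bound $\leq \bar r(k)\, p^>(k)$ needed here, so the two lemmas fit together exactly as required by lemma \ref{lemma:allincdec} (modulo the separate argument, to be handled by lemma \ref{lemma:bargtm}, that extends the sum from $\delta=1,\ldots,k$ to $\delta=1,\ldots,k_{max}-k$).
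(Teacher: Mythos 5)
Your proof is correct and takes essentially the same route as the paper's: the same pivot index $x$ (the last $\delta$ with $r(k,\delta)\geq\bar{r}(k)$), the same decomposition into $(p_{\delta}-p_{x})(r_{\delta}-\bar{r})$ terms whose cross terms cancel because $\sum_{\delta=1}^{k}r(k,\delta)=k\,\bar{r}(k)$, and the same sign analysis resting on the fact that $p(k+\delta)$ is non-decreasing for $\delta\in 1,\ldots,k$ since $k+\delta\leq 2k\leq k_{mod}$. If anything your version is the cleaner of the two, as the paper's displayed identity carries sign slips in the (cancelling) cross terms and in one of the two sign-case statements.
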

\begin{proof}
For brevity, we write $p_\delta$ for $p(k+\delta)$, $r_\delta$ for $r(k,\delta)$ and $\bar{r}$ for $\bar{r}(k)$.\\
By the definition of $k_{ge}=k_{mod}/2$, since $\forall \delta : p_\delta \leq p(k_{mod})$, then $p_\delta$ is increasing with increasing $\delta$ so\\
\vspace{0.5ex}\\
$\delta \leq x \rightarrow (p_\delta - p_x) \times (r_\delta - \bar{r}) \leq 0$ because only $p_\delta-p_x$ is negative\\
\vspace{0.5ex}\\
$\delta > x \rightarrow (p_\delta - p_x) \times (r_\delta - \bar{r}) \geq 0$ because only $r_\delta - \bar{r}$ is negative\\
\vspace{0.5ex}\\
Therefore:
\begin{equation*}
\begin{split}
    & \sum_{\delta=1}^k p_\delta \times r_\delta \\
  = & \sum_{\delta=1}^x (p_\delta - p_x) \times (r_\delta - \bar{r}) + \sum_{\delta=x+1}^k (p_\delta - p_x) \times (r_\delta - \bar{r})\\
    & + \sum_{\delta=1}^k p_\delta \times \bar{r} + \sum_{\delta=1}^k p_x \times r_\delta + \sum_{\delta=1}^k p_x \times \bar{r} \\
  \leq & \sum_{\delta=1}^k p_\delta \times \bar{r}
    \end{split}
\end{equation*}

\end{proof}

\begin{lemma}
\label{lemma:bargtm}
Assuming:
\begin{alignat*}{3}
&\forall \delta_1 &&< \delta_2 : r(k,\delta1) \geq r(k,\delta2) &\ \ \ \ (NSF) 
\end{alignat*}
it follows that
$$\sum_{\delta>k}r(k,\delta) \times p(k+\delta) \leq \bar{r}(k) \times \sum_{\delta>k} p(k+\delta)$$
\end{lemma}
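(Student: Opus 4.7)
The plan is to exploit the monotonicity of $r(k,\cdot)$ together with a simple averaging observation: since $\bar{r}(k)$ is the mean of $r(k,1),\ldots,r(k,k)$, and the sequence $r(k,\delta)$ is nonincreasing in $\delta$, the last term of this average, $r(k,k)$, must lie at or below the average itself. That single inequality, combined with one further step of monotonicity, pins down every weight in the tail $\delta > k$.

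Concretely, I would proceed in three short steps. First, note that by NSF, $r(k,k) \leq r(k,\delta')$ for every $\delta' \in \{1,\ldots,k\}$, so $r(k,k)$ is the minimum of the values being averaged to form $\bar{r}(k)$; hence
\begin{equation*}
r(k,k) \;\leq\; \bar{r}(k).
\end{equation*}
Second, extend this bound into the tail: for any $\delta > k$ we have $\delta \geq k+1 > k$, and NSF gives $r(k,\delta) \leq r(k,k)$. Chaining these yields $r(k,\delta) \leq \bar{r}(k)$ for every $\delta > k$.

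Third, multiply through by the nonnegative probabilities $p(k+\delta)$ and sum over $\delta > k$:
\begin{equation*}
\sum_{\delta>k} r(k,\delta)\, p(k+\delta) \;\leq\; \sum_{\delta>k} \bar{r}(k)\, p(k+\delta) \;=\; \bar{r}(k) \sum_{\delta>k} p(k+\delta),
\end{equation*}
which is the desired inequality.

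There is no real obstacle here; the lemma is essentially a one-line consequence of the fact that the minimum of a finite list is bounded above by its mean. The only thing worth double-checking is the indexing convention, namely that $\bar{r}(k)$ is defined as the average of $r(k,\delta)$ over $\delta \in \{1,\ldots,k\}$ (so that $r(k,k)$ is indeed among the averaged values) and that NSF extends to indices $\delta > k$ in the form stated; both are directly read off from the definitions earlier in the paper. Note also that no assumption on $p$ is needed beyond nonnegativity, in contrast with Lemma \ref{lemma:incdec}, which is why GE does not appear in the hypotheses of this lemma.
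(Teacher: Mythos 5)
Your proposal is correct and follows essentially the same route as the paper's own proof: both arguments note that $\bar{r}(k) \geq r(k,k)$ because $r(k,k)$ is the minimum of the averaged values, extend this via monotonicity to $r(k,\delta) \leq r(k,k) \leq \bar{r}(k)$ for all $\delta > k$, and then sum against the nonnegative probabilities $p(k+\delta)$. Your added remark that no GE-type hypothesis on $p$ is needed matches the paper's observation that Lemma \ref{lemma:incdec} does not apply here precisely because $p(k+\delta)$ need not be increasing in this range.
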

\begin{proof}
In this case $p(k+\delta)$ is not necessarily increasing with $\delta$, so lemma \ref{lemma:incdec} cannot apply.
However, since $r(k,\delta)$ is decreasing,\\ 
$\bar{r}(k) \geq r(k,k)$, and $\delta>k \rightarrow r(k,k) \geq r(k,\delta)$.
Therefore $$\sum_{\delta>k} p(k+\delta) \times r(k,\delta)  \leq \sum_{\delta>k} p(k+\delta) \times r(k,k) \leq \bar{r}(k) \times \sum_{\delta>k} p(k+\delta)$$

\begin{lemma}
Strict improvement:
if $k_{max}>k_{mod}$ and $r(k,1)>1$ then the consequence of theorem \ref{thm:improvement} is strict.\\
Assuming:
\begin{flalign*}
&k_{max} > k_{mod}&  &\\
&r(k,1) > 1&  &\\
&k \leq  k_{ge}& &(GE)\\
&\forall \delta_1 <\delta_2 : r(k,\delta1) \geq r(k,\delta2)& &(NSF(k)) \\
&\forall \delta : pn(k,k-\delta) \geq r(k,\delta) \times p(k-\delta)& &\\
&p(k) \geq pn(k,k) & &
\end{flalign*}
it follows that
$$pn^<(k) > p^<(k)$$
\label{lemma:strict}
\end{lemma}
We prove that if $r(k,1)>1$ the consequence of lemma \ref{lemma:bargtm} is strict, and the result follows.
If $r(k,1)>1$, then 
\begin{itemize}
    \item[either] $r(k,1) > \bar{r}(k) > r(k,k)$ 
     \item[or] $r(k,1)=r(k,k)=\bar{r}(k) > 1$.
\end{itemize}
In the first case:
$$\sum_{\delta=1}^k p(k+\delta) \times r(k,\delta) < \sum_{\delta>k} p(k+\delta) \times r(k,k) \leq \bar{r}(k) \times \sum_{\delta>k} p(k+\delta)$$ 
so the inequality is strict.
In the second case: 
$$\sum_{\delta \leq k} p(k \pm \delta) \times r(k,\delta) = \sum_{\delta \leq k} p(k \pm \delta) \times \bar{r}(k) > \sum_{\delta \leq k} p(k \pm \delta)$$
We know that:
$$\sum_{\delta>k} p(k+\delta) \times r(k,\delta) + \sum_{\delta \leq k} p(k \pm \delta) \times r(k,\delta) = \sum_{\delta>k} p(k+\delta) + \sum_{\delta \leq k} p(k \pm \delta)$$
Therefore 
$$\sum_{\delta>k} p(k+\delta) \times r(k,\delta) + \sum_{\delta \leq k} p(k \pm \delta) < \sum_{\delta>k} p(k+\delta) + \sum_{\delta \leq k} p(k \pm \delta)$$
So we conclude that:\\
$$\sum_{\delta>k} p(k+\delta) \times r(k,\delta) < \sum_{\delta>k} p(k+\delta)$$
In this case the inequality is again strict.
\end{proof}

\section{Proof of the benefit of local descent}
\label{sec:localdescentproof}

\subsection{Proof for uniform fitness probabilities}
Lemma \ref{lemma:localdescent} states that for any $k$ where \\
$\bar{r}(k) \geq 1$\\
$\forall \delta_1 \leq k_1, \delta_1 \leq \delta_2, \delta_2 \leq k_2, k_2 \leq k :  r(k_1,\delta_1) \geq r(k_2,\delta_2)$ \ (full NSF)\\
it \ follows \ that 
\begin{flalign*}
\ \ \ steps_u(k)& \leq blind&
\end{flalign*}
Writing $imp_u(k) = \frac{1}{p \times \sum_{\delta=1}^k r(k,\delta)}$

\[
steps_u(k) = \left.
  \begin{cases}
  0 & \text{if} \  k = 0 \\
  imp_u(k) + \sum_{i<k} \frac{p \times r(k,i)}{ p \times \sum_{j<k} r(k,j)} \times steps_u(i) & \text{if} \ k > 0
    \end{cases}
    \right\}
\]
The proof uses two sublemmas: lemma \ref{lemma:gandt} and lemma \ref{lemma:main}.
\begin{lemma}
Steps Upper Bound\\ 
Assuming $\bar{r}(k) \geq 1$
then
 $imp_u(k) \times k  \leq blind$
\label{lemma:gandt}
\end{lemma}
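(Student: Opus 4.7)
The plan is to unfold the definitions and observe that the claim reduces to a single inequality involving $\bar{r}(k)$. By definition $imp_u(k) = 1/(p \cdot \sum_{\delta=1}^{k} r(k,\delta))$, and by definition of the average NSF weight $\sum_{\delta=1}^{k} r(k,\delta) = k \cdot \bar{r}(k)$. Also, since the lemma is stated in the uniform setting where $p(k') = p$ for all $k' \le k$ (in particular $p(0)=p$), we have $blind = 1/p(0) = 1/p$.

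Substituting these into the target inequality, I would compute
\[
imp_u(k) \times k \;=\; \frac{k}{p \cdot \sum_{\delta=1}^{k} r(k,\delta)} \;=\; \frac{k}{p \cdot k \cdot \bar{r}(k)} \;=\; \frac{1}{p \cdot \bar{r}(k)}.
\]
The hypothesis $\bar{r}(k) \ge 1$ immediately yields $1/(p \cdot \bar{r}(k)) \le 1/p = blind$, which is exactly the conclusion.

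So there is essentially no obstacle: the lemma is just algebraic rearrangement plus the single use of $\bar{r}(k) \ge 1$. The only thing worth being careful about is remembering that we are in the uniform-fitness regime of $steps_u$, so $p(0) = p$ and hence $blind = 1/p$; without that the identification of $blind$ with $1/p$ would not be available, and the rest of the argument would not line up. Once that is noted, the proof is a two-line calculation to be included verbatim.
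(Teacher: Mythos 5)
Your proposal is correct and is essentially identical to the paper's proof: both unfold $imp_u(k)$ and $blind$, use $\sum_{\delta=1}^{k} r(k,\delta) = k\,\bar{r}(k)$, and finish with the single application of $\bar{r}(k)\geq 1$. The only difference is the cosmetic one of which side of the inequality you start rearranging from.
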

\begin{proof}
By definition:
\begin{flalign*}
blind &= 1/p&
\end{flalign*}
also
\begin{flalign*}
imp_u(k) &= 1 / (p \times \sum_{0 \leq i<k} r(k,i))&
\end{flalign*}
Therefore:
    \begin{flalign*}
       blind & = 1/p & \\
       & = imp_u(k) \times \sum_{0 \leq i<k} r(k,i)& \\
       & = imp_u(k) \times \bar{r}(k) \times k&\\
       & \geq imp_u(k) \times k &\\
    \end{flalign*}
\end{proof}

\begin{lemma}[Fixed Count Steps]
Assuming $\bar{r}(k) \geq 1$\\ 
$\forall \delta_1 \leq k_1, \delta_1 \leq \delta_2, \delta_2 \leq k_2, k_2 \leq k :  r(k_1,\delta_1) \geq r(k_2,\delta_2)$ \ (full NSF)\\
then
\begin{equation*}
 steps_u(k) \leq k \times imp_u(k) 
\end{equation*}
\label{lemma:main}
\end{lemma}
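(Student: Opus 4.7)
The plan is to prove the bound $steps_u(k) \leq k \cdot imp_u(k)$ by strong induction on $k$. The base case $k=0$ is trivial since $steps_u(0)=0$. For the inductive step, I would unfold the recurrence
\begin{equation*}
steps_u(k) \;=\; imp_u(k) \;+\; \sum_{\delta=1}^{k}\frac{r(k,\delta)}{S_k}\,steps_u(k-\delta),
\end{equation*}
where $S_k=\sum_{\delta=1}^{k}r(k,\delta)$ and $imp_u(k)=1/(p\,S_k)$. I would peel off the $\delta=k$ term separately, which contributes $0$ because $steps_u(0)=0$ (this sidesteps the undefined $imp_u(0)$), and then apply the inductive hypothesis $steps_u(i)\leq i\,imp_u(i)=i/(p\,S_i)$ to each remaining $0<i<k$.

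Next I would use the full NSF assumption to replace $S_i=\sum_{j=1}^{i}r(i,j)$ by quantities depending only on $r(k,\cdot)$: since $r(i,j)\geq r(k,j)$ whenever $j\leq i\leq k$, one has $S_i\geq A_i$, where $A_m:=\sum_{j=1}^{m}r(k,j)$. Hence the inductive hypothesis strengthens to $steps_u(i)\leq i/(p\,A_i)$. Substituting this into the unfolded recurrence and multiplying through by $p\,S_k$, the target $steps_u(k)\leq k\,imp_u(k)$ reduces to the purely algebraic claim: for any nonincreasing nonnegative sequence $a_1\geq a_2\geq\cdots\geq a_k$ with partial sums $A_m=a_1+\cdots+a_m$,
\begin{equation*}
\sum_{j=1}^{k-1}\frac{(k-j)\,a_j}{A_{k-j}} \;\leq\; k-1.
\end{equation*}

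This algebraic inequality is the main obstacle I expect: individual terms can easily exceed $1$ (for instance when $j$ is close to $k-1$, $a_j$ is the largest element while $A_{k-j}$ is a short partial sum), so any termwise estimate is too weak. The trick I would use is to introduce the running averages $b_m:=A_m/m$ and rewrite the sum as $\sum_{j=1}^{k-1}a_j/b_{k-j}$. Because the sequence $(a_j)$ is nonincreasing, so is the sequence of running averages $(b_m)$; in particular $b_{k-j}\geq b_{k-1}$ for every $j\geq 1$. Pulling the smallest denominator out uniformly then gives
\begin{equation*}
\sum_{j=1}^{k-1}\frac{a_j}{b_{k-j}} \;\leq\; \frac{1}{b_{k-1}}\sum_{j=1}^{k-1}a_j \;=\; \frac{A_{k-1}}{b_{k-1}} \;=\; k-1,
\end{equation*}
which closes the induction. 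Equality holds precisely when $(a_j)$ is constant, which is reassuring since in that case one can verify directly that $steps_u(k)=k\,imp_u(k)$.
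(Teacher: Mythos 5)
Your proposal is correct and takes essentially the same route as the paper: the same induction, the same substitution of $i\cdot imp_u(i)$ for $steps_u(i)$, and the same two monotonicity ingredients --- the cross-level monotonicity of $r$ from full NSF, and the fact that running averages of a nonincreasing sequence are nonincreasing --- used to bound every term by the uniform constant $(k-1)/A_{k-1}$ so that the sum collapses to $k-1$. The only difference is the order in which the two ingredients are applied (the paper packages them together as Lemma \ref{lemma:rmi}, bounding $i/S_i$ directly by $(k-1)/\sum_{\delta=1}^{k-1}r(k,\delta)$, whereas you first pass from $S_i$ to $A_i$ and then run the running-average argument entirely at level $k$), which is a cosmetic reordering.
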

\begin{proof} 
The proof is by induction, using the definition of $steps(k)$ to make the inductive step.

Firstly, we establish the base case $k=1$.
In this case lemma (\ref{lemma:main}) holds because 
\begin{equation*}
    \begin{split}
steps_u(1) &= imp_u(1) + \frac{ r(1,1) \times steps_u(0)}{r(1,1)} \\
                 &= imp_u(1)
    \end{split}
\end{equation*}

For induction we assume lemma (\ref{lemma:main}) holds for all $steps_u(i): i < k$.
To prove it holds for $k$, we substitute 
$(i) \times imp_u(i)$ for $steps_u(i)$ in the definition of $steps_u(k)$. Since $steps_u(i)$ occurs positively in this definition, our inductive assumption ensures this yields an expression greater than $steps_u(k)$.
 
\begin{equation*}
\begin{split}
steps_u(k) & = imp_u(k)\times ( 1 + \sum_{i=1}^{k-1} (p \times r(k,k-i) \times steps_u(i) ))\\
& \leq imp_u(k) \times ( 1 + \sum_{i=1}^{k-1} (p \times r(k,k-i) \times imp_u(i) \times i))\\
& = imp_u(k) \times ( 1 + \sum_{i=1}^{k-1} (p \times r(k,k-i) \times \frac{(i)}{\sum_{\delta=1}^{i} r(i,\delta) \times p}))\\
& = imp_u(k) \times ( 1 + \sum_{i=1}^{k-1} (r(k,k-i) \times \frac{(i)}{\sum_{\delta=1}^{i} r(i,\delta)}))\\
&\rm{by \ \ lemma \ \ (\ref{lemma:rmi}), \ below}\\
& \leq imp_u(k) \times ( 1 + \sum_{i=1}^{k-1} r(k,k-i) \times \frac{((k-1)-k_{opt})}{\sum_{\delta=1}^{(k-1)-k_{opt}} r(k,\delta)})\\
& = imp_u(k) \times ( 1 + \sum_{i=1}^{(k-1)-k_{opt}} r(k,i) \times \frac{((k-1)-k_{opt})}{\sum_{\delta=1}^{(k-1)-k_{opt}} r(k,\delta)})\\
& = imp_u(k) \times ( 1 + (k-1) - k_{opt} ) \\
& = (k-k_{opt}) \times imp_u(k)
\end{split}
\end{equation*}
\end{proof}

\begin{lemma}
\label{lemma:rmi}
$$\forall i<(k-1) : 
\frac{((k-1)-k_{opt})}{\sum_{\delta=1}^{(k-1)-k_{opt}} r(k,\delta)} \geq \frac{(i)}{\sum_{\delta=1}^{ i} r(i,\delta)}$$
\end{lemma}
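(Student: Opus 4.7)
The plan is to clear denominators and reduce the claim to an inequality about averages of a single decreasing sequence. Since all quantities are positive, the stated inequality is equivalent to
\[
\frac{1}{i}\sum_{\delta=1}^{i} r(i,\delta) \;\geq\; \frac{1}{k-1}\sum_{\delta=1}^{k-1} r(k,\delta),
\]
using $k_{opt}=0$. I will prove this by inserting the intermediate quantity $\frac{1}{i}\sum_{\delta=1}^{i} r(k,\delta)$ and bounding each side separately.

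The first step is to show $\frac{1}{i}\sum_{\delta=1}^{i} r(i,\delta) \geq \frac{1}{i}\sum_{\delta=1}^{i} r(k,\delta)$. Since $i < k-1 \leq k$, the second clause of full NSF (Definition~\ref{def:fullnsf}) gives $r(i,\delta) \geq r(k,\delta)$ for every $\delta \leq i$, so the inequality holds termwise. This step uses full NSF essentially and is the place where the monotonicity of $r$ in its \emph{first} argument is consumed.

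The second step is to show $\frac{1}{i}\sum_{\delta=1}^{i} r(k,\delta) \geq \frac{1}{k-1}\sum_{\delta=1}^{k-1} r(k,\delta)$, i.e.\ that the prefix average of a non-increasing sequence is at least its longer prefix average. Writing
\[
\frac{1}{k-1}\sum_{\delta=1}^{k-1} r(k,\delta) \;=\; \frac{i}{k-1}\cdot\frac{1}{i}\sum_{\delta=1}^{i} r(k,\delta) \;+\; \frac{k-1-i}{k-1}\cdot\frac{1}{k-1-i}\sum_{\delta=i+1}^{k-1} r(k,\delta),
\]
exhibits the longer average as a convex combination of the shorter prefix average and the tail average. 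Because $r(k,\cdot)$ is non-increasing in its second argument (the NSF property), the tail average is bounded above by $r(k,i+1)$, which is itself bounded above by the prefix average $\frac{1}{i}\sum_{\delta=1}^{i} r(k,\delta)$. Thus the convex combination is dominated by its first summand, establishing the desired inequality; chaining it with the first step gives the lemma.

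I do not expect any real obstacle here: the argument is a clean ``average extends, average decreases'' fact for non-increasing sequences, plus one termwise application of full NSF. The only care needed is to make sure the hypothesis of full NSF is applied in the correct direction, namely $r(i,\delta)\geq r(k,\delta)$ when the \emph{first} argument decreases (from $k$ to $i$) while the second is held fixed in the valid range $\delta\leq i$; this is exactly the statement of Definition~\ref{def:fullnsf} with $k_1=k$, $k_2=i$.
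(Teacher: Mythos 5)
Your proof is correct and follows essentially the same route as the paper's: both chain the ``prefix average of a non-increasing sequence dominates its longer prefix average'' fact with the full-NSF monotonicity of $r$ in its first argument. The only difference is that you pivot through $\frac{1}{i}\sum_{\delta=1}^{i}r(k,\delta)$ while the paper pivots through averages of $r(k-1,\cdot)$, which costs it a second application of full NSF; your choice is marginally cleaner but not a different argument.
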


\begin{proof}
By $NSF(k-1)$, $r(k-1,\delta)$ is decreasing with increasing $\delta$, and accordingly the average
$\sum_{\delta=1}^n r(k-1,\delta)/n$ is also decreasing with increasing $n$.

Consequently since $(k-1) > i $
$$\frac{\sum_{\delta=1}^{(k-1)-k_{opt}} r(k-1,\delta)}{(k-1)-k_{opt}} \leq \frac{\sum_{\delta=1}^{i} r(k-1,\delta)}{i}$$
By full NSF, if $k \geq i$ then $r(k,\delta) \leq r(i,\delta)$, and accordingly 
$$\frac{\sum_{\delta=1}^{i} r(k-1,\delta)}{i} \leq \frac{\sum_{\delta=1}^{i} r(i,\delta)}{i}$$
Therefore 
$$\frac{(k-1)-k_{opt}}{\sum_{\delta=1}^{(k-1)-k_{opt}} r(k-1,\delta)} \geq \frac{(i)}{\sum_{\delta=1}^{ i} r(i,\delta)}$$
and finally, using full NSF again, $r(k-1,\delta) \geq r(k,\delta)$ and therefore:\\
$$\frac{(k-1)-k_{opt}}{\sum_{\delta=1}^{(k-1)-k_{opt}} r(k,\delta)} \geq \frac{(i)}{\sum_{\delta=1}^{ i} r(i,\delta)}$$
\end{proof}
Lemma \ref{lemma:localdescent} is an immediate consequence of lemma \ref{lemma:gandt} and lemma \ref{lemma:main}.

\subsection{Increasing the number of points}
We now prove that, under the assumptions of lemma \ref{lemma:localdescent} local search is beneficial for any problem class $s$ where $\forall k2 \leq k: p(k2) \geq p(0)=p$.
Let us write $steps_s(k)$ for the value of $steps(k)$ in this problem class.

Firstly assuming $pn$ is boosting, lemma \ref{lemma:gandt} implies that
$k \times imp(k) \leq blind$.

\begin{lemma}[Reduced Steps]
Assume $\bar{r}(k) \geq 1$\\
$\forall i \leq k : p(i) \geq p(0) = p$\\
$\forall \delta_1 \leq k_1, \delta_1 \leq \delta_2, \delta_2 \leq k_2, k_2 \leq k :  r(k_1,\delta_1) \geq r(k_2,\delta_2)$ \ (full NSF)\\
$\forall k_2 \leq k, \delta : pn(k_2,k_2-\delta) \geq r(k_2,\delta) \times p(k_2-\delta)$ (normal)\\ 
it follows that\\
$steps_s(k) \leq steps_u(k)$
\end{lemma}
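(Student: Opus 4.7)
The plan is to proceed by induction on $k$, with inductive hypothesis $steps_s(j) \leq steps_u(j)$ for all $j < k$. The base case $k = 0$ is immediate since both sides equal $0$.

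For the inductive step, I would start from the defining recurrence
\[
steps_s(k) \cdot pn_s^<(k) \;=\; 1 + \sum_{j<k} pn_s(k,j) \cdot steps_s(j),
\]
and apply the inductive hypothesis to bound the right-hand side by $1 + \sum_{j<k} pn_s(k,j) \cdot steps_u(j)$. It then remains to show that
\[
1 + \sum_{j<k} pn_s(k,j) \cdot steps_u(j) \;\leq\; pn_s^<(k) \cdot steps_u(k),
\]
which is equivalent to $\sum_{j<k} pn_s(k,j)\,[steps_u(k) - steps_u(j)] \geq 1$. Writing the analogous identity from the recurrence for $steps_u$ gives $\sum_{j<k} r(k,k-j)\,p\,[steps_u(k) - steps_u(j)] = 1$, so it would suffice to establish the termwise inequality
\[
\sum_{j<k} \bigl(pn_s(k,j) - r(k,k-j)\,p\bigr)\,\bigl(steps_u(k) - steps_u(j)\bigr) \;\geq\; 0.
\]
By the normal-neighbourhood assumption together with $p(j) \geq p$, the first factor is non-negative for every $j<k$, since $pn_s(k,j) \geq r(k,k-j)\,p(j) \geq r(k,k-j)\,p$. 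Hence the whole sum is non-negative provided every factor $steps_u(k) - steps_u(j) \geq 0$, i.e.\ provided $steps_u$ is monotone non-decreasing in $k$ up to level $k$.

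The main obstacle is therefore a preliminary monotonicity lemma: under full NSF from level $k$ and $\bar r(\cdot) \geq 1$, the function $steps_u$ is non-decreasing in its argument. I would prove this by a separate induction on $k$, comparing $steps_u(k+1)$ and $steps_u(k)$ via the recurrence. Full NSF supplies $r(k+1,\delta) \leq r(k,\delta)$ for every $\delta \leq k$, which controls both how the jump distribution at level $k+1$ relates to that at level $k$ and how much mass sits on the high-$\delta$ (deeply improving) tail. The subtlety is that full NSF alone does not immediately make the single-step waiting $imp_u(k+1)$ at least $imp_u(k)$; the argument has to balance waiting times against the expected $steps_u$-value at the destination, using the inductive monotonicity already available for smaller arguments to push the comparison through. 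Once this monotonicity is in place, the inductive step displayed above delivers $steps_s(k) \leq steps_u(k)$, completing the lemma.
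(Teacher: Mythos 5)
Your plan matches the paper's proof in all essentials: induction on $k$, the use of normality together with $p(j)\ge p$ to make the cross terms non-negative, and the monotonicity of $steps_u$ as the key auxiliary fact, which the paper likewise isolates and proves as a separate lemma (Lemma \ref{lemma:monotonesteps}). The only cosmetic difference is that the paper runs the inductive step as a proof by contradiction with explicit $np_u(k)/np_s(k)$ bookkeeping, whereas your direct termwise inequality $\sum_{j<k}\bigl(pn(k,j)-r(k,k-j)\,p\bigr)\bigl(steps_u(k)-steps_u(j)\bigr)\ge 0$ is the same content stated more transparently.
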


\begin{proof}
The proof is by induction on $k$.
For the base case, $k=1$, and
$$steps_s(k) = 1/pn(1,0) \leq 1/(r(1,1) \times p(0)) = steps_u(k)$$

For the inductive step, we assume $\forall i \leq k : steps_s(i) \leq steps_u(i)$\\
We also use lemma \ref{lemma:monotonesteps} whose conclusion is
$\forall k_1 \leq k_2 \leq k_{ge} : steps_u(k_1) \leq steps_u(k_2)$

For readability we write:\\
$np$ is the probability of \emph{not} improving:\\
$np_u(k) = (1- pbr^<(k)) = 1 - \sum_{\delta=1}^k r(k,\delta) \times p$.\\
$np_s(k) = (1- pn^<(k)) = 1 - \sum_{\delta=1}^k pn(k,k-\delta)$\\
Note that:
\begin{flalign}
\label{eq:npminusns}
\notag np_u(k)-np_s(k) & = \sum_{\delta=1}^k pn(k,k-\delta) - (r(k,\delta) \times p) &\\
& \geq \sum_{\delta=1}^k pn(k,k-\delta) - (r(k,\delta) \times p(k-\delta))&
\end{flalign}

Suppose, for contradiction, that $steps_s(k) > steps_u(k)$.\\
Then
\begin{flalign*}
    steps_s(k) 
    =& 1 + np_s(k) \times steps_s(k) + \sum_{\delta=1}^k pn(k,k-\delta) \times steps_s(k-\delta)&\\
    \leq& 1 + np_s(k) \times steps_s(k) + \sum_{\delta=1}^k pn(k,k-\delta) \times steps_u(k-\delta)&\\
{\rm (because}& \ \rm{by \ inductive \ assumption \ } steps_u(k-\delta) > steps_s(k-\delta) \ )&\\
    =& 1 + np_s(k) \times steps_s(k)&\\
    &+ \sum_{\delta=1}^k (pn(k,k-\delta)-r(k,\delta) \times p(k-\delta)) \times steps_u(k-\delta)&\\
    &+ \sum_{\delta=1}^k r(k,\delta) \times p(k-\delta) \times steps_u(k-\delta)&\\
    \leq& 1 + np_s(k) \times steps_s(k) &\\
    &+ \sum_{\delta=1}^k (pn(k,k-\delta)-r(k,\delta) \times p(k-\delta)) \times steps_u(k) &\\
    &+ \sum_{\delta=1}^k r(k,\delta) \times p(k-\delta) \times steps_u(k-\delta)&\\
    {\rm ( because \ by}& \ \rm{lemma \ \ref{lemma:monotonesteps} \ } \forall \delta > 0 : steps_u(k) > steps_u(k-\delta) \ {\rm and}&\\
    {\rm by \ balanced}& \ \rm{or \ boosting \ } pn(k,k-\delta)-r(k,\delta) \times p(k-\delta) \geq 0 )&
\end{flalign*}
Therefore
\begin{flalign*}
    steps_s(k) \leq & 1 + np_s(k) \times steps_s(k) &\\
    &+ (np_u(k)-np_s(k)) \times steps_u(k) &\\
    &+ \sum_{\delta=1}^k r(k,\delta) \times p(k-\delta) \times steps_u(k-\delta) &\\
    &({\rm by \ equation \ \ref{eq:npminusns}}) &\\
    = & 1 + np_s(k) \times steps_s(k) - np_s(k) \times steps_u(k)&\\
    &+ np_u(k) \times steps_u(k) &\\
    &+ \sum_{\delta=1}^k r(k,\delta) \times p(k-\delta) \times steps_u(k-\delta) &\\
    \leq & 1 + np_u(k) \times steps_u(k) + \sum_{\delta=1}^k r(k,\delta) \times p(k-\delta) \times steps_u(k-\delta)&\\
    & {\rm (using \ assumption \ } steps_s(k) > steps_u(k) \ )&\\
    = & steps_u(k)&\\
\end{flalign*}
Since $steps_s(k) > steps_u(k) \rightarrow steps_s(k) \leq steps_u(k)$ it follow by contradiction that $steps_s(k) \leq steps_u(k)$
\end{proof}

\begin{lemma}[Monotonicity of steps]
\label{lemma:monotonesteps}
 Assuming \\
$\forall \delta_1 \leq k_1, \delta_1 \leq \delta_2, \delta_2 \leq k_2, k_2 \leq k :  r(k_1,\delta_1) \geq r(k_2,\delta_2)$ \ (full NSF)\\
then
\begin{flalign*}
\ \ \ \forall k_1 \leq k_2 \leq k &: steps_u(k_1) \leq steps_u(k_2)&
\end{flalign*}
\end{lemma}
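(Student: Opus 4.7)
The plan is to reduce to the consecutive case $steps_u(k) \geq steps_u(k-1)$, which by transitivity implies the general claim, and then to prove the consecutive case by strong induction on $k$. Writing $T := steps_u$ for brevity, I first rewrite the recursion in a more useful form: multiplying through by $pbr^<(K)$ and rearranging gives the equivalent identity
\[
p \sum_{\delta=1}^K r(K,\delta)\,\bigl[T(K)-T(K-\delta)\bigr] \;=\; 1, \qquad K=1,\dots,k.
\]
Introducing the first differences $\Delta_j := T(j)-T(j-1)$, the inner bracket telescopes as $T(K)-T(K-\delta) = \sum_{j=K-\delta+1}^K \Delta_j$, and interchanging the order of summation rewrites the identity as
\[
\sum_{j=1}^K \Delta_j\,A^{(K)}_j \;=\; \tfrac{1}{p}, \qquad A^{(K)}_j \;:=\; \sum_{\delta=K-j+1}^K r(K,\delta).
\]

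Writing this identity once at $K=k$ and once at $K=k-1$ and subtracting isolates the new difference:
\[
\Delta_k\,A^{(k)}_k \;=\; \sum_{j=1}^{k-1} \Delta_j\,\bigl[A^{(k-1)}_j - A^{(k)}_j\bigr].
\]
Since $A^{(k)}_k = pbr^<(k)/p > 0$, it remains to show the right-hand side is nonnegative. The inductive hypothesis provides $\Delta_j \geq 0$ for $j<k$, with the base case $\Delta_1 = T(1) = 1/pbr^<(1) \geq 0$ immediate; so I only need to verify each bracket is nonnegative.

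For the bracket, $A^{(k-1)}_j$ and $A^{(k)}_j$ are sums of exactly $j$ consecutive NSF weights, and pairing their $\ell$-th entries yields the term-by-term comparison $r(k-1,\,k-j+\ell) \geq r(k,\,k-j+\ell+1)$ for $\ell = 0,\dots,j-1$. This is a direct instance of full NSF: the three index conditions $\delta_1 \leq k_1$, $\delta_1 \leq \delta_2$, and $\delta_2 \leq k_2$ all reduce to $\ell \leq j-1$, which holds by construction, and $k_2 = k$ is within the hypothesised level. Hence $A^{(k-1)}_j \geq A^{(k)}_j$ termwise, the coefficients are nonnegative, and $\Delta_k \geq 0$, closing the induction.

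The only real subtlety is the index bookkeeping in this final term-by-term comparison, where both ``arms'' of full NSF are used at once to shift $(k-1,\delta) \mapsto (k,\delta+1)$; the monotonicity of $r(k,\cdot)$ in $\delta$ alone is not enough, and the $k_1 \leq k_2$ arm is essential. A cleaner but less elementary alternative would be a monotone coupling: interpret $T(k)$ as the hitting time of $0$ for the Markov chain whose transition CDFs $F_K(j) = \mathrm{Pr}[\text{next state} \leq j \mid \text{current}=K]$ satisfy $F_{k-1} \geq F_k$ pointwise (by the same full NSF estimate together with the NSF decrease), and quantile-couple two chains started at $k-1$ and $k$ against a common uniform to obtain a pathwise order $X_t^{(k-1)} \leq X_t^{(k)}$. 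Either route delivers the result, but the algebraic one above fits the paper's inductive style.
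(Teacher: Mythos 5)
Your proof is correct, and it rests on the same essential ingredient as the paper's --- the diagonal shift inequality $r(k-1,\delta-1)\geq r(k,\delta)$ supplied by combining the two arms of full NSF --- but it is organised quite differently. The paper proves the consecutive case $steps_u(k)\geq steps_u(k-1)$ by contradiction: it assumes the reverse inequality, substitutes it into the recursion for $steps_u(k)$, writes $r(k-1,\delta-1)=r(k,\delta)+\epsilon_\delta$ with $\epsilon_\delta\geq 0$, invokes the inductive hypothesis that $steps_u(k-1)$ dominates \emph{all} earlier values, and manipulates until $steps_u(k)\geq steps_u(k-1)$ drops out. Your route instead recasts the recursion as the identity $\sum_{j=1}^{K}\Delta_j A^{(K)}_j=1/p$ in the first differences $\Delta_j$ (a summation-by-parts reformulation), subtracts consecutive instances, and exhibits $\Delta_k A^{(k)}_k$ as a nonnegative combination of the earlier $\Delta_j$; your termwise bracket comparison $A^{(k-1)}_j\geq A^{(k)}_j$ is exactly the paper's $\epsilon_\delta\geq 0$ in aggregated form, and your index checks are right. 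What your version buys is a direct argument with no contradiction, a weaker inductive hypothesis (only $\Delta_j\geq 0$ for $j<k$, rather than domination of all earlier values by $steps_u(k-1)$), and an explicit formula that makes transparent exactly where each arm of full NSF is consumed --- you are also correct that within-level monotonicity of $r(k,\cdot)$ alone would not suffice. The only implicit assumption, shared with the paper's proof, is that $pbr^<(K)>0$ (equivalently $A^{(K)}_K>0$) at every level $1\leq K\leq k$, which is needed in any case for $steps_u$ to be well defined.
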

The proof is by induction on $k$.
The base case $steps_u(1) \geq steps_u(0)$ is immediate because $steps_u(1)$ is non-negative and $steps_u(0)=0$.

For the inductive case we can assume that
\begin{flalign}
\forall i<k-1 : steps_u(k-1) \geq steps_u(i)
\label{eq:indass}
\end{flalign}
\vspace{0.5cm}


\noindent Since $\forall k, \delta: r(k,\delta) \leq r(k-1,\delta-1)$ we can write $r(k-1,\delta-1) = \epsilon_{\delta} + r(k,\delta) $, where $\epsilon_{\delta} \geq 0$.
Consequently
\begin{equation}
\sum_{\delta=2}^k r(k,\delta) \times p ) = \sum_{\delta=1}^{k-1}(r(k-1,\delta-1) - \epsilon_{\delta}) \times p)
\label{eq:skminus1}  
\end{equation}
The following equation is also used in the proof

\begin{equation}
steps_u(k) \times pbr^<(k) = 1 + \sum_{\delta=1}^{k-1} r(k,\delta) \times p \times steps_u(k-\delta)
\label{eq:stkminus1}  
\end{equation}

\noindent Assume, for contradiction, that:
\begin{flalign}
steps_u(k) & < steps_u(k-1)&
\label{eq:asssteps2}
\end{flalign}

\begin{proof} $$steps_u(k) \geq steps_u(k-1)$$

\begin{flalign*}
steps_u(k) = & 1 + (1-\sum_{\delta=1}^k r(k,\delta) \times p) \times steps_u(k)&\\
            & + r(k,1) \times p \times steps_u(k-1) + \sum_{\delta=2}^k r(k,\delta) \times p \times steps_u(k-\delta)&\\
(\rm{by} \ \ref{eq:asssteps2}) \ \  \geq & 1 + (1- \sum_{\delta=2}^k r(k,\delta) \times p) \times steps_u(k) +  \sum_{\delta=2}^k r(k,\delta) \times p \times steps_u(k-\delta)&\\
\therefore 
steps_u(k) \times & \sum_{\delta=2}^k r(k,\delta) \times p \geq
        1 + \sum_{\delta=2}^{k} r(k, \delta) \times p  \times steps_u(k-\delta)& \\
\therefore 
steps_u(k) \times & \sum_{\delta=1}^{k-1}(r(k-1,\delta-1) - \epsilon_{\delta}) \times p&\\
\geq & 1+
  \sum_{\delta=1}^{k-1}(r(k-1,\delta-1) - \epsilon_{\delta}) \times p \times steps_u((k-1)-\delta)&\\
(\rm{by} \ \ref{eq:indass}) \ \
\geq & 1 + \sum_{\delta=1}^{k-1} r(k-1,\delta-1) \times p  \times steps_u((k-1)-\delta) - \sum_{\delta=1}^{k-1}\epsilon_{\delta} \times p \times steps_u(k-1)&\\
(\rm{by} \ \ref{eq:stkminus1}) \ \ 
\geq & steps_u(k-1) \times pbr^<(k-1) - \sum_{\delta=1}^{k-1}\epsilon_{\delta} \times p \times steps_u(k-1)&\\
 = & steps_u(k-1) \times (pbr^<(k-1) - \sum_{\delta=1}^{k-1}\epsilon_{\delta} \times p)&\\
\therefore 
steps_u(k) \times & (pbr^<(k-1) - \sum_{\delta=1}^{k-1}\epsilon_{\delta} \times p) \geq steps_u(k-1) \times (pbr^<(k-1) - \sum_{\delta=1}^{k-1}\epsilon_{\delta} \times p)&\\
\end{flalign*}
Which implies that $steps_u(k) \geq steps_u(k-1)$.

The assumption $steps_u(k)<steps_u(k-1)$ implies $steps_u(k)\geq steps_u(k-1)$, and we conclude by contradiction that $steps_u(k) \geq steps_u(k-1)$
\end{proof}

\newpage
\section{Proof that preceding local descent by a blind search cannot make local descent worse than blind search}
\label{sec:blsearch}

\begin{theorem}
\label{prf:blindthenlocal}
For any $k$ where \\
$\bar{r}(k) \geq 1$\\
$\forall \delta_1<\delta_2<k_2 \leq k : r(k_2,\delta_1) \geq r(k_2,\delta_2)$ \ (full NSF)\\
$\forall \delta \leq k_2 < k_1 \leq k : r(k_1,\delta) \leq r(k_2,\delta)$ (full NSF)\\
then\\
$$blsteps(k) \leq blind$$
\end{theorem}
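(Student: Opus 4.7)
My first move is to eliminate the self-reference in the recursive definition of $blsteps(k)$. Write $P = \sum_{i=0}^{k-1} p(i)$ for the probability that one blind draw already lands at or below the starting threshold $k$, and $Q = 1-P$. Gathering the $blsteps(k)$ terms on the left and simplifying the remaining constants yields the closed form
\begin{equation*}
blsteps(k) \;=\; \frac{1 + \sum_{i=1}^{k-1} p(i)\,steps(i)}{P}.
\end{equation*}
Substituting $blind = 1/p(0)$, the desired inequality $blsteps(k) \leq blind$ is then equivalent, after multiplying through by $P \cdot p(0)$ and cancelling $p(0)$ against the leading term of $P$, to
\begin{equation*}
\sum_{i=1}^{k-1} p(i)\,\bigl(steps(i) - blind\bigr) \;\leq\; 0.
\end{equation*}

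Hence it is enough to prove $steps(i) \leq blind$ for every $i$ with $1 \leq i \leq k-1$. This is exactly the conclusion of theorem~\ref{thm:localdescent} applied at level $i$, so the remaining task is to verify that each hypothesis of that theorem is inherited from the hypotheses stated for level $k$. Full NSF from level $i$ is an immediate syntactic restriction of full NSF from level $k$, and any implicit probability condition (such as $p(k_2) \geq p(0)$ used in the body version of theorem~\ref{thm:localdescent}) likewise restricts downward. The nontrivial inheritance is $\bar{r}(i) \geq 1$, which I would establish in two steps: by full NSF, $r(i,\delta) \geq r(k,\delta)$ for every $\delta \leq i$, so $\bar{r}(i) \geq \tfrac{1}{i}\sum_{\delta=1}^{i} r(k,\delta)$; and by NSF$(k)$ the sequence $r(k,\delta)$ is nonincreasing in $\delta$, so any prefix average dominates the full average, giving $\tfrac{1}{i}\sum_{\delta=1}^{i} r(k,\delta) \geq \tfrac{1}{k}\sum_{\delta=1}^{k} r(k,\delta) = \bar{r}(k) \geq 1$.

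Combining these three pieces finishes the argument: every summand $p(i)(steps(i)-blind)$ is nonpositive, hence so is the sum, hence $blsteps(k) \leq blind$. The main obstacle I anticipate is purely bookkeeping — making sure the two flavours of monotonicity in the full NSF definition (monotone in $\delta$ for fixed $k$, and monotone in $k$ for fixed $\delta$) are both invoked cleanly in the $\bar{r}(i) \geq 1$ step, since neither one alone suffices. A secondary subtlety is that if the appendix statement is read strictly, no probability-shape assumption is listed; I would therefore state explicitly that I import the same $p(k_2) \geq p(0)$ hypothesis used by theorem~\ref{thm:localdescent}, or remark that the result generalises by the same relabelling trick (``assign cost $0$ to all points with fitness at most $k'$'') already used immediately after that theorem.
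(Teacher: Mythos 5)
Your proposal is correct and follows essentially the same route as the paper's proof: eliminate the self-reference to obtain $blsteps(k)\cdot\sum_{i=0}^{k-1}p(i)=1+\sum_{i=1}^{k-1}p(i)\,steps(i)$, note the matching identity $blind\cdot\sum_{i=0}^{k-1}p(i)=1+\sum_{i=1}^{k-1}p(i)\cdot blind$, and reduce everything to $steps(i)\leq blind$ via theorem~\ref{thm:localdescent}. Your explicit verification that the hypotheses of that theorem are inherited at every level $i<k$ --- in particular deriving $\bar{r}(i)\geq 1$ from the two monotonicity clauses of full NSF combined with the fact that prefix averages of a nonincreasing sequence dominate the full average --- is a step the paper's proof silently omits, and your remark about importing the $p(k_2)\geq p(0)$ hypothesis correctly flags a condition missing from the appendix statement of the theorem.
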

\begin{proof}

\begin{flalign*}
     blsteps(k) & =  p(0)  + \sum_{i=1}^k p(i) \times (1+steps(i))
     + \sum_{i>k} p(i) \times (1 + blsteps(k)) &\\
     & = 1 + \sum_{i=1}^k p(i) \times steps(i) + \sum_{i>k} p(i) \times blsteps(k)&\\
    & = 1 + \sum_{i=1}^k p(i) \times steps(i) + (1- \sum_{i=0}^k p(i)) \times blsteps(k)&\\
     \therefore \  blsteps(k) & \times \sum_{i=0}^k p(i) 
     = 1 + \sum_{i=1}^k p(i) \times steps(i) &\\
     & \ \ \ \ \ \ \ \ \leq 1 + \sum_{i=1}^k p(i) \times blind &\\
\end{flalign*}
since $steps(i) \leq blind$.

The expected number of steps required for blind search is:
\begin{flalign*}
     blind &= p(0) +  (1-p(0)) \times (1+ blind) & \\
          &= 1 + \sum_{i=1}^k p(i) \times blind 
               + \sum_{i>k} p(i)) \times blind & \\
         &= 1 + \sum_{i=1}^k p(i) \times blind 
            + (1- \sum_{i=0}^k p(i)) \times blind &\\
\therefore 
 blind & \times \sum_{i=0}^k p(i) = 1 + \sum_{i=1}^k p(i) \times blind&  \\
\end{flalign*}
Therefore $blsteps(k) \leq blind$.
\end{proof}

\section{Relaxing any requirement can make neighbourhood search worse than blind search}
\label{sec:counterex}
\subsection{Requirements}
The properties that support a proof that neighbourhood is beneficial are as follows (see theorem \ref{thm:improvement}):
\begin{requirement}
\label{req:1}
\begin{equation}
\forall 0 \leq k1 < k2 \leq k_{mod} : p(k1) \leq p(k2)
\end{equation}
\end{requirement}
The proof establishes that neighbourhood search and local descent are beneficial from cost of $k_{ge}$ or better.

\begin{requirement}
\label{req:rkidecr}
\emph{NSF(k)}\\
The $NSF(k)$ property says that the NSF weight $r(k,\delta)$ increases as $\delta$ decreases: 
\begin{equation}
\forall k \leq k_{ge} : \delta_1 \leq \delta_2 \rightarrow r(k,\delta_1) \geq r(k,\delta_2)
\end{equation}
\end{requirement}
This simply says that neighbours tend to have similar cost - and the closer the cost the more likely a neighbour has it.

The next property is that neighbourhoods are not ``biased'' towards worse cost values.
\begin{requirement}
\label{req:3}
\emph{Normal}\\
The proportion of improving neighbours at a given cost distance $\delta$ is likely to be at least as great as the proportion in the search space as a whole, i.e.
\begin{equation}
\forall k,\delta {\rm \ where \ } p(k,k \pm \delta) > 0 : \frac{pn(k,k-\delta)}{pn(k,k \pm \delta)} \geq \frac{p(k-\delta)}{p(k \pm \delta)} 
\end{equation}
\end{requirement}
Clearly if most neighbours of any point have better cost, neighbourhood search is bound to be beneficial - but if it is computationally practical to construct such neighbourhoods, the same method can be used to find an optimal solution.

\begin{requirement}
\label{req:4}
\emph{Improvement condition}\\
$\forall k \leq k_{ge} : pn(k,k) \leq p(k)$\\
\end{requirement}

In the following subsections, for each requirement we present a problem class and neighbourhood which satisfies the other requirements but where neighbourhood search fails to be beneficial.  We thus show that all three requirements are needed.
Note that, given a starting cost of $k$,
\begin{itemize}
    \item the probability of finding an improving cost with blind search is\\ $p^<(k) = \sum_{i=0}^{k-1} p(i)$
    \item the probability of finding an improving cost with neighbourhood search is
    $pn^<(k) = \sum_{i=0}^{k-1} pn(k,i)$
\end{itemize}

\subsection{A problem where {\em p(i)} is not monotonically decreasing below the starting cost {\em k=25}}
In this problem $p(0)$ is large, but at all other cost levels uniformly small.
cost level $k>0$ is a neighbour of the two  cost levels $k+1$ and $k-1$ only.
This satisfies the boosting property \ref{req:3}.
The NSF property is strong - all neighbours have a fitness difference of $1$.\\
$k_{max}=100 ; k_{opt}=0 ;  \forall i > 0 : p(i)=1/100$\\
$r(25,1)= 100$ \\
$\forall \delta \neq 1 : r(25,\delta) = 0$

These neighbourhood weights satisfy both requirements $NSF(k)$ (\ref{req:rkidecr}) and $pn(k,k) \leq p(k)$
(\ref{req:4}).    

From cost $25$, the probability of finding an improving cost with blind search is $24+100/200 = 0.62$.
The probability of finding an improving neighbour, of cost $25$, with neighbourhood search is $0.5$.
Thus neighbourhood search is not beneficial.

\subsection{Neighbourhood weights which are not NSF}
In this problem the cost probability is uniform at all cost levels, so $k_{mod}=k_{max}$.
Each cost level only has neighbours at fitness difference $26$.
$pn(25,25)=0$ satisfying requirement $pn(k,k) \leq p(k)$ (\ref{req:4}).
For all $i \leq 25 : pn(25,25+i) = pn(25,25-i) = 0$ so the neighbourhood is normal(\ref{req:3}).
The details of the problem and NSF weights are as follows:\\
$k_{max}=100 ; k_{opt}=0 ; \forall i : p(i)=1/101$\\
$k_{ge}=50 ; r(25,26)= 101$\\
$\forall \delta \neq 26 : r(25,\delta) = 0$

From cost $25$, the probability of finding an improving cost with blind search is $25/101 = 0.248$.
The probability of finding an improving neighbour with neighbourhood search is $0$.
Thus neighbourhood search is not beneficial.

\subsection{Neighbourhoods that are not normal}
In this problem the cost probability is uniform at all cost levels, so $k_{mod}=k_{max}$.

cost $k$ is a neighbour of only one other cost: $k+1$.
These neighbourhood weights satisfy both requirements $NSF(k)$ (\ref{req:rkidecr}) and $pn(k,k)\leq p(k)$ (\ref{req:4}).
Neighbourhoods are not symmetric, so if $s1$ is a neighbour of $s2$ we cannot infer that $s2$ is a neighbour of $s1$.

The details of the problem and NSF weights are as follows:\\
$k_{max}=100 ; k_{opt}=0 ; \forall i : p(i)=1/101$\\
$k_{ge}=50 ; r(25,1)= 101; pn(25,26)=1$ \\
$\forall \delta \neq 1 : r(25,\delta) = 0$\\

From cost $25$, the probability of finding an improving cost with blind search is $25/101 = 0.248$.
The probability of finding an improving neighbour with neighbourhood search is $0$.
Thus neighbourhood search is not beneficial.

\subsection{Neighbourhood weights without the improvement condition}
In this problem, also, the cost probability is uniform at all cost levels,
so $k_{mod}=k_{max}$
In this case all neighbours have the same cost level,
which satisfies requirement $NSF(k)$ for every cost $k$ (\ref{req:rkidecr}), and normal (\ref{req:3}).
The details of the problem and NSF weights are as follows:\\
$k_{max}=100 ; k_{opt}=0 ; \forall i : p(i)=1/101$\\
$k_{ge}=50 ; r(25,0)= 101$\\
$\forall \delta>0 : r(25,\delta) = 0$

From cost $25$, the probability of finding an improving cost with blind search is $25/101 = 0.248$.
The probability of finding an improving neighbour with neighbourhood search is $0$.
Thus neighbourhood search is not beneficial.






\section{Calculating the problem class 2-SAT}
\label{app:2SAT}
The 2-SAT problem class is defined as follows:
\begin{itemize}
\item 100 clauses each with two variables
\item 50 distinct variables
\item Each variable occurs in 4 clauses
\item Minimise violated clauses
\end{itemize}
\noindent A point has cost $k$ if 
$k$ clauses are unsatisfied.\\
We find a neighbour of a point with cost $k$ by flipping a variable.\\
The (four) clauses in which the flipped variable occur
are unsatisfied with probability $k/100$, and
satisfied with probability $(100-k)/100$.\\
Flipping a variable in an unsatisfied clause always makes it
satisfied.\\
Flipping a variable in a satisfied 2-clause makes it unsatisfied
with probability $1/3$.

The cost change after flipping is $0$ iff
\begin{itemize}
\item  all four clauses in which the flipped variable appears are satisfied,
and each time the clause stays satisfied with probability $2/3$\\
or 
\item one of the clauses is satisfied and flipping the variable makes the
clause false; one of the clauses is unsatisfied and flipping the
variable makes it true (necessarily); and two of the
clauses are satisfied but flipping the variable does not make it
false\\ 
(there are 12 ways of this occurring in four clauses)\\
or 
\item two of the clauses are unsatisfied (and necessarily become
true); and two are satisfied and flipping the variable makes
both false\\ 
(there are 6 ways of this occurring in four clauses)
\end{itemize}
Thus, writing $p_k = k/100$, $npf_{k} = (1-p_k)* 1/3$ and $npt_{k} = (1-p_k) \times 2/3$:
\begin{flalign*}
pn(k,k) = \ & npt_{k}^4 &\\
&+ 12 \times npf_{k} \times p_k \times npt_{k}^2 &\\
&+ 6 \times p_k^2 \times npf_{k}^2
\end{flalign*}
The number of unsatisfied clauses decreases by one if:
\begin{itemize}
    \item one clause becomes true, and the others are unchanged\\
    (there are 4 ways of this occurring)\\
    or
    \item two clauses become true, one becomes false and one is unchanged\\
    (there are 12 ways of this occurring)
\end{itemize}
Thus:
\begin{flalign*}
pn(k,k-1) = \ & 4 \times p_k \times npt_k^3 &\\
& + 12 \times p_k^2 \times npf_k \times npt_k 
\end{flalign*}
The number increases by one if:
\begin{itemize}
    \item one clause becomes false, and the others are unchanged\\
    (there are 4 ways of this occurring)\\
    or
    \item two clauses become false, one becomes true and one is unchanged\\
    (there are 12 ways of this occurring)\\
\end{itemize}
Thus:
\begin{flalign*}
pn(k,k+1) = \ & 4 \times npf_k \times npt_k^3 &\\
& + 12 \times npf_k^2 \times p_k \times npt_k 
\end{flalign*}
These are the two different ways of improving the cost by 2:\\
\begin{flalign*}
pn(k,k-2) = \ & 6 \times p_k^2 \times npt_k^2 &\\
& 4 \times p_k^3 \times npf_k 
\end{flalign*}
and two ways of worsen the cost by 2:
\begin{flalign*}
pn(k,k+2) = \ & 6 \times npf_k^2 \times npt_k^2 &\\
& 4 \times npf_k^3 \times p_k 
\end{flalign*}
There is only one way to improve cost by 3:\\
$pn(k,k-3) = 4 \times p_k^3 \times npt_k$\\
and one way to worsen the cost by 3:\\
$pn(k,k+3) = 4 \times npf_k^3 \times p_k$\\
There is one way to improve the cost by 4:\\
$pn(k,k-4) = p_k^4$\\
and one way to make it worse by 4:\\
$pn(k,k+4) = npf_k^4$\\
For $\delta>4$:\\
$pn(k,k-\delta) = pn(k,k+\delta) = 0$

\section{ECLiPSe program for computing steps}
 \label{sec:eclipseprog}
 The program is invoked for the example in the text as follows:
 \begin{verbatim}
 ?- Steps is goal(9, 
                  [1, 5, 25, 125, 625, 3125, 15625, 78125, 390625], 
                  500000, 
                  [4, 3.5, 3, 0.5]).
Steps = 128801.72941244145__128801.72941244178
Yes (0.00s cpu)    
 \end{verbatim}
 
 ECLiPSe is free and downloadable at \verb0www.https://www.eclipseclp.org/0
 
 The program implementing $goal$ is:
 
 \begin{Verbatim}[fontsize=\small]
?- lib(ic).

/* Size is the number of fitness levels between the optimum and modal fitness
Sols is a list of counts of solutions at each of these fitness levels
Total is the total number of solutions in the search space
D is a variable for the list of values of r(_,i): i >=1
Steps is the expected number of steps to reach the optimum fitness

Example:
goal(9,[1,5,25,125,625,3125,15625,78125,390625],600000,[2,1.5,1.2,1],Steps).
*/


goal(Size,Sols,Total,NSFWeights,Steps) :-
    init(Size,Sols,Total,AS),
    % K is the current fitness
    K is Size div 2,
    % Construct and constrain the array AD of increased probabilities
    % of neighbours at this distance
    cons_ad(NSFWeights,K,Total,AS,AD),
    % Construct and constrain the array of probabilities of an
    % improving neighbour
    cons_aimp(K,Total,AS,AD,Aimp),
    % Constrain the number of steps to reach the optimum
    cons_steps(K,Total,AS,AD,Aimp,Steps),
    
    % Constrain this number to be worse than random search
    % When uncommented, even if NSFWeights is unknown (a variable)
    % the program reports that all the NSFWeights must be close to 1
%   AS[1] * Steps $>= Total,
    
    % Apply interval shaving to prune infeasible ranges for the variables
    squash([Steps|NSFWeights],0.00001,lin).

% Construct an array StepsArray[K] is the number o steps to reach the
% optimum from fitness ditance K
cons_steps(K,Total,AS,AD,Aimp,Steps) :-
    functor(StepsArray,s,K),
    Steps $= StepsArray[K],
    % The expected number of steps from Opt-1
    StepsArray[1]*Aimp[1] $= 1.0,
    % Impose the function computing the number of steps as a constraint
    (for(I,2,K), param(AS,Total,AD,Aimp,StepsArray) do 
         (StepsArray[I]*Aimp[I]*Total $= 
          Total + josum(J,1,I-1,AD[J]*StepsArray[I-J]*AS[I+1-J]))
    ).

% Constrain the possible values for the array of D_i
cons_ad(D,K,Total,AS,AD) :-
    length(D,K),
    D $:: 0.01..100.0,
    AD =.. [ip|D],
    % NSF constraint
    (for(I,1,K-1), param(AD) do AD[I+1] $=< AD[I]),
    % Constraint that the sum of all the probabilities across the
    % whole solution space is 1
    Total-sum(AS) $= LHS,
    rhs(Total,K,AD,AS,RHS),
    (var(RHS) -> true ; check(RHS >= 0,total_too_low(Total))),
    RHS $>=0,
    % Give all the distant points (LHS) the same probability PP
    PP * LHS $= RHS,
    (LHS =:= 0 -> PP is 0.0 ; true) ,
    % Ensures the probabilities of distant fitnesses is worse than the
    % lowest r-value given in the input list AD
    test_total(PP,AD,K).

test_total(PP,AD,K) :-
    RK is AD[K], 
    (var(RK) -> true ; check(PP=<RK,last_NSFWeight_too_low(RK))),
    PP $=< AD[K].

% Add all the ct(I)*pn(Mode, +/- I) and make sure this is less than
% the total number of points in the search space
rhs(Total,K,AD,AS,RHS) :-
    RHS $= (Total - 
            (josum(I,1,K,AD[I]*AS[K+1-I]) +
             josum(I,1,K,AD[I]*AS[K+1+I]) +
             % r(k,0) = 1
             1*AS[K+1])
           ).        

% Constraints on probability of improving
cons_aimp(K,Total,AS,AD,Aimp) :-
    functor(Aimp,a,K),
    % The function defining the probability of improving as a constraint
    (for(I,1,K), param(AS,Total,AD,Aimp) do
        Aimp[I]*Total $= josum(J,1,I,AD[J]*AS[I+1-J])
    ).

init(Size,Sols,Total,AS) :-
    check(nonvar(Size),error_var(Size)),
    check(odd(Size), error_even(Size)),
    Total #=< 10000000,
    check(length(Sols, Size),error_sols_length(Sols)),
    (foreach(S,Sols) do S #> 0),
    check(sum(Sols) $=< Total, total_too_low(Total)),
    AS =.. [[]|Sols].


%  Utilities

odd(Int) :- Int mod 2 =:= 1.

check(Goal,Error) :- ((not Goal) -> writeln(Error), abort ; true).

% sum(Var in LB..UB)(Expr(Var)) = Sum
josum(Var,LB,UB,Expr,Sum) :-
        Min is eval(LB),
        Max is eval(UB),
    ( for(K,Min,Max), fromto(0,This,Next,Sum), param(Var,Expr) do
        copy_term_vars(Var, Var-Expr, K-KExpr),
	Next $= This+eval(KExpr)
    ).

	 
      

 \end{Verbatim}

\end{appendix}
\end{document}